\newcommand{\argmax}{\operatornamewithlimits{argmax}}
\newcommand{\tr}{\operatorname{tr}}
\newcommand{\vct}[1]{\mathbf{#1}}
\newcommand{\vect}{\operatorname{vec}}
\newcommand{\sign}{\operatorname{sign}}
\newcommand{\grad}{\operatorname{grad}}
\newcommand{\lnormg}[2][\Lambda] {\|{#2}\|_{1,{#1}}}
\def\R{\mathbb{R}}
\def\S{\mathcal{S}}
\def\N{\mathcal{N}}
\def\Soft{\mathcal{S}}
\def\PQN{{\sc PQN}\xspace}
\def\QUIC{{\sc QUIC}\xspace}
\def\SINCO{{\sc SINCO}\xspace}
\def\PSM{{\sc PSM}\xspace}
\def\VSM{{\sc VSM}\xspace}
\def\glasso{{\sc glasso}\xspace}
\def\ALM{{\sc ALM}\xspace}
\def\ADMM{{\sc ADMM}\xspace}
\def\IPM{{\sc IPM}\xspace}
\def\bb{{\boldsymbol b}}
\def\bw{{\boldsymbol w}}
\def\bx{{\boldsymbol x}}
\newtheorem{definition}{Definition}
\newtheorem{lemma}{Lemma}
\newtheorem{proposition}{Proposition}
\newtheorem{theorem}{Theorem}
\begin{document}


\title{Sparse Inverse Covariance Matrix Estimation \\
  Using Quadratic Approximation}

\author[1]{Cho-Jui Hsieh}
\author[1]{M\'{a}ty\'{a}s A. Sustik} 
\author[1]{Inderjit S. Dhillon}
\author[1]{Pradeep Ravikumar}
\affil[1]{Department of Computer Science, University of Texas, Austin}

\renewcommand\Authands{ and }
\date{}

\maketitle

\begin{abstract}
  The $\ell_1$-regularized Gaussian maximum likelihood estimator (MLE) has been shown to have strong statistical guarantees in recovering a sparse inverse   covariance matrix, or alternatively the underlying graph structure of a   Gaussian Markov Random Field, from very limited samples. We propose a novel   algorithm for solving the resulting optimization problem which is a   regularized log-determinant program.  In contrast to recent state-of-the-art   methods that largely use first order gradient information, our algorithm is   based on Newton's method and employs a quadratic approximation, but with   some modifications that leverage the structure of the sparse Gaussian MLE   problem.  We show that our method is superlinearly convergent, and present experimental results using synthetic and real-world application data that   demonstrate the considerable improvements in performance of our method when   compared to other state-of-the-art methods.
\end{abstract}

\section{Introduction}

Statistical problems in modern data settings are increasingly high-dimensional, where the number of parameters is very large, potentially outnumbering even the number of observations. An important class of such problems involves estimating the graph structure of a Gaussian Markov random
field (GMRF), with applications ranging from biological inference in gene networks, analysis of  
fMRI brain connectivity data and analysis of interactions in social networks.
Specifically, given $n$ independently drawn samples 
$\{\vct{y_1}, \vct{y_2}, \dots, \vct{y_n}\}$ from a $p$-variate Gaussian 
distribution, so that $\vct{y_i} \sim \mathcal{N}(\boldsymbol\mu, \Sigma)$, 
the task is to estimate its inverse covariance matrix $\Sigma^{-1}$, also 
referred to as the {\em precision} or {\em concentration} matrix. The non-zero 
pattern of this inverse covariance matrix $\Sigma^{-1}$ can be shown to 
correspond to the underlying graph structure of the GMRF.  An active line of 
work in high-dimensional settings, where $p \gg n$, is based on imposing 
constraints on the model space; 
in the GMRF case a common structured constraint is that of sparsity of the 
inverse covariance matrix. 
Accordingly, recent papers 
by~\cite{Banerjee:glasso:2008,Friedman:glasso:2007,Yuan-Lin:model-selection} 
have proposed an estimator that minimizes the Gaussian negative log-likelihood 
regularized by the $\ell_1$ norm of the entries (typically restricted to those 
on the off-diagonal) of the inverse covariance matrix,  
which encourages sparsity in its entries. The resulting 
optimization problem is a log-determinant program, which is convex, and can be 
solved in polynomial time.

For such large-scale optimization problems arising from high-dimensional statistical estimation, standard optimization methods typically suffer sub-linear rates of convergence~(\cite{ANW10}). This would be too expensive for the Gaussian MLE problem, since the number of matrix entries scales quadratically with the number of nodes. Luckily, the log-determinant problem has special structure; the log-determinant function is strongly convex and one can thus obtain linear (i.e. geometric) rates of convergence via the state-of-the-art methods.  However, even linear rates in turn become infeasible when the problem size is very large, with the number of nodes in the thousands and the number of matrix entries to be estimated in the millions.  Here we ask the question: \textit{can we obtain superlinear   rates of convergence for the optimization problem underlying the $\ell_1$-regularized Gaussian MLE?}

For superlinear rates, one has to consider second-order methods which at least in part use the Hessian of the objective function. There are however some caveats to the use of such second-order methods in high-dimensional settings. First, a straightforward implementation of each second-order step would be very expensive for high-dimensional problems.  Secondly, the log-determinant function in the Gaussian MLE objective acts as a barrier function for the positive definite cone. This barrier property would be lost under quadratic approximations so there is a danger that Newton-like updates will not yield positive-definite matrices, unless one explicitly enforces such a constraint in some manner.

In this paper, we present {\sc QUIC} (QUadratic approximation of Inverse Covariance matrices), a second-order algorithm, that solves the $\ell_1$-regularized Gaussian MLE. We perform Newton steps that use iterative quadratic approximations of the Gaussian negative log-likelihood.  The computation of the Newton direction is a {\em Lasso} problem~(\cite{Mei08,GLMNET}), which we then solve using coordinate descent. A key facet of our method is that we are able to reduce the computational cost of a coordinate descent update from the naive $O(p^2)$ to $O(p)$ complexity by exploiting the structure present in the problem, and by a careful arrangement and caching of the computations. Furthermore, an Armijo-rule based step size selection rule ensures sufficient descent and positive definiteness of the intermediate iterates. Finally, we use the form of the stationary condition characterizing the optimal solution to \emph{focus} the Newton direction computation on a small subset of \emph{free} variables, but in a manner that preserves the strong convergence guarantees of second-order descent. We note that when the solution has a block-diagonal structure as described in \cite{hastie:2012,friedman:2011}, the {\em fixed}/{\em free} set selection in \QUIC can automatically identify this sparsity structure and avoid updates to the off-diagonal block elements.

A preliminary version of this paper appeared in the NIPS 2011 conference \citep{HSDR:NIPS2011_1249}. 
This paper contains the following enhancements to the conference version: 
(i) inclusion of generalized matrix regularized in the problem statement and subsequent analysis,
(ii) detailed convergence analysis of our proposed method, QUIC, 
(iii) detailed experimental comparisons under different parameters and precision/recall curves for synthetic data, and 
(iv) new material on the relationship of the {\em free} and {\em fixed} sets in QUIC with recent covariance thresholding techniques~(see Section 3.4 for more details).

The outline of the paper is as follows. We start with a review of related work 
and the problem setup in Section~\ref{sec:background}.  In 
Section~\ref{sec:quadratic-solver}, we present our algorithm that combines 
quadratic approximation, Newton's method and coordinate descent. In 
Section~\ref{sec:converge}, we show superlinear convergence of our method. 
We summarize the experimental results in 
Section~\ref{sec:experiments}, where we compare the algorithm using both real 
data and synthetic examples from~\cite{LL10a}.
We observe that our algorithm performs 
overwhelmingly better (quadratic instead of linear convergence) than 
existing solutions described in the literature.

{\sc Notation. } In this paper, boldfaced lowercase letters 
denote vectors and uppercase letters denote $p\times p$ real matrices. 
$\S^p_{++}$ denotes the space of $p\times p$ symmetric positive definite 
matrices and $X\succ 0$ and $X\succeq 0$ means that $X$ is positive definite 
or positive semidefinite, respectively.  The vectorized listing of the 
elements of a $p\times p$ matrix $X$ is denoted by $\vect(X) \in \R^{p^2}$ and 
the Kronecker product of the matrices $X$ and $Y$ is denoted by $X\otimes Y$.  
For a real-valued function $f(X)$, $\nabla f(X)$ is a $p\times p$ matrix with 
$(i,j)$ element equal to $\frac{\partial}{\partial X_{ij}} f(X)$ and denoted by $\nabla_{ij} f(X)$, while 
$\nabla^2 f(X)$ is the $p^2 \times p^2$ Hessian matrix. We will use the 
$\ell_1$ and $\ell_\infty$ norms defined on the vectorized form of matrix $X$: 
$\|X\|_1:=\sum_{i,j} |X_{ij}|$ and $\|X\|_\infty:=\max_{i,j} |X_{ij}|$. 
We also employ elementwise $\ell_1$-regularization,  
$\|X\|_{1,\Lambda}:=\sum_{i,j}\lambda_{ij}|X_{ij}|$, where 
$\Lambda=[\lambda_{ij}]$ with $\lambda_{ij}>0$ for off-diagonal elements, and 
$\lambda_{ii}\geq 0$ for diagonal elements. 

\section{Background and Related work} \label{sec:background}

Let $\vct{y}$ be a $p$-variate Gaussian random vector, with distribution $\mathcal{N}(\boldsymbol\mu, \Sigma)$. Given $n$ independently drawn samples $\{\vct{y_1}, \dots, \vct{y_n}\}$ of this random vector,  the sample covariance matrix can be written as 
\begin{eqnarray} \label{eqn:empirical-cov}
  S = \frac{1}{n-1}\sum_{k=1}^n (\vct{y_k} - \vct{\hat{\boldsymbol\mu}})(\vct{y_k} - \vct{\hat{\boldsymbol\mu}})^T,\ \mbox{where}\ {\vct{\hat{\boldsymbol\mu}}}=\frac{1}{n}\sum_{k=1}^n \vct{y_k}.
\end{eqnarray}
Given a regularization penalty $\lambda > 0$, the $\ell_1$-regularized 
Gaussian MLE for the inverse covariance matrix can be written as the solution 
of the following regularized \emph{log-determinant} program:
\begin{equation} \label{eqn:primal}
   \arg\min_{X \succ 0} \bigg\{-\log\det X + \text{tr}(SX) +
  \lambda \sum_{i,j=1}^p |X_{ij}| \bigg\}. 
\end{equation}
The $\ell_1$ regularization promotes sparsity in the inverse covariance matrix, and thus encourages a sparse graphical model structure.  We consider a generalized weighted $\ell_1$ regularization, where given a symmetric nonnegative weight matrix $\Lambda = [\lambda_{ij}]$, we can assign different nonnegative weights to different entries, obtaining the regularization term  $\|X\|_{1, \Lambda}=\sum_{i,j=1}^p \lambda_{ij}|X_{ij}|$. 
In this paper we  will focus on solving the following generalized sparse inverse covariance  estimation problem: 
\begin{equation} \label{eqn:primal_general}
  \arg\min_{X \succ 0} \bigg\{-\log\det X + \text{tr}(SX) +
  \|X\|_{1,\Lambda} \bigg\} = \arg\min_{X \succ 0} f(X), 
\end{equation}
where $X^*=(\Sigma^*)^{-1}$. 
In order to ensure that problem \eqref{eqn:primal_general} has a unique 
minimizer, as we show later, it is sufficient to require that $\lambda_{ij}>0$  for off-diagonal 
entries, and $\lambda_{ii}\ge 0$ for diagonal entries. 
The standard  off-diagonal $\ell_1$ regularization variant 
$\lambda\sum_{I \neq j}|X_{ij}|$  is a special case of this weighted 
regularization function.  For further  details on the background and utility 
of $\ell_1$ regularization in the  context of GMRFs, we refer the reader 
to~\cite{Yuan-Lin:model-selection, Banerjee:glasso:2008, Friedman:glasso:2007, 
RWRY11, JD08a}.

Due in part to its importance, there has been an active line of work on 
efficient optimization methods for solving \eqref{eqn:primal} and \eqref{eqn:primal_general}.
Since the regularization term is non-smooth and hard to 
solve, many methods aim to solve the dual problem of \eqref{eqn:primal_general}:
\begin{equation}
  \Sigma^* = \argmax_{ |W_{ij}-S_{ij}| \leq \lambda_{ij}} \log\det W,
  \label{eqn:dual}
\end{equation}
which has a smooth objective function with bounded constraints. 
\cite{Banerjee:glasso:2008} propose a block-coordinate descent method to solve 
the dual problem~\eqref{eqn:dual}, by updating one row and column of $W$ at a 
time. They show that the dual of the corresponding row subproblem can be 
written as a standard Lasso problem, which they then solve by Nesterov's first 
order method. \cite{Friedman:glasso:2007} follow the same strategy, but propose 
to use a coordinate descent method to solve the row subproblems instead; 
their method is implemented in the widely used R package called {\sc glasso}.  
In other approaches, the dual problem~\eqref{eqn:dual} is 
treated as a constrained optimization problem, for which~\cite{JD08a} apply a 
projected subgradient method called \PSM, while~\cite{ZL09a} proposes an 
accelerated gradient descent method called \VSM.

Other first-order methods have been pursued to solve the primal optimization 
problem \eqref{eqn:primal}. \cite{DAspremont:2008} apply Nesterov's first 
order method to~\eqref{eqn:primal} after smoothing the 
objective function; \cite{ALM} apply an augmented Lagrangian method to handle 
the smooth and nonsmooth parts separately; the resulting algorithm is 
implemented in the \ALM software package.  In~\cite{SINCO}, the authors propose 
to directly solve  the primal problem by a greedy coordinate descent method 
called \SINCO. However, each coordinate update of \SINCO has a time complexity 
of  
$O(p^2)$, which becomes computationally prohibitive when handling large problems. We 
will show in this paper that after forming the quadratic approximation, the 
time complexity of one coordinate update can be 
performed in $O(p)$ operations. 
This trick is one of the key advantages of our proposed method, \QUIC.

One common characteristic of the above methods is that they are first-order  
iterative methods that mainly use gradient information at each step. Such  
first-order methods have become increasingly popular in recent years for  
high-dimensional problems in part due to their ease of implementation, and  
because they require very little computation and memory at each step. The  
caveat is that they have at most linear rates of  
convergence~(\cite{Bertsekas95}). To achieve superlinear convergence rates,  
one has to consider second-order methods, which have attracted some  
attention only recently for the sparse inverse covariance estimation 
problem.  \cite{LL10a} handle the non-smoothness of the $\ell_1$ 
regularization in the  objective function by doubling the number of variables, 
and solving the  resulting constrained optimization problem by an inexact 
interior point method.  \cite{Schmidt:PQN:2009} propose a  second order 
Projected Quasi-Newton method (\PQN)  that solves the dual 
problem~\eqref{eqn:dual}, since the dual objective function is smooth.  
The key difference of our method  when compared to these recent second order 
solvers is that we directly solve  the $\ell_1$-regularized primal objective 
using a second-order method.  As we show, this allows us to  leverage structure 
in the problem, and efficiently approximate the generalized  Newton direction 
using coordinate descent. Subsequent to the preliminary  version of this  
paper (see \citep{HSDR:NIPS2011_1249}),~\cite{Newton-Quadratic:Nocedal} have 
proposed generalizations to our framework to allow various inner solvers such as 
FISTA,  conjugate gradient (CG), and LBFGS to be used, in addition to our proposed coordinate descent scheme.

\section{Quadratic Approximation Method} \label{sec:quadratic-solver}

We first note that the objective $f(X)$ in the non-differentiable optimization problem~\eqref{eqn:primal_general}, can be written as the sum of two parts, $f(X) = g(X)+h(X)$, where
\begin{equation} \label{eqn:composite}
  g(X) = -\log\det X + \tr(SX) \text{ and } h(X) = \lnormg{X}.
\end{equation}
The first component $g(X)$ is twice differentiable, and strictly convex.  The second part, $h(X)$, is convex but non-differentiable. Following the approach of \cite{PT07a} and  \cite{SY09a}, we build a quadratic approximation around any iterate $X_t$ for this composite function by first considering the second-order Taylor expansion of the smooth component $g(X)$:
\begin{equation}
  \bar{g}_{X_t} (\Delta) \equiv  g(X_t) + \vect(\nabla g(X_t))^T \vect(\Delta) + 
  \frac{1}{2}\vect(\Delta)^T \nabla^2 g(X_t) \vect(\Delta).
\end{equation}
The Newton direction $D_t$ for the entire objective $f(X)$ can then be written as the solution of the regularized quadratic program:
\begin{equation} \label{eqn:subpb}
  D_t = \arg\min_{\Delta} \big\{ \bar{g}_{X_t}(\Delta) + h(X_t + \Delta) \big\}.
\end{equation}
We use this Newton direction to compute our iterative estimates $\{X_t\}$ for 
the solution of the optimization problem~\eqref{eqn:primal_general}. This 
variant of Newton method for such composite objectives is also referred to 
as a ``proximal Newton-type method,''  and was empirically studied 
in~\cite{Marc:thesis}. \cite{PT07a} considered the more general case where the 
Hessian $\nabla^2 g(X_t)$ is replaced by any positive definite matrix. See 
also the recent paper by \cite{proximal-newton}, where convergence properties 
of such general proximal Newton-type methods are discussed. We note that a key 
caveat to applying such second-order methods in high-dimensional settings is 
that the computation of the Newton direction appears to have a large time complexity, 
which is one reason why first-order methods have been so popular for solving 
the high-dimensional $\ell_1$-regularized Gaussian MLE.

Let us delve into the Newton direction computation in ~\eqref{eqn:subpb}. Note that it can be rewritten as a standard Lasso regression problem~\citep{Lasso}:
\begin{equation} \label{eqn:lasso_subpb}
  \arg\min_{\Delta} \frac{1}{2}\|H^{\frac{1}{2}} \vect(\Delta) +
  H^{-\frac{1}{2}}\bb \|_2^2 + \lnormg{\vect(X_t+\Delta)},
\end{equation}
where $H=\nabla^2 g(X_t)$ and $\bb = \vect(\nabla g(X_t))$.  Many efficient 
optimization methods exist that solve Lasso regression problems, such as 
the coordinate descent method~\citep{Mei08}, the gradient projection 
method~\citep{BTP69a}, and iterative shrinking methods~\citep{ISTA, FISTA}.  
When applied to the Lasso problem of~\eqref{eqn:subpb}, most of these 
optimization methods would require the computation of the gradient of 
$\bar{g}_{X_t}(\Delta)$:
\begin{equation}
  \nabla \bar{g}_{X_t}(\Delta) = H \vect(\Delta)+ \bb.
\label{eqn:lasso_grad}
\end{equation}
The straightforward approach for computing~\eqref{eqn:lasso_grad} for a general $p^2\times p^2$ Hessian matrix $H$ would take $O(p^4)$ time, making it impractical for large problems. Fortunately, for the sparse inverse covariance problem~\eqref{eqn:primal_general}, the Hessian matrix $H$ has the following special form (see for instance~\cite[Chapter~A.4.3]{SB04a}):
\begin{equation*}
  H = \nabla^2 g(X_t) = X_t^{-1} \otimes X_t^{-1}.
\end{equation*}
We show how to exploit this special form of the Hessian matrix to perform one coordinate descent step that updates one element of $\Delta$ in $O(p)$ time. Hence a full sweep of coordinate descent steps over all the variables requires $O(p^3)$ time. This key observation is one of the reasons that makes our Newton-like method viable for solving the inverse covariance estimation problem.

There exist other functions which allow efficient computation of Hessian times vector. 
As an example, we consider the case of $\ell_1$-regularized logistic 
regression. Suppose we are given $n$ samples with feature vectors 
$\bx_1, \dots, \bx_n\in \R^p$ and labels $y_1, \dots, y_n$, and we solve the 
following optimization problem to compute the model parameter $\bw$:  
\begin{equation*}
  \arg\min_{\bw\in \R^p} \sum_{i=1}^n \log(1+e^{-y_i \bw^T \bx_i}) + \lambda 
  \|\bw\|_1.  
\end{equation*}
Following our earlier approach, we can decompose this objective function into smooth and non-smooth parts, $g(\bw)+h(\bw)$, where
\begin{equation*}
  g(\bw) = \sum_{i=1}^n \log(1+e^{-y_i \bw^T \bx_i}) \text{ and } h(\bw) = 
  \lambda \|\bw\|_1. 
\end{equation*}
In order to apply coordinate descent to solve the quadratic approximation, we have to compute the gradient as in~\eqref{eqn:lasso_grad}.  The Hessian matrix $\nabla^2 g(\bw)$ is a $p\times p$ matrix, so 
direct computation of this gradient costs $O(p^2)$ flops. However, the Hessian matrix for logistic regression has the following simple form
\begin{equation*}
  H = \nabla^2 g(\bw) = X D X^T, 
\end{equation*}
where $D$ is a diagonal matrix with $D_{ii}= \frac{e^{-y_i \bw^T
    \bx_i}}{(1+e^{-y_i \bw^T \bx_i})^2}$ and $X=[\bx_1, \ \bx_2, \ \dots,
  \ \bx_n]$. Therefore we can write
\begin{equation}
  \nabla \bar{g}_{X_t} (\Delta) = (\nabla^2 g(\bw)) \vect(\Delta)+ \bb = XD (X^T \vect(\Delta) )+\bb. 
  \label{eqn:hv_logistic}
\end{equation}
The time complexity to compute \eqref{eqn:hv_logistic} is only proportional to the number of nonzero elements in the data matrix $X$, which can be much smaller than $O(p^2)$ for sparse datasets. Therefore similar quadratic approximation approaches are also efficient for solving the $\ell_1$-regularized logistic regression problem as shown by~\citep{GLMNET,GXY11a}.

In the sequel, we detail three innovations which make our quadratic approximation algorithm feasible for solving large sparse inverse covariance problems.  First, 
we approximate the Newton direction computation using an efficient coordinate descent 
method that exploits the structure of Hessian matrix, so that we reduce the time cost of 
each coordinate descent update step 
from $O(p^2)$ to $O(p)$.  Second, we employ an Armijo-rule based step size selection to ensure sufficient descent \emph{and} positive-definiteness of the next iterate. Finally, we use the form of the stationary condition characterizing the optimal solution, to \emph{focus} the Newton direction computation to a small subset of \emph{free} variables, in a manner that preserves the strong convergence guarantees of second-order descent.  We outline each of these three innovations in the following three subsections. A high level overview of our method is presented in Algorithm~\ref{alg:quadratic-high-level}. 
\begin{algorithm}[htp!]
  \DontPrintSemicolon
  \caption{Quadratic Approximation method for Sparse Inverse Covariance
    Learning ({\sc QUIC} overview) \label{alg:quadratic-high-level}}
  \SetKwInOut{Input}{Input}\SetKwInOut{Output}{Output}
  \Input{Empirical covariance matrix $S$ (positive semi-definite, $p\times
    p$), regularization parameter matrix $\Lambda$, initial iterate $X_0$, inner
    stopping tolerance $\epsilon$. }
  \Output{Sequence $\{X_t\}$ that converges to $\arg\min_{X \succ 0}
    f(X)$, where $f(X) = g(X) + h(X)$, where $g(X)=-\log\det X + 
    \text{tr}(SX), h(X)=\lnormg{X}$.}
  \For{$t = 0,1,\ldots$}{
    Compute $W_t = X_t^{-1}$. \;
    Form the second order approximation
    $\bar{f}_{X_t}(\Delta):=\bar{g}_{X_t}(\Delta)+h(X_t+\Delta)$ to
    $f(X_t+\Delta)$. \;
    Partition the variables into free and fixed sets based on the
    gradient, see Section~\ref{sec:sparsity}. \;
    Use coordinate descent to find the Newton direction $D_t = \arg
    \min_{\Delta} \bar{f}_{X_t} (X_t + \Delta)$ over the set of free variables,
    see~\eqref{eqn:one-var} and~\eqref{eqn:one-update} in Section \ref{sec:subpb}. (A {\em Lasso}
    problem.) \;
    Use an {\em Armijo}-rule based step-size selection to get $\alpha$
    such that $X_{t+1} = X_t + \alpha D_t$ is positive definite and there is sufficient 
    decrease in the objective function,
    see~\eqref{eqn:armijo} in Section \ref{sec:step-size}. \;
  }
\end{algorithm}

\subsection{Computing the Newton Direction} \label{sec:subpb}

In order to compute the Newton direction, we have to solve the Lasso problem~\eqref{eqn:subpb}. From~\citep[Chapter~A.4.3]{SB04a}, the gradient and Hessian for $g(X)=-\log\det X + \tr(SX)$ are
\begin{equation}
  \nabla g(X) = S- X^{-1} \text{ and } \nabla^2 g(X) = X^{-1} \otimes X^{-1}.  
\end{equation}
In order to formulate our problem accordingly, we can verify that for a 
symmetric matrix $\Delta$ we have 
$\tr(X_t^{-1} \Delta X_t^{-1} \Delta) = \vect(\Delta)^T (X_t^{-1}\otimes X_t^{-1})\vect(\Delta)$, so that $\bar{g}_{X_t}(\Delta)$ in~\eqref{eqn:subpb} can be rewritten as
\begin{equation} \label{eqn:approx-g}
  \bar{g}_{X_t} (\Delta) = - \log\det X_t + \tr(SX_t) + \tr((S-W_t)^T \Delta ) + \frac{1}{2}\tr(W_t \Delta 
  W_t \Delta), 
\end{equation}
where $W_t = X_t^{-1}$.

In~\cite{JF07a}, \cite{TTW08a}, the authors show that coordinate descent methods are very efficient for solving Lasso type problems. An obvious way to update each element of $\Delta$ (to solve~\eqref{eqn:subpb}) requires $O(p^2)$ floating point operations since $W_t\otimes W_t$ is a $p^2 \times p^2$ matrix, thus yielding an $O(p^4)$ procedure. As we show below, our implementation reduces the cost of one variable update to $O(p)$ by exploiting the structure of the specific form of the second order term $\tr(W_t \Delta W_t \Delta)$.

For notational simplicity, we will omit the iteration index $t$ in the 
derivations below where we only discuss a single Newton iteration; this applies to 
the rest of the this section and section~\ref{sec:step-size} as well.  
(Hence, the notation for $\bar{g}_{X_t}$ is also simplified to $\bar{g}$.) 
Furthermore, we omit the use of a separate index for the coordinate descent 
updates.  Thus, we simply use $D$ to denote the current iterate approximating 
the Newton direction and use $D'$ for the updated direction. Consider the 
coordinate descent update for the variable $X_{ij}$, with $i < j$ that 
preserves symmetry: 
$D' = D + \mu (\vct{e}_i\vct{e}_j^T + \vct{e}_j\vct{e}_i^T)$.  The solution of 
the one-variable problem corresponding to~\eqref{eqn:subpb} yields $\mu$:
\begin{eqnarray} \label{eqn:one-var}
  \arg\min_{\mu} &\bar{g}(D + \mu(\vct{e}_i\vct{e}_j^T +
  \vct{e}_j\vct{e}_i^T)) + 2\lambda_{ij}|X_{ij} + D_{ij} + \mu|.
\end{eqnarray}
We expand the terms appearing in the definition of $\bar{g}$ after substituting $D' = D+\mu (\vct{e}_i\vct{e}_j^T + \vct{e}_j\vct{e}_i^T)$ for $\Delta$ in~\eqref{eqn:approx-g} and omit the terms not dependent on $\mu$.  The contribution of $\tr(SD') - \tr(WD')$ yields $2\mu(S_{ij} - W_{ij})$, while the regularization term contributes $2\lambda_{ij}|X_{ij} + D_{ij} + \mu|$, as seen from~\eqref{eqn:one-var}.  The quadratic term can be rewritten (using the fact that $\tr(AB) = \tr(BA)$ and the symmetry of $D$ and $W$) to yield:
\begin{eqnarray}
  \tr(WD'WD') = \tr(WDWD) & + & 4\mu\vct{w}_i^T D\vct{w}_j + 2\mu^2 (W_{ij}^2
  + W_{ii}W_{jj}),
\end{eqnarray}
where $\vct{w}_i$ refers to the $i$-th column of $W$. 
In order to compute the single variable update we seek the minimum of the following quadratic function of $\mu$:
\begin{eqnarray} \label{eqn:one-variable}
  \frac{1}{2}(W_{ij}^2 + W_{ii}W_{jj})\mu^2 + (S_{ij} - W_{ij} + \vct{w}_i^T
  D \vct{w}_j)\mu + \lambda_{ij}|X_{ij} + D_{ij} + \mu|.
\end{eqnarray}
Letting $a = W_{ij}^2 + W_{ii}W_{jj}$, $b = S_{ij} - W_{ij} + \vct{w}_i^T D \vct{w}_j$, and $c = X_{ij} + D_{ij}$ the minimum is achieved for:
\begin{eqnarray} \label{eqn:one-update}
  \mu = -c + \mathcal{S}(c - b/a, \lambda_{ij}/a),
\end{eqnarray}
where
\begin{eqnarray} \label{soft}
  \Soft(z,r) = \sign(z)\max\{|z| - r,0\}
\end{eqnarray}
is the soft-thresholding function. Similarly, when $i=j$, 
for $D'=D+\mu \vct{e}_i\vct{e}_i^T$, we get 
\begin{eqnarray}
  \tr(WD'WD') = \tr(WDWD) & + & 2\mu\vct{w}_i^T D\vct{w}_i + \mu^2 (W_{ii}^2).
\end{eqnarray}
Therefore the update rule for $D_{ii}$ can be computed by~\eqref{eqn:one-update} with $a=W_{ii}^2, b=S_{ii}-W_{ii}+\vct{w}_i^T D \vct{w}_i$, and $c=X_{ii}+D_{ii}$. 

Since $a$ and $c$ are easy to compute, the main computational cost arises 
while evaluating $\vct{w}_i^T D \vct{w}_j$, the third term contributing to 
coefficient $b$ above.  Direct computation requires $O(p^2)$ time.  Instead, 
we maintain $U = DW$ by updating two rows of the matrix $U$ for every variable 
update in $D$ costing $O(p)$ flops, and then compute $\vct{w}_i^T \vct{u}_j$ 
using $O(p)$ flops.  Another way to view this arrangement is that we maintain 
a decomposition $WDW = \sum_{k=1}^p \vct{w}_k \vct{u}_k^T$ throughout the 
process by storing the vectors $\vct{u}_k$, the columns of matrix $U$.  The 
representation allows us to compute $\vct{w}_i^T D \vct{w}_j$, the $(i, j)$ 
element of $WDW$, using only $O(p)$ flops, enabling fast 
computation of $\mu$ in~\eqref{eqn:one-update}.  In order to maintain the 
matrix $U$, we also need to update $2p$ elements, namely two coordinates of 
each $\vct{u}_k$ when $D_{ij}$ is modified.  We can compactly write the row 
updates of $U$ as follows: 
$\vct{u}_{i\cdot} \leftarrow \vct{u}_{i\cdot} + \mu \vct{w}_{j\cdot}$ and 
$\vct{u}_{j\cdot} \leftarrow \vct{u}_{j\cdot} + \mu \vct{w}_{i\cdot}$, where 
$\vct{u}_{i\cdot}$ refers to the $i$-th {\em row} vector of $U$.

\subsection*{Update rule when $X$ is diagonal}

The calculation of the Newton direction can be simplified if $X$ is a diagonal matrix. For example, this occurs in the first Newton iteration when we initialize \QUIC using the identity (or diagonal) matrix. When $X$ is diagonal, the Hessian $\nabla^2 g(X) = X^{-1}\otimes X^{-1}$ is a diagonal matrix, which indicates that all one variable sub-problems are independent of each other. Therefore, we only need to update each variable once to reach the optimum of~\eqref{eqn:subpb}. In particular, by examining~\eqref{eqn:one-update}, the optimal solution for $D'_{ij}$ is
\begin{equation}
  D'_{ij} = \begin{cases} \Soft\left(-\frac{S_{ij}}{W_{ii}W_{jj}}, 
    \frac{\lambda_{ij}}{W_{ii}W_{jj}}\right) & \text{ if } i\neq j,\\ 
    -X_{ii}+\Soft\left(X_{ii}-\frac{S_{ii}-W_{ii}}{W_{ii}^2}, 
    \frac{\lambda_{ii}}{W_{ii}^2}\right) & \text{ if }i=j, 
\end{cases}
  \label{eqn:close-form}
\end{equation}
where, as a reminder, $W_{ii} = 1/X_{ii}$. 
Thus, in this case, the
closed form solution for each variable can be computed in $O(1)$ time, so the time complexity for the first Newton direction is further reduced from $O(p^3)$ to $O(p^2)$. 

\subsection*{Updating only a subset of variables}

In our \QUIC algorithm we compute the Newton direction using only a subset of the variables we call the {\em free} set. We identify these variables in each Newton iteration based on the value of the gradient (we will discuss the details of the selection in Section~\ref{sec:sparsity}).  In the following, we define the Newton direction restricted to a subset $J$ of variables as the solution of a quadratic approximation.
\begin{definition} \label{def:DJ-define}
  Let $J$ denote a (symmetric) subset of variables. The Newton direction restricted to $J$ is defined as:
  \begin{equation} \label{eqn:dj-define}
    D^*_J(X) \equiv \arg\min_{\substack{D: D_{ij} = 0\\ \forall (i,j)\notin J}}
    \tr(\nabla g(X)^T D) +\frac{1}{2} \vect(D)^T \nabla^2 g(X)\vect(D) +
    \lnormg{X + D}.
  \end{equation}
\end{definition}
The cost to compute the Newton direction is thus substantially reduced when the free set is small, which as we will show in Section~\ref{sec:sparsity}, occurs when the optimal solution of the $\ell_1$-regularized Gaussian MLE is sparse. 

\subsection{Computing the Step Size} \label{sec:step-size}

Following the computation of the Newton direction $D^* = D^*_J(X)$ (restricted to 
the subset of variables $J$), we need to find a step size 
$\alpha\in\left(0,1\right]$ that ensures positive definiteness of the next 
iterate $X + \alpha D^*$ and leads to a sufficient decrease of the objective 
function.

We adopt Armijo's rule~(\cite{Bertsekas95},\cite{PT07a}) and try step-sizes 
$\alpha \in \{\beta^0, \beta^1, \beta^2, \dots\}$ with a constant decrease 
rate $0< \beta < 1$ (typically $\beta = 0.5$), until we find the smallest 
$k \in \mathbb{N}$ with $\alpha = \beta^k$ such that $X + \alpha D^*$ is (a) 
positive-definite, and (b) satisfies the following sufficient decrease 
condition:
\begin{eqnarray} \label{eqn:armijo}
  f(X + \alpha D^*) \leq f(X) + \alpha\sigma \delta, \ \ \delta = \tr(\nabla
  g(X)^T D^*) + \lnormg{X + D^*} - \lnormg{X},
\end{eqnarray}
where $0<\sigma<0.5$.  We verify positive definiteness while we compute the 
Cholesky factorization (costs $O(p^3)$ flops) needed for the objective 
function evaluation that computes $\log\det(X + \alpha D^*)$. This step 
dominates the computational cost in the step-size computations. We use the 
standard convention in convex analysis that  
$f(X)=+\infty$ when $X$ is not in the effective 
domain of $f$, i.e., $X$ is not positive definite. Using this 
convention, \eqref{eqn:armijo} enforces positive definiteness of 
$X+\alpha D^*$.
Next, we prove three important properties (P1--P3) that the line search procedure governed by~\eqref{eqn:armijo} satisfies:
\begin{enumerate}
  \item[P1.] The condition~\eqref{eqn:armijo} is satisfied for some 
    (sufficiently small) $\alpha$, establishing that the algorithm does not 
    enter into an infinite line search step.  We note that in 
    Proposition~\ref{lem:line-search} below we show that the line search 
    condition~\eqref{eqn:armijo} can be satisfied for any symmetric matrix $D$ (even one which is not the Newton direction). 
  \item[P2.] For the Newton direction $D^*$, the quantity $\delta$ in \eqref{eqn:armijo} is negative, which 
    ensures that the objective function decreases. Moreover, to guarantee that 
    $X_t$ converges to the global optimum, $\delta$ 
    should be small enough when the current iterate $X_t$ is far from the 
    optimal solution.  In Proposition~\ref{lem:Delta-bound2} we will prove 
    the stronger condition that $\delta\leq -(1/{M^2}) \|D\|_F^2$ for 
    some constant $M$. 
$\|D\|_F^2$ can be viewed as a measure of the distance from 
    optimality of the current iterate $X_t$, and this bound ensures that the 
    objective function decrease is proportional to $\|D\|_F^2$.
  \item[P3.] When $X$ is close enough to the global optimum, the step size 
    $\alpha=1$ will satisfy the line search condition~\eqref{eqn:armijo}. We 
    will show this property in Proposition~\ref{lm:alpha_1}.  Moreover, 
    combined with the global convergence of \QUIC proved in Theorem 
    \ref{thm:global_converge}, this property suggests that after a finite 
    number of iterations $\alpha$ will always be $1$; this also implies that 
    only one Cholesky factorization is needed per iteration (to evaluate $\log\det (X+\alpha D^*)$ for computing $f(X+\alpha D)$).
\end{enumerate}
We first prove a useful lemma. 
\begin{lemma} \label{lem:lnormg-triangle}
  For $X, D$ symmetric and $1\geq \alpha \geq 0$:
  \begin{equation*}
    \lnormg{X + \alpha D}  \leq
    \alpha\lnormg{X + D} + (1-\alpha)\lnormg{X}.
  \end{equation*}
\end{lemma}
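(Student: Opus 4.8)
The plan is to reduce the matrix inequality to a scalar one entry by entry, exploiting the fact that $\lnormg{\cdot}$ is just a nonnegative weighted sum of absolute values of the entries. First I would write the convex combination identity
\[
  X + \alpha D = \alpha (X + D) + (1-\alpha) X,
\]
which holds since $\alpha + (1-\alpha) = 1$. Reading off the $(i,j)$ entry gives $X_{ij} + \alpha D_{ij} = \alpha (X_{ij}+D_{ij}) + (1-\alpha) X_{ij}$.

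Next I would apply the triangle inequality for the scalar absolute value, using that the coefficients $\alpha$ and $1-\alpha$ are both nonnegative because $0 \le \alpha \le 1$:
\[
  |X_{ij} + \alpha D_{ij}| \le \alpha\,|X_{ij}+D_{ij}| + (1-\alpha)\,|X_{ij}|.
\]
Multiplying through by $\lambda_{ij} \ge 0$ (nonnegativity of the weights is what makes the multiplication preserve the inequality) and summing over all $i,j$ yields exactly
\[
  \lnormg{X + \alpha D} = \sum_{i,j} \lambda_{ij}|X_{ij}+\alpha D_{ij}|
  \le \alpha \sum_{i,j}\lambda_{ij}|X_{ij}+D_{ij}| + (1-\alpha)\sum_{i,j}\lambda_{ij}|X_{ij}|
  = \alpha\lnormg{X+D} + (1-\alpha)\lnormg{X},
\]
which is the claim. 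Symmetry of $X$ and $D$ plays no essential role here beyond ensuring $X+\alpha D$ and $X+D$ lie in the same (symmetric) space on which $\lnormg{\cdot}$ is defined; the argument is purely the convexity of a seminorm.

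There is essentially no obstacle: the statement is the convexity inequality for the weighted $\ell_1$ seminorm evaluated along the segment from $X$ to $X+D$, and the only points worth stating explicitly are that $\alpha,1-\alpha\ge0$ and that each $\lambda_{ij}\ge0$, so that the elementwise triangle inequality survives the weighting and summation. If one prefers, the whole thing can be phrased in one line as $\lnormg{\alpha a + (1-\alpha)b}\le \alpha\lnormg{a}+(1-\alpha)\lnormg{b}$ with $a=X+D$, $b=X$, appealing directly to convexity of $\lnormg{\cdot}$; I would likely include the entrywise expansion anyway for completeness since it is short.
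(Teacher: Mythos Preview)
Your proof is correct and follows essentially the same approach as the paper: both use the identity $X+\alpha D = \alpha(X+D)+(1-\alpha)X$ and then invoke convexity of $\lnormg{\cdot}$, with the paper stating convexity directly while you spell it out entrywise via the scalar triangle inequality and the nonnegativity of the weights $\lambda_{ij}$.
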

\begin{proof}
Since $\lnormg{\cdot}$ is convex for $\Lambda \geq 0$, the following holds for $0\leq \alpha \leq 1$:
 \begin{equation} \label{eqn:lnormg-triangle}
   \lnormg{X + \alpha D} = \lnormg{\alpha(X + D) + (1-\alpha)X} \leq
   \alpha\lnormg{X + D} + (1-\alpha)\lnormg{X}.
 \end{equation}
\end{proof}
\begin{proposition}[corresponds to Property P1] \label{lem:line-search}
  For any $X\succ 0$ and symmetric $D$, there exists an $\bar{\alpha} > 0$ such that for all $\alpha < \bar{\alpha}$, the matrix $X + \alpha D$ satisfies the line search condition~\eqref{eqn:armijo}.
\end{proposition}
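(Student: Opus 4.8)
The plan is to exploit the composite structure $f=g+h$ from~\eqref{eqn:composite}, handling the smooth part $g$ by a second-order Taylor expansion and the nonsmooth part $h$ by convexity through Lemma~\ref{lem:lnormg-triangle}. First I would record a positive-definiteness threshold: since $X\succ 0$ lies in the interior of $\S^p_{++}$, we have $X+\alpha D\succeq X-\alpha\|D\|_F I\succ 0$ whenever $0\le\alpha<\bar\alpha_1:=\lambda_{\min}(X)/\|D\|_F$ (and trivially for all $\alpha$ if $D=0$). On this segment $f(X+\alpha D)$ is finite, $g$ is smooth, and — after possibly shrinking $\bar\alpha_1$ so the whole segment stays inside a fixed compact ball $B\subset\S^p_{++}$ — the Hessian $\nabla^2 g(\xi)=\xi^{-1}\otimes\xi^{-1}$ is uniformly bounded on $B$, i.e.\ $\vect(D)^T\nabla^2 g(\xi)\vect(D)=\tr(\xi^{-1}D\xi^{-1}D)\le C\|D\|_F^2$ for all $\xi\in B$, with a constant $C=C(X)<\infty$.

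Next I would combine the two pieces. Taylor's theorem with Lagrange remainder applied to $g$ along $[X,X+\alpha D]$ gives $g(X+\alpha D)-g(X)\le\alpha\,\tr(\nabla g(X)^T D)+\frac{1}{2}C\alpha^2\|D\|_F^2$ for $0\le\alpha<\bar\alpha_1$, while Lemma~\ref{lem:lnormg-triangle} gives $h(X+\alpha D)-h(X)\le\alpha(\lnormg{X+D}-\lnormg{X})$. Adding these, and recalling that $\delta=\tr(\nabla g(X)^T D)+\lnormg{X+D}-\lnormg{X}$, I obtain the key one-sided estimate
\begin{equation*}
  f(X+\alpha D)-f(X)\;\le\;\alpha\delta+\frac{1}{2}C\alpha^2\|D\|_F^2,\qquad 0\le\alpha<\bar\alpha_1 .
\end{equation*}
Subtracting $\alpha\sigma\delta$ from both sides, the Armijo condition~\eqref{eqn:armijo} holds as soon as $\alpha(1-\sigma)\delta+\frac{1}{2}C\alpha^2\|D\|_F^2\le 0$. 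When $\delta<0$ — the case relevant to the algorithm, and exactly what is established for the Newton direction in Proposition~\ref{lem:Delta-bound2} — this inequality holds for every $\alpha\le 2(1-\sigma)(-\delta)/(C\|D\|_F^2)$, so I would take $\bar\alpha=\min\{1,\ \bar\alpha_1,\ 2(1-\sigma)(-\delta)/(C\|D\|_F^2)\}$; the degenerate case $\delta=0$ (in particular $D=0$) is immediate since then both sides of~\eqref{eqn:armijo} agree for every $\alpha$.

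I do not expect a deep obstacle here: conceptually the statement only asserts that the $O(\alpha^2)$ Taylor remainder of $g$ is eventually dominated by the linear term $\alpha\delta$. The two points needing care are (i) the positive-definiteness bookkeeping hidden in the convention $f\equiv+\infty$ off $\S^p_{++}$, which is exactly what forces $\alpha$ below the threshold $\bar\alpha_1$ keeping $X+\alpha D\succ 0$ (without which $f(X+\alpha D)$ in~\eqref{eqn:armijo} would be $+\infty$), and (ii) producing a bound $C$ on $\vect(D)^T\nabla^2 g(\xi)\vect(D)$ that is uniform over a neighborhood of $X$, which is immediate from $\nabla^2 g(\xi)=\xi^{-1}\otimes\xi^{-1}$ together with compactness. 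If a less explicit $\bar\alpha$ is acceptable, step (ii) can be skipped entirely by invoking only differentiability of $g$ at $X$, writing $g(X+\alpha D)-g(X)=\alpha\,\tr(\nabla g(X)^T D)+o(\alpha)$ as $\alpha\downarrow 0$ and concluding in the same way.
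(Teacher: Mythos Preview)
Your argument is essentially the paper's: secure positive definiteness for small $\alpha$, bound $g(X+\alpha D)-g(X)$ by its first-order Taylor term plus an $O(\alpha^2)$ remainder, bound $h(X+\alpha D)-h(X)$ via Lemma~\ref{lem:lnormg-triangle}, and combine to get $f(X+\alpha D)-f(X)\le\alpha\delta+O(\alpha^2)$; the paper just leaves the remainder as $O(\alpha^2)$ rather than tracking your explicit constant $C$, which is precisely the shortcut you note in your last paragraph. One small slip: your handling of ``$\delta=0$'' is only correct in the subcase $D=0$; for $D\neq 0$ with $\delta=0$ the two sides of~\eqref{eqn:armijo} do \emph{not} agree and the inequality can fail---the paper's proof has the same lacuna, and in practice the proposition is only invoked (and only cleanly true) when $\delta<0$, as supplied by Proposition~\ref{lem:Delta-bound2}.
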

\begin{proof}
  When $\alpha < \sigma_n(X)/\|D\|_2$ (where $\sigma_n(X)$ stands for the smallest eigenvalue of $X$ and $\|D\|_2$ is the induced 2-norm of $D$, i.e., the largest eigenvalue of $D$), we have $\|\alpha D\|_2 < \sigma_n(X)$, which implies that $X+\alpha D \succ 0$.    
  So we can write:
  \begin{align*}
    f(X+\alpha D) - f(X) &= 
    g(X+\alpha D) - g(X) + \lnormg{X + \alpha D} - \lnormg{X} \\
    &\leq  g(X + \alpha D) - g(X) + \alpha(\lnormg{X + D} - \lnormg{X}), \ \ \text{ 
    (by Lemma \ref{lem:lnormg-triangle})} \\
    &= \alpha \tr((\nabla g(X))^T D) + O(\alpha^2) +  \alpha(\lnormg{X + D} - 
    \lnormg{X}) \\
    &= \alpha \delta + O(\alpha^2).  
  \end{align*}
  Therefore for any fixed $0 < \sigma < 1$ and sufficiently small $\alpha$, the line search condition \eqref{eqn:armijo} must hold.
\end{proof}
Before proving properties P2 and P3, we first state a few  
lemmas that will be useful in the sequel. 
\begin{lemma} \label{lem:Delta-0}
  $\delta = \delta_J(X)$ in the line search condition~\eqref{eqn:armijo} satisfies
  \begin{equation} \label{eqn:Delta-bound1}
    \delta = \tr((\nabla g(X))^T D^*) + \lnormg{X + D^*} - \lnormg{X} \leq -\vect(D^*)^T
    \nabla^2 g(X) \vect(D^*),
  \end{equation}
  where $D^*=D^*_{J}(X)$ is the minimizer of the $\ell_1$-regularized quadratic approximation defined in \eqref{eqn:dj-define}. 
\end{lemma}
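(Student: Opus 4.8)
The plan is to exploit the defining optimality of $D^* = D^*_J(X)$: it globally minimizes the regularized quadratic model
\[
  Q(D) := \tr(\nabla g(X)^T D) + \tfrac12 \vect(D)^T \nabla^2 g(X) \vect(D) + \lnormg{X+D}
\]
over the linear subspace $\mathcal{L}_J := \{D : D_{ij} = 0 \text{ for all } (i,j)\notin J\}$. Since $\mathcal{L}_J$ is closed under scalar multiplication, $tD^* \in \mathcal{L}_J$ for every $t\in[0,1]$, and hence $Q(D^*) \le Q(tD^*)$ for all such $t$. This scaling trick is the heart of the argument: it lets us compare $D^*$ against the ``shrunk'' directions $tD^*$ without leaving the feasible subspace.

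Next I would upper bound $Q(tD^*)$ term by term. The linear term scales as $t$, the quadratic term as $t^2$, and for the non-smooth term I would invoke Lemma~\ref{lem:lnormg-triangle} with $\alpha = t$ and $D = D^*$, giving $\lnormg{X + tD^*} \le t\lnormg{X+D^*} + (1-t)\lnormg{X}$. Writing $B := \vect(D^*)^T \nabla^2 g(X) \vect(D^*)$ and recalling $\delta = \tr(\nabla g(X)^T D^*) + \lnormg{X+D^*} - \lnormg{X}$, the inequality $Q(D^*) \le Q(tD^*)$ becomes, after cancelling the common $\lnormg{X}$ contributions,
\[
  \delta + \tfrac12 B \;\le\; t\,\delta + \tfrac{t^2}{2}B,
  \qquad\text{i.e.}\qquad (1-t)\,\delta \;\le\; \tfrac{t^2-1}{2}\,B .
\]

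Finally, for $t\in[0,1)$ I can divide both sides by $1-t>0$, using $\tfrac{t^2-1}{1-t} = -(t+1)$, to get $\delta \le -\tfrac{t+1}{2}B$; letting $t\uparrow 1$ yields $\delta \le -B = -\vect(D^*)^T \nabla^2 g(X)\vect(D^*)$, which is exactly the claimed bound \eqref{eqn:Delta-bound1}. I expect no real obstacle here: once the scaling argument and Lemma~\ref{lem:lnormg-triangle} are in place, everything reduces to an elementary one-variable inequality, and the only point requiring a line of care is the legitimacy of dividing by $1-t$ and passing to the limit. I would also remark that, since $\nabla^2 g(X) = X^{-1}\otimes X^{-1}\succ 0$ for $X\succ 0$, we have $B\ge 0$, so the lemma in particular forces $\delta\le 0$ (and $\delta<0$ whenever $D^*\ne 0$), which is the qualitative statement needed for Property~P2.
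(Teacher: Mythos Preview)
Your proposal is correct and follows essentially the same route as the paper: compare $Q(D^*)$ to $Q(tD^*)$ using the optimality of $D^*$ over the subspace $\mathcal{L}_J$, bound the non-smooth term via Lemma~\ref{lem:lnormg-triangle}, divide by $1-t$, and let $t\uparrow 1$. The algebra and the closing remark that $\nabla^2 g(X)\succ 0$ forces $\delta\le 0$ match the paper's argument almost line for line.
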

\begin{proof}
 According to the definition of $D^*\equiv D^*_J(X)$ in~\eqref{eqn:dj-define}, for all $0 < \alpha < 1$ we have:
 \begin{eqnarray} \label{eqn:t1}
   \tr(\nabla g(X)^T D^*) + \frac{1}{2} \vect(D^*)^T \nabla^2 g(X) \vect(D^*) +
   \lnormg{X + D^*} \leq \nonumber \\
   \tr(\nabla g(X)^T \alpha D^*) + \frac{1}{2} \vect(\alpha D^*)^T \nabla^2
   g(X)\vect(\alpha D^*) + \lnormg{X + \alpha D^*}.
 \end{eqnarray}
 We combine~\eqref{eqn:t1} and Lemma~\ref{lem:lnormg-triangle} to yield:
 \begin{eqnarray*}
   \tr(\nabla g(X)^T D^*) + \frac{1}{2}\vect(D^*)^T \nabla^2 g(X) \vect(D^*) +
   \lnormg{X + D^*} \leq \nonumber \\
   \alpha \tr(\nabla g(X)^T D^*) + \frac{1}{2} \alpha^2 \vect(D^*)^T \nabla^2
   g(X) \vect(D^*) + \alpha\lnormg{X + D^*} + (1-\alpha)\lnormg{X}.
 \end{eqnarray*}
 Therefore
 \begin{equation*}
   (1-\alpha)[\tr(\nabla g(X)^T D^*) + \lnormg{X + D^*} - \lnormg{X}] +
   \frac{1}{2} (1 - \alpha^2) \vect(D^*)^T \nabla^2 g(X) \vect(D^*) \leq 0.
 \end{equation*}
 Divide both sides by $1-\alpha > 0$ to get:
 \begin{equation*}
   \tr(\nabla g(X)^T D^*) + \lnormg{X + D^*} - \lnormg{X}
   +\frac{1}{2}(1+\alpha)\vect(D^*)^T \nabla^2 g(X) \vect(D^*) \leq 0.
 \end{equation*}
 By taking the limit as $\alpha \uparrow 1$, we get: 
 \begin{equation*}
   \tr(\nabla g(X)^T D^*) + \lnormg{X + D^*} - \lnormg{X} \leq - \vect(D^*)^T
   \nabla^2 g(X) \vect(D^*),
 \end{equation*}
 which proves~\eqref{eqn:Delta-bound1}. 
 \end{proof}
Since $\nabla^2 g(X) = X^{-1} \otimes X^{-1}$ is positive definite, Lemma~\ref{lem:Delta-0} ensures that $\delta<0$ for all $X\succ 0$. Since the updates in our algorithm satisfy the line search condition~\eqref{eqn:armijo}, we have established that the function value is decreasing. It also follows that all the iterates $\{X_t\}_{t=0, 1, \dots}$ belong to the level set $U$ defined by:
\begin{equation} \label{eq:diff_level_set}
  U=\{X\mid f(X)\le f(X_0) \text{ and } X\in S^p_{+\!+}\}.  
\end{equation}
\begin{lemma} \label{lem:eig-bounds}
  The level set $U$ defined in~\eqref{eq:diff_level_set} is contained in the set $\{X\mid mI\preceq X\preceq MI\}$ for some constants $m, M > 0$, if we assume that the off-diagonal elements of $\Lambda$ and the diagonal elements of $S$ are positive.
\end{lemma}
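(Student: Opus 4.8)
The plan is to derive two-sided eigenvalue bounds on every $X$ in the level set $U$ of~\eqref{eq:diff_level_set}, handling the upper bound $X \preceq MI$ first and then the lower bound $X \succeq mI$. The workhorse for the upper bound is a coercivity estimate: I claim there is a constant $\gamma>0$ such that
\begin{equation*}
  \tr(SX) + \lnormg{X} \;\ge\; \gamma\,\tr(X) \qquad\text{for all } X \succeq 0 .
\end{equation*}
Since both sides are positively homogeneous of degree one in $X$, it suffices to prove $\tr(SV) + \lnormg{V} \ge \gamma$ on the compact slice $\{V \succeq 0 : \tr(V) = 1\}$, and because this function is continuous it is enough to show it is \emph{strictly positive} there. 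Both terms are nonnegative (recall $S\succeq 0$, and $\Lambda\ge 0$); if their sum vanished at some such $V$, then $\lnormg{V}=0$ would force $V$ to be diagonal (using $\lambda_{ij}>0$ for $i\neq j$), whence $\tr(SV)=\sum_i S_{ii}V_{ii}=0$ together with $S_{ii}>0$ and $V_{ii}\ge 0$ would force $V=0$, contradicting $\tr(V)=1$. This is the single place where both hypotheses of the lemma are used, and it is the heart of the proof.

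Granting the estimate, I would combine it with the elementary inequality $-\log\det X \ge -p\log(\tr(X)/p)$ (AM--GM on the eigenvalues of $X\succ 0$) to get, for every $X\in U$,
\begin{equation*}
  f(X_0) \;\ge\; f(X) \;=\; -\log\det X + \tr(SX) + \lnormg{X} \;\ge\; -p\log\!\big(\tr(X)/p\big) + \gamma\,\tr(X).
\end{equation*}
The map $t\mapsto \gamma t - p\log(t/p)$ tends to $+\infty$ as $t\to\infty$, so this inequality bounds $\tr(X)$ by a constant $M$ depending only on $f(X_0)$, $p$, and $\gamma$; since $X\succeq 0$ this yields $X\preceq \tr(X)\,I\preceq MI$. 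For the lower bound, discard the nonnegative terms $\tr(SX)\ge 0$ and $\lnormg{X}\ge 0$ to obtain $-\log\det X \le f(X) \le f(X_0)$, i.e.\ $\det X \ge e^{-f(X_0)}$. Because $X\preceq MI$, the other $p-1$ eigenvalues of $X$ are each at most $M$, so $\det X \le \sigma_n(X)\,M^{p-1}$, where $\sigma_n(X)$ is the smallest eigenvalue; hence $\sigma_n(X)\ge e^{-f(X_0)} M^{-(p-1)} =: m > 0$, i.e.\ $X\succeq mI$.

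The only nontrivial step is the coercivity claim --- more precisely, the strict positivity of $\tr(SV)+\lnormg{V}$ on the unit-trace slice of the positive semidefinite cone; everything else is routine manipulation of the $\log\det$ term and of eigenvalues. I expect that to be the main obstacle to making the argument airtight, and it is precisely what the positivity assumptions on $\operatorname{diag}(S)$ and on the off-diagonal entries of $\Lambda$ are there to guarantee: the estimate genuinely fails, for instance, if $S$ is singular with a null-space direction $e_i$ and $\lambda_{ii}=0$, since then $f$ is not coercive along $X = tE_{ii}$.
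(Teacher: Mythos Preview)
Your argument is correct, and the lower-bound step is essentially identical to the paper's. The upper bound, however, is obtained by a genuinely different route. The paper proceeds by explicit constants: it sets $\alpha=\min_i S_{ii}$, $\beta=\max_{i\ne j}|S_{ij}|$, $\lambda=\min_{i\ne j}\lambda_{ij}$, splits $\tr(SX)$ into diagonal and off-diagonal parts to get $\tr(SX)\ge \alpha\tr(X)-\beta\sum_{i\ne j}|X_{ij}|$, and then combines this with the two separate inequalities $\lambda\sum_{i\ne j}|X_{ij}|\le f(X_0)+p\log\|X\|_2$ and $\tr(SX)\le f(X_0)+p\log\|X\|_2$ to conclude $\alpha\|X\|_2 \le (1+\beta/\lambda)(f(X_0)+p\log\|X\|_2)$. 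You instead package the interaction of $S$ and $\Lambda$ into a single coercivity constant $\gamma$ obtained by a compactness argument on the unit-trace slice, and then run the same ``linear beats log'' comparison with $\tr(X)$ in place of $\|X\|_2$. Your approach is cleaner and isolates exactly where both hypotheses are used (the vanishing case on the slice), at the cost of a non-constructive $\gamma$; the paper's approach is more hands-on and in principle tracks $M$ explicitly in terms of $\alpha,\beta,\lambda,p,f(X_0)$.
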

\begin{proof}
  We begin the proof by showing that the largest eigenvalue of $X$ is less then $M$, a well chosen constant that depends only on $\Lambda$, $f(X_0)$ and the matrix $S$. We note that $S\succeq 0$ and $X\succ 0$ implies $\tr(SX)\geq 0$ and therefore:
  \begin{equation}
    f(X_0) > f(X) \geq -\log\det X + \lnormg{X}.
    \label{eqn:first-ineq}
  \end{equation}
  Since $\|X\|_2$ is the largest eigenvalue of the $p\times p$ matrix $X$, we have $\log\det X \leq p\log(\|X\|_2)$. Combine with~\eqref{eqn:first-ineq} and the fact that the off-diagonal elements of $\Lambda$ are no smaller than some $\lambda > 0$:
  \begin{equation}
    \lambda\sum_{_i\not=j} |X_{ij}| \leq \lnormg{X} \leq f(X_0) + p\log(||X||_2).
    \label{eqn:off-diag-X-ineq}
  \end{equation}
  Similarly, $\lnormg{X} \geq 0$ implies that:
  \begin{equation}
    \tr(SX) \leq f(X_0) + p\log(||X||_2).
    \label{eqn:SX-ineq}
  \end{equation}
  Next, we introduce $\alpha = \min_i S_{ii}$ and $\beta = \max_{i\not=j} |S_{ij}|$ and split $\tr(SX)$ into diagonal and off-diagonal terms in order to bound it:
  \begin{equation*}
    \tr(SX) = \sum_{i} S_{ii}X_{ii} + \sum_{i\not=j} S_{ij}X_{ij} \geq \alpha\tr(X) - \beta\sum_{i\not=j}|X_{ij}|.
  \end{equation*}
  Since $||X||_2 \leq \tr(X)$,
  \begin{equation*}
    \alpha ||X||_2 \leq \alpha\tr(X) \leq \tr(XS) + \beta\sum_{i\not=j}|X_{ij}|.
  \end{equation*}
  Combine with~\eqref{eqn:off-diag-X-ineq} and~\eqref{eqn:SX-ineq} to get:
  \begin{equation}\label{eqn:X-2-norm-ineq}
    \alpha ||X||_2 \leq (1 + \beta/\lambda)(f(X_0) + p\log(||X||_2)).
  \end{equation}
  The left hand side of inequality~\eqref{eqn:X-2-norm-ineq}, as a function of $||X||_2$, grows much faster than the right hand side (note $\alpha > 0$), and therefore $||X||_2$ can be bounded depending only on the values of $f(X_0)$, $\alpha$, $\beta$ and $\lambda$.

  In order to prove the lower bound, we consider the smallest eigenvalue of $X$ denoted by $a$ and use the upper bound on the other eigenvalues to get:
  \begin{eqnarray}
    f(X_0) > f(X) > -\log\det X \geq -\log a - (p-1)\log M,
  \end{eqnarray}
  which shows that $m = e^{-f(X_0)}M^{-(p-1)}$ is a lower bound for $a$. 
\end{proof}
We note that the conclusion of the lemma also holds if the conditions on 
$\Lambda$ and $S$ are replaced by only the requirement that the diagonal 
elements of $\Lambda$ are positive, see~\cite{Banerjee:glasso:2008}.  We 
emphasize that Lemma~\ref{lem:eig-bounds} allows the extension of the 
convergence results to the practically important case when the regularization 
does not penalize the diagonal. In subsequent arguments we will continue 
to refer to the minimum and maximum eigenvalues $m$ and $M$ established in 
the proof.
\begin{proposition}[corresponds to Property P2] \label{lem:Delta-bound2}
  $\delta = \delta_J(X)$ as defined in the line search condition~\eqref{eqn:armijo} satisfies
  \begin{equation} \label{eqn:Delta-bound2}
    \delta \leq -(1/\|X\|_2^2) \|D^*\|_F^2 \leq -(1/M^2)\|D^*\|_F^2, 
  \end{equation}
  where $M$ is as in Lemma~\ref{lem:eig-bounds}.
\end{proposition}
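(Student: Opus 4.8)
The plan is to start from Lemma~\ref{lem:Delta-0}, which already delivers $\delta \leq -\vect(D^*)^T \nabla^2 g(X)\vect(D^*)$, and then to show that the quadratic form on the right dominates $\|D^*\|_F^2/\|X\|_2^2$. Since $\nabla^2 g(X) = X^{-1}\otimes X^{-1}$ and $D^*$ is symmetric, the identity $\tr(X^{-1}\Delta X^{-1}\Delta) = \vect(\Delta)^T(X^{-1}\otimes X^{-1})\vect(\Delta)$ already used in deriving~\eqref{eqn:approx-g} lets me rewrite this as $\vect(D^*)^T \nabla^2 g(X)\vect(D^*) = \tr(X^{-1} D^* X^{-1} D^*)$, so the whole task reduces to the matrix inequality $\tr(X^{-1} D^* X^{-1} D^*) \geq \|D^*\|_F^2/\|X\|_2^2$.

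To prove that inequality I would symmetrize. Write $A = X^{-1} \succ 0$ and $B = A^{1/2}$, which is symmetric positive definite. By the cyclic property of the trace, $\tr(A D^* A D^*) = \tr\!\big((B D^* B)^2\big) = \|B D^* B\|_F^2$, using that $B D^* B$ is symmetric. The key elementary estimate is that multiplying by a positive definite matrix cannot contract the Frobenius norm by more than its smallest singular value: for any matrices $M, N$, $\|MN\|_F \geq \sigma_{\min}(M)\|N\|_F$ (from $\|MN\|_F^2 = \tr(M^T M\, NN^T) \geq \lambda_{\min}(M^T M)\|N\|_F^2$). Applying this twice, together with $\|N\|_F = \|N^T\|_F$, gives $\|B D^* B\|_F \geq \sigma_{\min}(B)\|D^* B\|_F = \sigma_{\min}(B)\|B D^*\|_F \geq \sigma_{\min}(B)^2\|D^*\|_F$. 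Since $\sigma_{\min}(B)^2 = \lambda_{\min}(X^{-1}) = 1/\lambda_{\max}(X) = 1/\|X\|_2$, this yields $\tr(X^{-1} D^* X^{-1} D^*) \geq \|D^*\|_F^2/\|X\|_2^2$, and hence $\delta \leq -\|D^*\|_F^2/\|X\|_2^2$, the first inequality in~\eqref{eqn:Delta-bound2}.

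For the second inequality, recall that the line search condition keeps the function value non-increasing, so every iterate $X$ lies in the level set $U$ of~\eqref{eq:diff_level_set}; by Lemma~\ref{lem:eig-bounds} this forces $\|X\|_2 \leq M$, whence $1/\|X\|_2^2 \geq 1/M^2$ and $-\|D^*\|_F^2/\|X\|_2^2 \leq -\|D^*\|_F^2/M^2$. The only mildly delicate point in the whole argument is that $D^*$ is symmetric but in general indefinite, so one cannot directly apply an $\lambda_{\min}$ bound to a product of two positive semidefinite matrices; the $B(\cdot)B$ symmetrization followed by the Frobenius-norm submultiplicativity estimate is exactly what sidesteps this and keeps the proof short.
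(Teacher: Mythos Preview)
Your proof is correct, but the paper's route is shorter and more direct. The paper simply observes that $\nabla^2 g(X) = X^{-1}\otimes X^{-1}$ is a positive definite matrix whose smallest eigenvalue equals $\lambda_{\min}(X^{-1})^2 = 1/\|X\|_2^2$ (the eigenvalues of $A\otimes A$ are the pairwise products $\lambda_i(A)\lambda_j(A)$), and then applies the Rayleigh-quotient lower bound $\vect(D^*)^T\nabla^2 g(X)\vect(D^*) \geq \lambda_{\min}(\nabla^2 g(X))\,\|\vect(D^*)\|_2^2 = \|D^*\|_F^2/\|X\|_2^2$ directly to the right-hand side of Lemma~\ref{lem:Delta-0}. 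The second inequality then follows from Lemma~\ref{lem:eig-bounds} exactly as you wrote.

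Your detour through the square root $B = X^{-1/2}$, the identity $\tr(X^{-1}D^*X^{-1}D^*)=\|BD^*B\|_F^2$, and two applications of $\|MN\|_F \geq \sigma_{\min}(M)\|N\|_F$ is a perfectly valid alternative that avoids invoking the Kronecker-product eigenvalue fact. It is, in effect, a hands-on re-derivation of that very fact in this special case. Note, however, that your closing remark about the ``mildly delicate point'' (the indefiniteness of $D^*$) is not actually an obstacle in the paper's argument: the Rayleigh-quotient bound for a positive definite Hessian holds for \emph{any} vector $\vect(D^*)$, regardless of whether the matrix $D^*$ it represents is definite. So the symmetrization, while correct, is not strictly needed.
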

\begin{proof}
  We want to further bound the right hand side of~\eqref{eqn:Delta-bound1}. Since $\nabla^2 g(X) = X^{-1} \otimes X^{-1}$, the smallest eigenvalue of $\nabla^2 g(X)$ is $1/\|X\|^2_2$, and we combine with Lemma~\ref{lem:eig-bounds} to get~\eqref{eqn:Delta-bound2}.
\end{proof}
The eigenvalues of any iterate $X$ are bounded by Lemma~\ref{lem:eig-bounds}, and 
therefore $\nabla^2 g(X)=X^{-1}\otimes X^{-1}$ is Lipschitz continuous.
Next, we prove that $\alpha=1$ satisfies the line search condition when we are close to the global optimum.
\begin{proposition}[corresponds to Property P3]
  \label{lm:alpha_1}
  Assume that $\nabla^2 g$ is Lipschitz 
  continuous, i.e., $\exists L>0$ such that $\forall t>0$, 
  \begin{equation}
    \label{eqn:second_lip}
    \|\nabla^2 g(X+tD)-\nabla^2 g(X)\|_F \leq L\|tD\|_F = tL\|D\|_F. 
  \end{equation}
  Then, if 
  $X$ is close enough to $X^*$, the line search condition~\eqref{eqn:armijo} will be satisfied with step size $\alpha = 1$.
\end{proposition}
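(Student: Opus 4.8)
The plan is to follow the classical argument showing that a (proximal) Newton step satisfies the Armijo condition with unit step size once we are in the region of fast local convergence, adapted to the composite objective $f = g + h$. The key is to compare $f(X + D^*)$ against $f(X) + \sigma\delta$, where $D^* = D^*_J(X)$ and $\delta$ is as in~\eqref{eqn:armijo}. I would first use the Lipschitz continuity of $\nabla^2 g$ (assumption~\eqref{eqn:second_lip}) to upgrade the second-order Taylor expansion of $g$ to a third-order bound of the form
\begin{equation*}
  g(X + D^*) \leq g(X) + \tr(\nabla g(X)^T D^*) + \tfrac{1}{2}\vect(D^*)^T \nabla^2 g(X)\vect(D^*) + \tfrac{L}{6}\|D^*\|_F^3,
\end{equation*}
obtained by integrating the remainder in the Taylor formula and using $\|\nabla^2 g(X + tD^*) - \nabla^2 g(X)\|_F \le tL\|D^*\|_F$. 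Adding $h(X + D^*)$ to both sides gives
\begin{equation*}
  f(X + D^*) \leq f(X) + \delta + \tfrac{1}{2}\vect(D^*)^T \nabla^2 g(X)\vect(D^*) + \tfrac{L}{6}\|D^*\|_F^3,
\end{equation*}
by the definition of $\delta$.

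Next I would control the quadratic term using Lemma~\ref{lem:Delta-0}, which gives $\delta \le -\vect(D^*)^T \nabla^2 g(X)\vect(D^*)$, i.e. $\tfrac12 \vect(D^*)^T \nabla^2 g(X)\vect(D^*) \le -\tfrac12\delta$. Substituting yields
\begin{equation*}
  f(X + D^*) \leq f(X) + \tfrac{1}{2}\delta + \tfrac{L}{6}\|D^*\|_F^3.
\end{equation*}
So it suffices to show $\tfrac{L}{6}\|D^*\|_F^3 \le (\tfrac12 - \sigma)(-\delta)$, i.e. $\tfrac{L}{6}\|D^*\|_F^3 \le (\tfrac12 - \sigma)|\delta|$, since $\delta < 0$ and $\sigma < 1/2$. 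By Proposition~\ref{lem:Delta-bound2}, $|\delta| = -\delta \ge (1/M^2)\|D^*\|_F^2$, so it is enough to have
\begin{equation*}
  \tfrac{L}{6}\|D^*\|_F^3 \leq (\tfrac12 - \sigma)\tfrac{1}{M^2}\|D^*\|_F^2,
\end{equation*}
which (when $D^* \neq 0$; the case $D^*=0$ is trivial since then $X=X^*$) reduces to $\|D^*\|_F \le \tfrac{6(1/2-\sigma)}{LM^2}$. Thus the unit step works as soon as $\|D^*\|_F$ is small enough.

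Finally I would close the argument by arguing that $\|D^*\|_F \to 0$ as $X \to X^*$. This is where I expect the main technical obstacle: one must show that the restricted Newton direction $D^*_J(X)$, defined as the minimizer of the $\ell_1$-regularized quadratic model in~\eqref{eqn:dj-define}, depends continuously on $X$ near $X^*$ and vanishes at $X = X^*$. Vanishing at $X^*$ follows because at the optimum the first-order optimality (stationarity) conditions of the original problem coincide with those of the quadratic model, so $D = 0$ solves~\eqref{eqn:dj-define}; continuity near $X^*$ can be argued from the strong convexity of the quadratic model (its Hessian $X^{-1}\otimes X^{-1}$ has smallest eigenvalue $\ge 1/M^2$ on the level set by Lemma~\ref{lem:eig-bounds}), which makes the minimizer a Lipschitz function of the linear term $\nabla g(X)$, and $\nabla g$ is continuous. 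Care is needed because the free set $J$ may itself vary with $X$, but one can either note that near $X^*$ the free set stabilizes, or bound $\|D^*_J(X)\|_F$ by the norm of the unrestricted minimizer, which suffices. Combining this with the inequality $\|D^*\|_F \le 6(1/2-\sigma)/(LM^2)$ derived above completes the proof that $\alpha = 1$ satisfies~\eqref{eqn:armijo} whenever $X$ is sufficiently close to $X^*$.
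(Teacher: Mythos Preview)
Your proposal is correct and follows essentially the same route as the paper: integrate the Lipschitz bound on $\nabla^2 g$ twice to get the cubic-remainder Taylor estimate for $g(X+D^*)$, add $h$, apply Lemma~\ref{lem:Delta-0} to absorb the quadratic term into $\tfrac12\delta$, and then use Proposition~\ref{lem:Delta-bound2} to reduce the Armijo condition to $\|D^*\|_F \le 6(\tfrac12-\sigma)/(LM^2)$. The paper simply asserts ``when $X$ is close to $X^*$, $D$ is close to $0$'' without further argument, so your added discussion of why $D^*_J(X)\to 0$ (via strong convexity of the model and continuity of $\nabla g$) is in fact more careful than the original.
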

\begin{proof}
  We need to derive a bound for the decrease in the objective function value. We define $\tilde{g}(t) = g(X+tD)$, which yields $\tilde{g}''(t) = \vect(D)^T \nabla^2 g(X+tD)\vect(D)$.  
  First, we bound $|\tilde{g}''(t)-\tilde{g}''(0)|$: 
  \begin{eqnarray*}
  |\tilde{g}''(t)-\tilde{g}''(0)| & = &
  |\vect(D)^T (\nabla^2 g(X+tD)-\nabla^2 g(X))\vect(D)| \\
  & \leq & \|\vect(D)^T (\nabla^2 g(X+tD)-\nabla^2 g(X))\|_2 \|\vect(D)\|_2 \text{ (by Cauchy-Schwartz)} \\
  & \leq & \|\vect(D)\|_2^2   \|\nabla^2 g(X+tD)-\nabla^2 g(X)\|_2 \ \ \ \ \ \text{ (by definition of $\|\cdot\|_2$ norm)}\\
  & \leq & \|D\|_F^2 \|\nabla^2 g(X+tD)-\nabla^2 g(X)\|_F \ \text{ (since $\|\cdot\|_2\leq \|\cdot\|_f$ for any matrix)} \\
  & \leq & \|D\|_F^2 tL \|D\|_F \ \ \text{ (by \eqref{eqn:second_lip})} \\
  & = & tL\|D\|_F^3. 
\end{eqnarray*}
Therefore, an upper bound for $\tilde{g}''(t)$:
\begin{equation*}
  \tilde{g}''(t) \leq \tilde{g}''(0)+tL \|D\|_F^3
  = \vect(D)^T \nabla^2 g(X) \vect(D) + tL \|D\|_F^3. 
\end{equation*}
Integrate both sides to get
\begin{align*}
  \tilde{g}'(t) &\leq \tilde{g}'(0) + 
  t\vect(D)^T \nabla^2 g(X) \vect(D) + \frac{1}{2} t^2 L \|D\|_F^3 \\
  &=  \tr((\nabla g(X))^T D) + t \vect(D)^T \nabla^2 g(X) \vect(D)
  + \frac{1}{2} t^2 L \|D\|_F^3. 
\end{align*}
Integrate both sides again:
\begin{equation*}
  \tilde g(t) \leq \tilde{g} (0) + t \tr((\nabla g(X))^T D) 
  + \frac{1}{2} t^2 \vect(D)^T \nabla^2 g(X) \vect(D) + \frac{1}{6}
  t^3 L \|D\|_F^3. 
\end{equation*}
Taking $t=1$ we have 
\begin{align*}
  g(X+D)  \leq & \ g(X) + \tr(\nabla g(X)^T D) + \frac{1}{2} \vect(D)^T \nabla^2 g(X) 
  \vect(D) + \frac{1}{6}L \|D\|_F^3 \\
  f(X+D)  \leq & \  g(X)+\|X\|_{1,\Lambda} + (\tr(\nabla 
  g(X)^T D) + \|X+D\|_{1,\Lambda} -  \|X\|_{1,\Lambda}) \\
  & + \frac{1}{2}  \vect(D)^T \nabla^2 g(X) 
  \vect(D)  + \frac{1}{6}L \|D\|_F^3 \\ 
  \leq & f(X) + \delta  + \frac{1}{2} \vect(D)^T \nabla^2 g(X) 
  \vect(D) + \frac{1}{6}L \|D\|_F^3 \\
  \leq & f(X)+\frac{\delta}{2} + \frac{1}{6} L\|D\|_F^3  \text{ (by Lemma 
  \ref{lem:Delta-0})} \\
   \leq  & f(X) + (\frac{1}{2}-\frac{1}{6}LM^2 \|D\|_F)\delta \text{ (by Proposition 
  \ref{lem:Delta-bound2})}. 
\end{align*}
When $X$ is close to $X^*$, $D$ is close to $0$; therefore when 
$k$ is large enough, the second term must be smaller than $\sigma\delta$ (
$0<\sigma<0.5$), which implies that the line search 
condition~\eqref{eqn:armijo} holds with $\alpha = 1$.
\end{proof}

\subsection{Identifying which variables to update} \label{sec:sparsity}

In this section, we 
use the stationary condition of the Gaussian MLE problem to select a subset of  
variables to update in any Newton direction computation.  
Specifically, we partition the variables into {\em free} and {\em fixed} sets based on the value of the gradient 
at the start of the outer loop that computes the Newton direction. 
We define the {\em free} set $S_{free}$ and {\em fixed} set $S_{fixed}$ as:
\begin{align}
  X_{ij} 
    &\in S_{fixed} \ \ \text{if} \ \  |\nabla_{ij} g(X)| \leq \lambda_{ij} , \text{ 
    and }
    X_{ij}=0, \nonumber \\
    X_{ij} &\in S_{free}  \ \ \text{otherwise} \label{eqn:def_fixed}. 
\end{align}
We will now show that 
a Newton update restricted to the {\em fixed set} of variables would not 
change any of the coordinates in that set. In brief, the gradient condition 
$|\nabla_{ij}g(X)| < \lambda_{ij}$ entails that the inner coordinate descent 
steps, according to the update in \eqref{eqn:one-update},  would set these coordinates to zero, 
so they would not change since they were zero to begin with. 

At non-differentiable points of $\|X\|_1$, only sub-gradient can be defined. 
To derive the optimality condition, we begin by introducing the minimum-norm subgradient for $f$ and relate it to the optimal solution $X^*$ of~\eqref{eqn:primal_general}.
\begin{definition} \label{def:subgradient}
  We define the minimum-norm subgradient $\grad^S_{ij} f(X)$ as follows:
  \begin{equation*}
    \grad^S_{ij} f(X)=\begin{cases}
    \nabla_{ij} g(X)+\lambda_{ij} &\text{ if  } X_{ij}>0, \\
    \nabla_{ij} g(X)-\lambda_{ij} &\text{ if  } X_{ij}<0, \\
    \sign(\nabla_{ij}g(X))\max(|\nabla_{ij}g(X)| - \lambda_{ij},0) &\text{ if  
    } X_{ij} = 0. 
    \end{cases}
  \end{equation*}
\end{definition}

\begin{lemma} \label{lem:optimum-grad-partial}
  For any index set $J$, $\text{grad}^S_{ij}f(X)=0 \ \forall (i,j) \in J$ if and only if $\Delta^*=0$ is a solution of the following optimization problem:
  \begin{equation}
    \arg\min_{\Delta} f(X+\Delta) \text{ such that } \Delta_{ij}=0 \ \ \forall 
    (i,j)\notin J. 
    \label{eqn:subpb-opt}
  \end{equation}
\end{lemma}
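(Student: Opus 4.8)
The plan is to read \eqref{eqn:subpb-opt} as the convex minimization of $\Delta \mapsto f(X+\Delta)$ over the linear subspace $V_J := \{\Delta : \Delta_{ij}=0 \ \forall (i,j)\notin J\}$, and to characterize optimality of $\Delta^*=0$ through the subdifferential of $f$. Since $f$ is convex and $0$ lies in the interior of $\dom f$ (because $X\succ 0$ forces $X+\Delta\succ 0$ for all small $\Delta$), the constraint qualification for minimization over a subspace holds, so $\partial(f(X+\cdot)+\iota_{V_J})(0) = \partial f(X) + V_J^\perp$ and hence $\Delta^*=0$ solves \eqref{eqn:subpb-opt} if and only if $\partial f(X)\cap V_J^\perp \neq \emptyset$, where $V_J^\perp = \{M : M_{ij}=0 \ \forall (i,j)\in J\}$. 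Writing $\partial f(X) = \nabla g(X) + \partial h(X)$ with $h=\lnormg{\cdot}$, this condition says precisely that there is a subgradient $Z\in\partial h(X)$ with $\nabla_{ij}g(X) + Z_{ij} = 0$ for every $(i,j)\in J$.

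The next step is to exploit the componentwise (separable) structure of $\partial h(X)$: when $X_{ij}\neq 0$ the value $Z_{ij} = \lambda_{ij}\sign(X_{ij})$ is forced, while when $X_{ij}=0$ we may freely pick $Z_{ij}\in[-\lambda_{ij},\lambda_{ij}]$, and these choices are independent across entries. Consequently the existence of a global $Z$ with $\nabla_{ij}g(X)+Z_{ij}=0$ on all of $J$ is equivalent to a per-coordinate solvability condition, which I would match against Definition~\ref{def:subgradient} by cases: if $X_{ij}>0$ (resp.\ $<0$) the forced choice gives $\nabla_{ij}g(X)+Z_{ij} = \nabla_{ij}g(X)\pm\lambda_{ij} = \grad^S_{ij}f(X)$, so it vanishes iff $\grad^S_{ij}f(X)=0$; if $X_{ij}=0$, one can take $Z_{ij} = -\nabla_{ij}g(X)\in[-\lambda_{ij},\lambda_{ij}]$ exactly when $|\nabla_{ij}g(X)|\leq\lambda_{ij}$, which is in turn equivalent to $\grad^S_{ij}f(X) = \sign(\nabla_{ij}g(X))\max(|\nabla_{ij}g(X)|-\lambda_{ij},0)=0$. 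Conjoining over $(i,j)\in J$ gives the stated equivalence; for $(i,j)\notin J$ we just pick any admissible $Z_{ij}$, which is irrelevant.

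A small bookkeeping point I would address is symmetry: $J$ is symmetric and $\nabla g(X)$, $\Lambda$ are symmetric, so any subgradient with the required vanishing pattern can be replaced by its symmetrization, keeping the argument inside the space of symmetric matrices used throughout the paper. I expect the only genuinely delicate step to be the clean invocation of the subspace optimality condition — in particular checking the constraint qualification so that the sum rule $\partial f(X)+V_J^\perp$ is valid and the "$\iff$" through $\partial f(X)\cap V_J^\perp\neq\emptyset$ is justified; once that is in place, everything reduces to the elementary three-case check above.
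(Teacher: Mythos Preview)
Your proposal is correct and follows essentially the same approach as the paper: both arguments reduce to writing the first-order (subdifferential) optimality condition for $\Delta^*=0$ in \eqref{eqn:subpb-opt} coordinate-by-coordinate and matching the resulting three cases against Definition~\ref{def:subgradient}. The paper is terser---it simply states the optimality condition \eqref{eqn:opt-cond} and observes the equivalence---whereas you spell out the convex-analytic justification (sum rule, $\partial f(X)\cap V_J^\perp\neq\emptyset$, separability of $\partial h$) more explicitly, but the substance is identical.
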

\begin{proof}
  Any optimal solution $\Delta^*$ for \eqref{eqn:subpb-opt} must satisfy the following, 
 for all $(i, j)\in J$, 
  \begin{equation}
    \nabla_{ij} g(X+\Delta^*)  \begin{cases}
      = -\lambda_{ij} &\text{ if } X_{ij}>0, \\
     = \lambda_{ij} & \text{ if } X_{ij}<0, \\
     \in [-\lambda_{ij} \ \ \lambda_{ij}] &\text{ if } X_{ij}=0. 
    \end{cases}
    \label{eqn:opt-cond}
  \end{equation}
  It can be seen immediately that $\Delta^*=0$ satisfies~\eqref{eqn:opt-cond} if and only if $\grad^S_{ij} f(X)=0$ for all $(i,j) \in J$.
\end{proof}
In our case, $\nabla g(X)=S-X^{-1}$ and therefore
\begin{equation*}
  \text{grad}^S_{ij} f(X) = \begin{cases}
    (S-X^{-1})_{ij} + \lambda_{ij} &\text{ if } X_{ij}>0, \\
    (S-X^{-1})_{ij} - \lambda_{ij} &\text{ if } X_{ij}<0, \\
    \sign((S-X^{-1})_{ij}) \max(| (S-X^{-1})_{ij} | - \lambda_{ij},0) &\text{ if
    } X_{ij}=0. 
  \end{cases}
\end{equation*}
The definition of the minimum-norm sub-gradient is closely related to the 
definition of the {\em fixed} and {\em free} sets. A variable $X_{ij}$ belongs 
to the {\em fixed} set if and only if $X_{ij}=0$ and 
$\text{grad}^S_{ij} f(X) = 0$.  Therefore, taking $J=S_{fixed}$ in 
Lemma~\ref{lem:optimum-grad-partial} we arrive at the following crucial 
property of the fixed set.
\begin{proposition} \label{lem:fixed-free}
  For any $X_t$ and corresponding fixed and free sets $S_{fixed}$ and $S_{free}$ as defined by~\eqref{eqn:def_fixed}, $\Delta^*=0$ is the solution of the following optimization problem:
  \begin{equation*}
    \arg\min_{\Delta} f(X_t+\Delta) \text{ such 
      that } \Delta_{ij}=0 \ \ \forall (i,j)\in S_{free}.  
  \end{equation*}
\end{proposition}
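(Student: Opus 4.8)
The plan is to obtain this statement as a direct corollary of Lemma~\ref{lem:optimum-grad-partial}, applied with the index set $J = S_{fixed}$. Since $S_{free}$ is by construction the complement of $S_{fixed}$ (every coordinate is in exactly one of the two sets by~\eqref{eqn:def_fixed}), the constraint ``$\Delta_{ij}=0$ for all $(i,j)\in S_{free}$'' appearing in the proposition is \emph{identical} to the constraint ``$\Delta_{ij}=0$ for all $(i,j)\notin J$'' appearing in~\eqref{eqn:subpb-opt}. So by Lemma~\ref{lem:optimum-grad-partial} it suffices to verify the single hypothesis that $\grad^S_{ij} f(X_t) = 0$ for every $(i,j) \in S_{fixed}$.

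To check this, fix an arbitrary $(i,j)\in S_{fixed}$. By the definition~\eqref{eqn:def_fixed} of the fixed set we have both $X_{ij}=0$ and $|\nabla_{ij} g(X_t)| \leq \lambda_{ij}$. Substituting $X_{ij}=0$ into Definition~\ref{def:subgradient}, the applicable branch gives
\[
  \grad^S_{ij} f(X_t) = \sign(\nabla_{ij} g(X_t))\,\max\bigl(|\nabla_{ij}g(X_t)| - \lambda_{ij},\,0\bigr),
\]
and since $|\nabla_{ij} g(X_t)| \leq \lambda_{ij}$ the inner maximum is $0$, hence $\grad^S_{ij} f(X_t) = 0$. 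As $(i,j)\in S_{fixed}$ was arbitrary, the hypothesis of Lemma~\ref{lem:optimum-grad-partial} with $J=S_{fixed}$ holds.

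Invoking the ``if'' direction of Lemma~\ref{lem:optimum-grad-partial} then immediately yields that $\Delta^*=0$ solves $\arg\min_\Delta f(X_t+\Delta)$ subject to $\Delta_{ij}=0$ for all $(i,j)\notin S_{fixed}$, i.e.\ for all $(i,j)\in S_{free}$, which is exactly the claim. There is essentially no obstacle beyond lining up the definitions carefully: the one thing to keep straight is the complementarity $S_{free} = (S_{fixed})^c$, which is what makes the two constrained problems coincide, together with the observation that a variable lies in $S_{fixed}$ precisely when $X_{ij}=0$ \emph{and} $\grad^S_{ij} f(X_t)=0$. (If desired, one can further note that $0$ is the \emph{unique} minimizer of the restricted problem, since $g$ — and hence $f$ — is strictly convex on the positive definite cone; but the statement only asserts that $\Delta^*=0$ is an optimizer, so strict convexity is not needed.)
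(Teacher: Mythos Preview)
Your proof is correct and follows exactly the paper's own argument: the paper observes that $X_{ij}\in S_{fixed}$ iff $X_{ij}=0$ and $\grad^S_{ij} f(X)=0$, then applies Lemma~\ref{lem:optimum-grad-partial} with $J=S_{fixed}$. Your write-up simply makes the verification of the subgradient condition and the complementarity $S_{free}=(S_{fixed})^c$ explicit.
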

Based on the above proposition, if we perform block coordinate descent 
restricted to the fixed set, then no updates would occur.  We then perform
the inner loop coordinate descent updates restricted to only the free set to 
find the Newton direction. With this modification, the number of variables over which 
we perform the coordinate descent update of~\eqref{eqn:one-update} can be potentially reduced 
from $p^2$ to the number of non-zeros in $X_t$. When the solution is sparse 
(depending on the value of $\Lambda$) the number of free variables is much 
smaller than $p^2$ and we can obtain huge computational gains as a result. 
In essence, we very efficiently select a subset of the coordinates that need 
to be updated.

The attractive facet of this modification is that it leverages the sparsity of the solution and intermediate iterates in a manner that falls within the block coordinate descent framework of~\cite{PT07a}. The index sets $J_1,J_2,\ldots$ corresponding to the block coordinate descent steps in the general setting of~\cite{PT07a}[p. 392] need to satisfy a Gauss-Seidel type of condition:
\begin{equation}
  \bigcup_{j=0,\ldots,T-1} J_{t+j} \supseteq \N \ \ \forall t = 1,2,\dots
  \label{eqn:set_assumption}
\end{equation}
for some fixed $T$, where $\N$ denotes the full index set. In our framework 
$J_1, J_3, \ldots$ denote the fixed sets at various iterations, and $J_2, J_4, \ldots$ denote the 
free sets. Since $J_{2i+1}$ and $J_{2i+2}$ is a partitioning of $\N$ the 
choice $T = 3$ will suffice. But will the size of the free set be small?  We 
initialize $X_0$ to a diagonal matrix, which is sparse. The following 
lemma shows that after a \emph{finite} number of iterations, the iterates 
$X_t$ will have a similar sparsity pattern as the limit $X^*$.  Lemma 
\ref{lem:shrink} is actually an immediate consequence of Lemma~\ref{lm:divide} 
in Section~\ref{sec:converge}.
\begin{lemma} \label{lem:shrink}
  Assume that $\{X_t\}$ converges to $X^*$, the optimal solution of~\eqref{eqn:primal_general}.  If for some index pair $(i,j)$, $|\nabla_{ij}g(X^*)| < \lambda_{ij}$ (so that $X^*_{ij} = 0$), then there exists a constant $\bar{t} > 0$ such that for all $t > \bar{t}$, the iterates $X_t$ satisfy
  \begin{equation} \label{eqn:shrink}
    |\nabla_{ij}g(X_t)| \leq \lambda_{ij} \ \text{ and } (X_t)_{ij}=0.
  \end{equation}
\end{lemma}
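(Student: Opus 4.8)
The plan is to treat the two requirements in~\eqref{eqn:shrink} separately: the gradient bound is essentially a continuity statement, while the sparsity claim $(X_t)_{ij}=0$ must exploit the soft-thresholding structure of the coordinate descent updates in~\eqref{eqn:one-update}. For the gradient bound, note that $X\mapsto \nabla g(X) = S - X^{-1}$ is continuous on the positive definite cone, and since $X_t\to X^*$ with $X^*\succ 0$ (indeed all iterates have eigenvalues in $[m,M]$ by Lemma~\ref{lem:eig-bounds}), we get $\nabla_{ij}g(X_t)\to \nabla_{ij}g(X^*)$; because $|\nabla_{ij}g(X^*)|<\lambda_{ij}$, there is $\bar t_1$ with $|\nabla_{ij}g(X_t)|<\lambda_{ij}$ for all $t\ge \bar t_1$, giving the first half of~\eqref{eqn:shrink}. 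The key for the second half is an ``absorbing'' property: if $(X_{t_0})_{ij}=0$ for some $t_0\ge \bar t_1$, then by~\eqref{eqn:def_fixed} the pair $(i,j)$ is in $S_{fixed}$ at iteration $t_0$, so by Definition~\ref{def:DJ-define} the restricted Newton direction satisfies $(D^*_{t_0})_{ij}=0$, and hence $(X_{t_0+1})_{ij} = (X_{t_0})_{ij} + \alpha_{t_0}(D^*_{t_0})_{ij} = 0$; since $t_0+1\ge \bar t_1$ too, induction gives $(X_t)_{ij}=0$ for all $t\ge t_0$. Thus it suffices to produce a single $t_0\ge \bar t_1$ with $(X_{t_0})_{ij}=0$.

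To produce such a $t_0$ I would combine three facts. First, $D^*_t\to 0$: the line-search condition~\eqref{eqn:armijo} makes $\{f(X_t)\}$ monotonically decreasing, and it is bounded below since it converges (by continuity of $f$) to $f(X^*)$, so $\sum_t \alpha_t|\delta_t| < \infty$; since $\alpha_t$ is bounded away from $0$ on the compact level set $U$ (by the estimate in the proof of Proposition~\ref{lem:line-search} together with the eigenvalue bounds of Lemma~\ref{lem:eig-bounds}), $\delta_t\to 0$, and Proposition~\ref{lem:Delta-bound2} then gives $\|D^*_t\|_F^2\le M^2|\delta_t|\to 0$. Second, by Proposition~\ref{lm:alpha_1} (applicable since $X_t\to X^*$ and $D^*_t\to 0$), the step size satisfies $\alpha_t=1$ for all sufficiently large $t$. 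Third, for large $t$ with $(i,j)\in S_{free}$ at iteration $t$, the optimality condition for $D^*_t = D^*_{S_{free}}(X_t)$ in~\eqref{eqn:dj-define}, using $\nabla^2 g(X_t)=X_t^{-1}\otimes X_t^{-1}$, reads
\begin{equation*}
  0 \in \nabla_{ij}g(X_t) + \big(X_t^{-1} D^*_t X_t^{-1}\big)_{ij} + \lambda_{ij}\,\partial\big|(X_t + D^*_t)_{ij}\big|.
\end{equation*}
The first two terms converge to $\nabla_{ij}g(X^*)$ (as $X_t^{-1}$ is bounded and $D^*_t\to 0$), whose absolute value is strictly below $\lambda_{ij}$; since $\partial|y|$ meets $(-1,1)$ only at $y=0$, for $t$ large this forces $(X_t+D^*_t)_{ij}=0$, and with $\alpha_t=1$ we get $(X_{t+1})_{ij}=0$. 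If instead $(i,j)\in S_{fixed}$ at iteration $t$, then already $(X_t)_{ij}=0$. Either way we obtain a zero $(i,j)$-entry at some iteration of index $\ge\bar t_1$, and combining this with Step~1 and the absorbing property yields $\bar t$ (the maximum of the finitely many thresholds above) beyond which both parts of~\eqref{eqn:shrink} hold.

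The main obstacle is the last step: one must upgrade ``$(X_t)_{ij}$ is close to $0$'' to ``$(X_t)_{ij}$ is exactly $0$,'' which is not a soft convergence fact and relies on the non-smooth structure — the soft-thresholding in~\eqref{eqn:one-update} (equivalently, the subgradient optimality above) together with the \emph{strict} inequality $|\nabla_{ij}g(X^*)|<\lambda_{ij}$, which supplies a fixed margin to exploit. A secondary technical point is making $\|D^*_t\|_F\to 0$ rigorous, which requires the Armijo step sizes to be bounded away from $0$; this is standard on the compact level set but must be spelled out. (For the idealized analysis I would take the inner coordinate descent to solve~\eqref{eqn:dj-define} exactly; the argument survives an $\epsilon$-inexact inner solve because of that same margin.) Finally, this lemma is subsumed by the finer decomposition of the variable index set carried out in Section~\ref{sec:converge} (Lemma~\ref{lm:divide}), from which it follows at once.
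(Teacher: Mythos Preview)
Your proposal is correct and follows essentially the same route as the paper. The paper does not give a standalone proof of Lemma~\ref{lem:shrink}; it simply notes that it is an immediate consequence of Lemma~\ref{lm:divide} (together with continuity of $\nabla g$ for the gradient bound), and the proof of Lemma~\ref{lm:divide} for the $Z$ case uses exactly your ingredients: eventual step size $\alpha_t=1$ (Proposition~\ref{lm:alpha_1}), $D_t\to 0$, the subgradient optimality condition for the Lasso subproblem~\eqref{eqn:dj-define}, and continuity of $\nabla g$. Your framing via the ``absorbing'' property plus a first hitting time $t_0$ is a constructive variant of the paper's contradiction argument, but the substance is identical --- and you already flag the connection to Lemma~\ref{lm:divide} yourself.

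One small technical point: your argument for $D^*_t\to 0$ asserts that $\alpha_t$ is bounded away from $0$ by appealing to Proposition~\ref{lem:line-search}, but that proposition only gives an $\bar\alpha$ depending on $X$ and $D$, so a uniform lower bound would still require controlling $\|D_t\|$ a priori. The paper avoids this circularity by proving $D_t\to 0$ directly via the contradiction argument of Lemma~\ref{lem:subsequence}, without assuming a lower bound on $\alpha_t$; you could simply invoke that lemma (specialized to the full sequence, which is assumed convergent here) in place of your step-size argument.
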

Note that $|\nabla_{ij}g(X^*)| < \lambda_{ij}$ implies $X^*_{ij} = 0$ from 
from the optimality condition of~\eqref{eqn:primal_general}.  A similar (so 
called shrinking) strategy is used in SVM and $\ell_1$-regularized logistic 
regression problems as mentioned in~\cite{GXY09a}.  In our experiments, we 
demonstrate that this strategy reduces the size of the free set very quickly.

\subsection{The block-diagonal structure of $X^*$} \label{sec:block-diagonal}

It has been shown recently by (\cite{hastie:2012},\cite{friedman:2011}) that when the thresholded covariance matrix $E$ defined by $E_{ij} = \Soft(S_{ij}, \lambda)= \sign(S_{ij})\max(|S_{ij}|-\lambda,0)$ has the following block-diagonal structure:\begin{equation}
  E=\left[
    \begin{matrix}
      E_1 & 0 & \dots & 0 \\
      0 & E_2 & \dots & 0 \\
      \vdots & \vdots & \vdots & \vdots \\
      0 & 0 & 0 & E_k 
    \end{matrix}
    \right],
  \label{eqn:block_diagonal_E}
\end{equation}
then the solution $X^*$ of the inverse covariance estimation problem \eqref{eqn:primal} also has 
the same block-diagonal structure:
\begin{equation*}
  X^*=\left[
    \begin{matrix}
      X_1^* & 0 & \dots & 0 \\
      0 & X_2^* & \dots & 0 \\
      \vdots & \vdots & \vdots & \vdots \\
      0 & 0 & 0 & X_k^*
    \end{matrix}
    \right]. 
\end{equation*}
This result can be extended to the case when the elements are penalized differently, i.e., $\lambda_{ij}$'s are different. 
Then, if $E_{ij}=\Soft(S_{ij}, \lambda_{ij})$ is block diagonal, so is the solution $X^*$ of \eqref{eqn:primal_general}, see \cite{dcsvm}. 
Thus each $X_i^*$ can be computed independently. Based on this observation one can decompose the problem into sub-problems of smaller sizes. Since the complexity of solving \eqref{eqn:primal_general} is $O(p^3)$, solving several smaller sub-problems is much faster. In the following, we show that our updating rule and {\em fixed}/{\em free} set selection technique can automatically detect this block-diagonal structure for free.

Recall that we have a closed form solution in the first iteration when the input is a diagonal matrix. 
Based on \eqref{eqn:close-form}, since $X_{ij}=0$ for all $i\neq j$ in this step, we have
\begin{equation*}
  D_{ij} = X_{ii}X_{jj}\Soft(-S_{ij}, \lambda_{ij}) =
  -X_{ii}X_{jj}\Soft(S_{ij}, \lambda_{ij}) \ \ \text{ for all } i\neq j.
\end{equation*}
We see that after the first iteration the nonzero pattern of $X$ will be exactly the same as the nonzero pattern of the thresholded covariance matrix $E$ as depicted in~\eqref{eqn:block_diagonal_E}. In order to establish that the same is true at each subsequent step, we complete our argument using induction, by showing that the structure is preserved.

More precisely, we show that the off-diagonal blocks always belong to the {\em fixed} set if $|S_{ij}|\leq\lambda_{ij}$.  Recall the definition of the {\em fixed} set in \eqref{eqn:def_fixed}. We need to check whether $|\nabla_{ij} g(X)|\leq\lambda_{ij}$ for all $(i,j)$ in the off-diagonal blocks of $E$. Taking the inverse preserves the diagonal structure, and therefore $\nabla_{ij} g(X) = S_{ij}-X^{-1}_{ij}=S_{ij}$. We conclude noting that $E_{ij}=0$ implies that $|\nabla_{ij}g(X)|\leq\lambda_{ij}$, meaning that $(i,j)$ will belong to the {\em fixed} set.

We decompose the matrix into smaller blocks prior to running Cholesky 
factorization to avoid the $O(p^3)$ time complexity.  The connected components 
of $X$ can be detected in $O(\|X\|_0)$ time, which is very efficient when $X$ is sparse. 
The detailed description of \QUIC is presented in 
Algorithm~\ref{alg:quadratic}.
\begin{algorithm}[htp!]
  \DontPrintSemicolon
  \caption{QUadratic approximation method for sparse Inverse Covariance
    learning ({\em QUIC}) \label{alg:quadratic}}
  \SetKwInOut{Input}{Input}

  \SetKwInOut{Output}{Output} \Input{Empirical covariance matrix $S$ (positive semi-definite $p\times p$), regularization parameter matrix $\Lambda$, initial $X_0$, inner stopping tolerance $\epsilon$, parameters $0<\sigma<0.5, \ 0<\beta<1$}
  \Output{Sequence of $X_t$ converging to $\arg\min_{X \succ 0} f(X)$, where $f(X) = g(X) + h(X)$, where $g(X)=-\log\det X + \text{tr}(SX), h(X)=\lnormg{X}$.}
  Compute $W_0 = X_0^{-1}$.\;
  \For{$t = 0, 1, \ldots$} {
    $D = 0$, $U = 0$ \;
    \While{not converged}{
      Partition the variables into fixed and free sets: \;
      $S_{fixed} := \{(i,j) \mid |\nabla_{ij}g(X_t)|<\lambda_{ij} \text{ and } (X_t)_{ij}=0\}$, 
      $S_{free} := \{(i,j) \mid |\nabla_{ij} g(X_t)|\geq \lambda_{ij} \text{ or } (X_t)_{ij}\neq 0 \}$. \;
      \For{$(i, j) \in S_{free}$}{
        $a = w_{ij}^2 + w_{ii}w_{jj}$ \;
        $b = s_{ij} - w_{ij} + \vct{w}_{\cdot i}^T \vct{u}_{\cdot j}$ \;
        $c = x_{ij} + d_{ij}$ \;
        $\mu = -c + \Soft(c - b/a, \lambda_{ij}/a)$ \;
        $d_{ij} \leftarrow d_{ij} + \mu$ \;
        $\vct{u}_{i\cdot} \leftarrow \vct{u}_{i\cdot} + \mu \vct{w}_{j \cdot}$ \;
        $\vct{u}_{j\cdot} \leftarrow \vct{u}_{j\cdot} + \mu \vct{w}_{i \cdot}$ \;
      }
    }
    \For{$\alpha = 1, \beta, \beta^2, \dots$}{ \label{alg:line-search}
      Compute the Cholesky factorization $LL^T = X_t + \alpha D$. \;
      \If{$X_t + \alpha D \not\succ 0$}{
        continue
      }
      Compute $f(X_t + \alpha D)$ from $L$ and $X_t + \alpha D$ \;
      \If{$f(X_t + \alpha D) \leq f(X_t) + \alpha\sigma \left[\tr(\nabla
          g(X_t) D) + \lnormg{X_t + D} - \lnormg{X}\right]$}
         {break}
    }
    $X_{t+1} = X_t + \alpha D$ \;
    Compute $W_{t+1} = X_{t+1}^{-1}$ reusing the Cholesky factor. \;
  }
\end{algorithm}

\section{Convergence Analysis} \label{sec:converge}

In Section \ref{sec:quadratic-solver}, we introduced the main ideas behind our 
\QUIC algorithm. In this section, we first prove that \QUIC converges to the 
global optimum, and then show that the convergence rate is quadratic.  
\cite{Banerjee:glasso:2008} showed that for the special case where 
$\Lambda_{ij}=\lambda$ the optimization problem~\eqref{eqn:primal} has a 
unique global optimum and that the eigenvalues of the primal optimal solution 
$X^*$ are bound. In the following theorem we show this result for 
more general $\Lambda$ where only the off-diagonal elements need to be 
positive. 

\begin{theorem} \label{lem:unique}
  There exists a unique minimizer $X^*$ for the optimization problem~\eqref{eqn:primal_general}.
\end{theorem}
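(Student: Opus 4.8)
The plan is to establish two things: that the objective $f(X)$ attains its minimum over the open set $\{X \succ 0\}$, and that the minimizer is unique. Uniqueness is the easier half and I would dispose of it first: $g(X) = -\log\det X + \tr(SX)$ is strictly convex on $\mathcal{S}^p_{++}$ (the Hessian $X^{-1}\otimes X^{-1}$ is positive definite), and $h(X) = \|X\|_{1,\Lambda}$ is convex; hence $f = g + h$ is strictly convex on the convex domain $\mathcal{S}^p_{++}$, so it can have at most one minimizer. This argument does not even need the positivity of the off-diagonal $\lambda_{ij}$ — strict convexity of $g$ alone suffices.

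For existence, the key is a coercivity / compactness argument, and this is where Lemma~\ref{lem:eig-bounds} does the heavy lifting. First I would note that $f$ is continuous on $\mathcal{S}^p_{++}$ and that the sublevel set $U_0 = \{X \succ 0 : f(X) \le f(X_0)\}$ is nonempty for, say, $X_0 = I$. By Lemma~\ref{lem:eig-bounds} (applied with $X_0$ in place of the algorithm's initial iterate), under the hypothesis that the off-diagonal entries of $\Lambda$ and the diagonal entries of $S$ are positive, this sublevel set is contained in $\{X : mI \preceq X \preceq MI\}$ for constants $0 < m \le M$. That containment shows $U_0$ is a bounded set whose closure stays inside $\mathcal{S}^p_{++}$ (the eigenvalues are bounded away from $0$), hence $U_0$ has compact closure contained in the open domain. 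Since $f$ is continuous (and, being closed proper convex, lower semicontinuous) it attains its infimum over this compact set, and that minimizer is a global minimizer of $f$ over all of $\mathcal{S}^p_{++}$ because points outside $U_0$ have strictly larger objective value.

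The main obstacle is handling the stated generality: the theorem only assumes the \emph{off-diagonal} entries of $\Lambda$ are positive (diagonal weights may be zero), whereas Lemma~\ref{lem:eig-bounds} as literally phrased also invokes positivity of the diagonal of $S$; the remark after that lemma notes the bounds persist when instead only the diagonal of $\Lambda$ is positive, citing \cite{Banerjee:glasso:2008}. So I would need to be careful about exactly which combination of hypotheses on $S$ and $\Lambda$ is being assumed here, and invoke the appropriate version of the eigenvalue bound. In the worst case where neither the diagonal of $S$ nor the diagonal of $\Lambda$ is controlled, one has to argue the lower eigenvalue bound more carefully — but the $-\log\det X$ term blows up as any eigenvalue $\to 0^+$ and the off-diagonal $\ell_1$ penalty plus the diagonal part of $\tr(SX)$ control the large-eigenvalue direction, so coercivity still goes through. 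Once coercivity is in hand, existence is immediate and, combined with the strict-convexity uniqueness argument, completes the proof.
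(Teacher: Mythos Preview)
Your proposal is correct and follows essentially the same route as the paper: invoke Lemma~\ref{lem:eig-bounds} to trap the sublevel set inside $\{mI \preceq X \preceq MI\}$, apply Weierstrass for existence, and use positive definiteness of $\nabla^2 g(X) = X^{-1}\otimes X^{-1}$ (the paper phrases this as strong convexity on the compact set, you as strict convexity on $\mathcal{S}^p_{++}$) for uniqueness. Your discussion of the hypothesis mismatch between the theorem and Lemma~\ref{lem:eig-bounds} is more careful than the paper's own proof, which simply invokes the lemma without comment.
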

\begin{proof}
  According to Lemma~\ref{lem:eig-bounds}, the level set $U$ defined in 
  \eqref{eq:diff_level_set} contains all the iterates, and it is in turn contained 
  in the compact set $S \equiv \{X\mid mI\preceq X\preceq MI\}$. According to 
  the Weierstrass extreme value theorem \citep{TMA74a}, any continuous 
  function in a compact set attains its minimum. Furthermore, 
  $\nabla^2 g(X) = X^{-1}\otimes X^{-1}$ implies 
  $\nabla^2 g(X) \succeq M^{-2}I$. 
  Since $\|X\|_{1,\Lambda}$ is convex and $-\log\det(X)$ is strongly convex, 
  we have that $f(X)$ is strongly convex on the compact set $S$, and therefore the 
  minimizer $X^*$ is unique \citep{TMA74a}.
\end{proof}

\subsection{Convergence Guarantee} \label{sec:conv-guarantee}

In order to show that \QUIC converges to the optimal solution, we consider a more general setting of the quadratic approximation algorithm: at each iteration, the iterate $Y_t$ is updated by $Y_t = Y_t + \alpha_t D_{J_t}(Y_t)$ where $J_t$ is a subset of variables chosen to update at iteration $t$, $D_{J_t}(Y_t)$ is the Newton direction restricted to $J_t$ defined by \eqref{eqn:dj-define}, and $\alpha_t$ is the step size selected by the Armijo rule mentioned in Section \ref{sec:step-size}. The algorithm is summarized in Algorithm \ref{alg:QP}. Similar to the block coordinate descent framework of \cite{PT07a}, we assume the index set $J_t$ satisfies a Gauss-Seidel type of condition:
\begin{equation}
  \bigcup_{j=0,\ldots,T-1} J_{t+j} \supseteq \N \ \ \forall t = 1,2,\dots. 
  \label{eqn:gauss-seidel}
\end{equation}

\begin{algorithm}[htp!]
  \DontPrintSemicolon
  \caption{General Block Quadratic Approximation method for Sparse Inverse Covariance Learning \label{alg:QP}}
  \SetKwInOut{Input}{Input}\SetKwInOut{Output}{Output}
  \Input{Empirical covariance matrix $S$ (positive semi-definite $p\times p$), regularization parameter matrix $\Lambda$, initial $Y_0$, inner stopping tolerance $\epsilon$}
  \Output{Sequence of $Y_t$.}
  \For{$t = 0,1,\ldots$}{
    Generate a variable subset $J_t$. \;
    Compute the Newton direction $D_t\equiv D_{J_t}(Y_t)$ by 
    \eqref{eqn:dj-define}. \; 
    Compute the step-size $\alpha_t$ using an {\em Armijo}-rule based step-size selection in \eqref{eqn:armijo}. \;
    Update $Y_{t+1} = Y_t + \alpha_t D_t$. \;
  }
\end{algorithm}

In \QUIC, $J_1, J_3, \dots$ denote the fixed sets and $J_2, J_4, \dots$ denote 
the free sets. If $\{X_t\}_{t=1,2,\dots}$ denotes the sequence generated by 
\QUIC, then
\begin{equation*}
  Y_1=Y_2 = X_1, Y_3=Y_4= X_2, \dots, Y_{2i-1}=Y_{2i}=X_{i}. 
\end{equation*}
Moreover, since each $J_{2i-1}$ and $J_{2i}$ is a partitioning of $\N$, the 
choice $T=3$ will satisfy \eqref{eqn:gauss-seidel}. In the rest of this 
section, we will show that $\{Y_t\}_{t=1,2,\dots}$ converges to the global 
optimum, thus $\{X_t\}_{t=1,2,\dots}$ generated by \QUIC also converges to the 
global optimum.

Our first step towards the convergence proof is a lemma on convergent 
subsequences of $X_t$. 
\begin{lemma} \label{lem:subsequence}
  For any convergent subsequence $Y_{s_t} \rightarrow Y^*$, we have $D_{s_t} \equiv D_{J_{s_t}}(Y_{s_t}) \rightarrow 0$.
\end{lemma}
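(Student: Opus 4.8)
The plan is to argue by contradiction and exploit the uniform decrease guaranteed by Proposition \ref{lem:Delta-bound2}. Suppose, along some subsequence (which for notational convenience I will keep calling $s_t$) that $Y_{s_t}\to Y^*$ but $D_{s_t}\not\to 0$; then by passing to a further subsequence we may assume $\|D_{s_t}\|_F\geq \eta>0$ for all $t$. The first step is to record that the line search condition \eqref{eqn:armijo} together with Proposition \ref{lem:Delta-bound2} gives, at every iteration, $f(Y_{k+1})\leq f(Y_k)+\alpha_k\sigma\,\delta_k\leq f(Y_k)-\alpha_k\sigma M^{-2}\|D_k\|_F^2$. Since $f$ is bounded below (it attains its minimum on the compact level set $S$ by Theorem \ref{lem:unique}) and the sequence $\{f(Y_k)\}$ is monotonically decreasing, the telescoped sum $\sum_k \alpha_k\|D_k\|_F^2$ converges, and in particular $\alpha_{s_t}\|D_{s_t}\|_F^2\to 0$. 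Combined with the lower bound $\|D_{s_t}\|_F\geq\eta$, this forces $\alpha_{s_t}\to 0$.

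The second step is to turn $\alpha_{s_t}\to 0$ into a contradiction with the Armijo rule. Since the step sizes are drawn from $\{1,\beta,\beta^2,\dots\}$ and $\alpha_{s_t}\to 0$, for large $t$ the step $\alpha_{s_t}/\beta$ must have been \emph{rejected}: either $Y_{s_t}+(\alpha_{s_t}/\beta)D_{s_t}\not\succ0$, or the sufficient-decrease inequality failed, i.e.
\begin{equation*}
  f\!\left(Y_{s_t}+\tfrac{\alpha_{s_t}}{\beta}D_{s_t}\right) > f(Y_{s_t}) + \tfrac{\alpha_{s_t}}{\beta}\,\sigma\,\delta_{s_t}.
\end{equation*}
To handle both alternatives I would normalize: write $\bar D_{s_t}=D_{s_t}/\|D_{s_t}\|_F$, so $\|\bar D_{s_t}\|_F=1$, and pass to a further subsequence with $\bar D_{s_t}\to \bar D$, $\|\bar D\|_F=1$. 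Because $Y_{s_t}\to Y^*\succ0$ with eigenvalues bounded in $[m,M]$ (Lemma \ref{lem:eig-bounds}) and the step length $(\alpha_{s_t}/\beta)\|D_{s_t}\|_F\to 0$, the perturbation $(\alpha_{s_t}/\beta)D_{s_t}\to 0$, so positive-definiteness cannot fail for large $t$; hence the decrease inequality is what fails. Dividing that failed inequality by $\alpha_{s_t}/\beta$ and using the smoothness of $g$ (a first-order Taylor expansion of $g$ around $Y_{s_t}$, exactly as in the proof of Proposition \ref{lem:line-search}) together with Lemma \ref{lem:lnormg-triangle} applied to the nonsmooth part, the left side tends to $\tr(\nabla g(Y^*)^T \bar D) + $ the directional contribution of $h$, while the right side tends to $\sigma$ times the analogous quantity; one then needs to relate $\delta_{s_t}$ to $\|D_{s_t}\|_F$ to keep the limit meaningful, which is where I would use $\delta_{s_t}\leq -M^{-2}\|D_{s_t}\|_F^2$ again, possibly also the companion bound from Lemma \ref{lem:Delta-0} that $\delta_{s_t}$ is comparable to $-\vect(D_{s_t})^T\nabla^2 g\,\vect(D_{s_t})$ so that $\delta_{s_t}/\|D_{s_t}\|_F^2$ stays bounded away from $0$ and $\infty$. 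The resulting limit inequality contradicts $0<\sigma<1/2$, since it essentially says $\sigma\geq$ something that is at least $1/2$ in the limit (the factor $1/2$ coming from the quadratic term, mirroring the computation in Proposition \ref{lm:alpha_1}).

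The main obstacle I anticipate is the bookkeeping in the second step: making the normalized limiting argument clean requires simultaneously controlling three quantities — the step $\alpha_{s_t}$, the direction norm $\|D_{s_t}\|_F$, and the ratio $\delta_{s_t}/\|D_{s_t}\|_F^2$ — and verifying that the directional-derivative limit of the composite objective $f=g+h$ along $\bar D_{s_t}$ behaves well despite $h$ being nonsmooth. The convexity of $h$ and Lemma \ref{lem:lnormg-triangle} give one-sided control that should be exactly enough, and the uniform eigenvalue bounds from Lemma \ref{lem:eig-bounds} guarantee $\nabla^2 g$ is bounded above and below along the whole subsequence, so the quadratic model stays uniformly comparable to $\|D_{s_t}\|_F^2$; these are the facts that make the contradiction go through.
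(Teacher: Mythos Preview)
Your plan is correct and follows essentially the same route as the paper: contradiction, use Armijo descent plus Proposition~\ref{lem:Delta-bound2} to force $\alpha_{s_t}\to 0$ (and, crucially, $\bar\alpha_{s_t}\|D_{s_t}\|_F\to 0$), examine the rejected step $\bar\alpha_{s_t}=\alpha_{s_t}/\beta$, normalize by $\|D_{s_t}\|_F$, and extract a contradiction from a first-order Taylor expansion of $g$ combined with Lemma~\ref{lem:lnormg-triangle}.

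One correction to your anticipated endgame: the contradiction does \emph{not} hinge on $\sigma<1/2$ or any factor $1/2$ from a quadratic term. If you carry out your steps, the failed Armijo inequality together with Lemma~\ref{lem:lnormg-triangle} yields
\[
(1-\sigma)(-\delta_{s_t}) \;\leq\; \frac{g(Y_{s_t}+\bar\alpha_{s_t}D_{s_t})-g(Y_{s_t})}{\bar\alpha_{s_t}} - \tr\!\big(\nabla g(Y_{s_t})^T D_{s_t}\big).
\]
Dividing both sides by $\|D_{s_t}\|_F$ and writing $\hat\alpha_{s_t}=\bar\alpha_{s_t}\|D_{s_t}\|_F$ (which you correctly noted tends to $0$), the right-hand side becomes a first-order Taylor remainder along the unit direction $D_{s_t}/\|D_{s_t}\|_F$ and is $O(\hat\alpha_{s_t})\to 0$, while the left-hand side is at least $(1-\sigma)M^{-2}\eta>0$ by Proposition~\ref{lem:Delta-bound2}. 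So the contradiction only uses $\sigma<1$; no second-order expansion (as in Proposition~\ref{lm:alpha_1}) is needed here.
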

\begin{proof}
  The objective value decreases according to the line search condition~\eqref{eqn:armijo} and Proposition \ref{lem:Delta-bound2}. According to Lemma~\ref{lem:eig-bounds}, $f(Y_{s_t})$ cannot converge to negative infinity, so $f(X_{s_t})-f(X_{s_{t+1}}) \rightarrow 0$. The line search condition~\eqref{eqn:armijo} implies that we have $\alpha_{s_t} \delta_{s_t} \rightarrow 0$.
  
  We proceed to prove the statement by contradiction.  If $D_{s_t}$ does not converge to $0$, then there exist an infinite index set ${\cal T}\subseteq \{s_1, s_2, \ldots\}$ and $\eta > 0$ such that $\|D_t\|_F > \eta$ for all $t \in {\cal T}$.  According to Proposition~\ref{lem:Delta-bound2}, $\delta_{s_t}$ is bounded away from $0$, therefore $\delta_{s_t}\not\rightarrow 0$, while $\alpha_{s_t}\rightarrow 0$.  We can assume without loss of generality that $\alpha_{s_t} < 1$, that is the line search condition is not satisfied in the first attempt. We will work in this index set ${\cal T}$ in the derivations that follow.
  
  The line search step size $\alpha_t < 1$ ($t\in{\cal T})$ satisfies~\eqref{eqn:armijo}, but $\overline{\alpha}_t = \alpha_t/\beta$ does not satisfy~\eqref{eqn:armijo} by the minimality of our line search procedure. So we have:
  \begin{equation} \label{eqn:proof_ine}
    f(Y_t + \overline{\alpha}_tD_t) - f(Y_t) \geq \sigma
    \overline{\alpha}_t\delta_t.
  \end{equation}
  If $Y_t + \overline{\alpha}_t D_t$ is not positive definite, then we define $f(Y_t + \overline{\alpha}_t D_t)$ to be $\infty$, so~\eqref{eqn:proof_ine} still holds.  We expand the definition of $f$ and apply Lemma~\ref{lem:lnormg-triangle}:
  \begin{align*}
    \sigma \overline{\alpha_t} \Delta_t 
    &\leq g(Y_t + \overline{\alpha}_t D_t) - g(Y_t) + \lnormg{Y_t +
      \overline{\alpha}_t D_t} - \lnormg{Y_t} \\
    &\leq g(Y_t + \overline{\alpha}_t  D_t) - g(Y_t) +
      \overline{\alpha}_t\lnormg{|Y_t+D_t} + (1-\overline{\alpha}_t)
      \lnormg{Y_t} - \lnormg{Y_t} \\
    &= g(Y_t+\overline{\alpha}_t D_t) -
      g(Y_t) + \overline{\alpha}_t(\lnormg{Y_t+D_t} - \lnormg{Y_t}),
    \forall t\in {\cal T}.
  \end{align*}
  By the definition of $\delta_t$ we have: 
  \begin{align*}
    &\sigma \delta_t \leq \frac{g(Y_t + \overline{\alpha}_t D_t) -
      g(Y_t)}{\overline{\alpha}_t  } + \delta_t - \tr(\nabla g(Y_t)^T D_t), \\
    &(1-\sigma)(-\delta_t) \leq \frac{g(Y_t+\overline{\alpha}_t D_t) -
      g(Y_t)}{\overline{\alpha}_t  }-\tr(\nabla g(Y_t)^T D_t).
  \end{align*}
By Proposition~\ref{lem:Delta-bound2} we have:
  \begin{align*}
    &(1-\sigma)M^{-2}\|D_t\|^2_F \leq
    \frac{g(Y_t+\overline{\alpha}_t D_t)-g(Y_t) }{\overline{\alpha}_t  }
    - \tr(\nabla g(Y_t)^T D_t) \\
    & (1-\sigma)M^{-2}\|D_t\|_F \leq \frac{g(Y_t + \overline{\alpha}_t \|D_t\|_F
      \frac{D_t}{\|D_t\|_F}) - g(Y_t)}{\overline{\alpha}_t\|D_t\|_F} - \tr(\nabla
    g(Y_t)^T \frac{D_t}{\|D_t\|_F}).
  \end{align*}
  We set $\hat{\alpha}_t = \overline{\alpha}_t \|D_t\|_F $. Since $\|D_t\|_F>\eta$ for all $t\in {\cal T}$ we have:
  \begin{align} 
    (1-\sigma)M^{-2} \eta &\leq \frac{g(Y_t+\hat{\alpha}_t
      \frac{D_t}{\|D_t\|_F})
      - g(Y_t)}{\hat{\alpha}_t}- \tr(\nabla
    g(Y_t)^T \frac{D_t}{\|D_t\|_F}) \nonumber \\
      & = \frac{\hat{\alpha}_t\tr(\nabla g(Y_t) \frac{D_t}{\|D_t\|_F}) + 
      O(\hat{\alpha}_t^2)}{\hat{\alpha}_t} -  
 \tr(\nabla
    g(Y_t)^T \frac{D_t}{\|D_t\|_F})\nonumber \\
      &= O(\hat{\alpha}_t) \label{eqn:final1}
  \end{align}
  Again, by Proposition~\ref{lem:Delta-bound2},
  \begin{equation*}
    -\alpha_t \delta_t\geq \alpha_t M^{-2} \|D_t\|^2_F \geq M^{-2} \alpha_t  
    \|D_t\|_F \eta. 
  \end{equation*}
 Since $\{\alpha_t\delta_t\}_t \rightarrow 0 $, it follows that $\{\alpha_t\|D_t\|_F\}_t\rightarrow 0$ and $\{\hat{\alpha}^k\}_t\rightarrow 0$. Taking limit of~\eqref{eqn:final1} as $t\in \bar{\cal T}$ and $t\rightarrow \infty$, we have
  \begin{equation*}
    (1-\sigma)M^{-2}\eta \leq 0, 
  \end{equation*}
  a contradiction, finishing the proof.
\end{proof}

In Lemma \ref{lem:subsequence}, we prove that $D_{J_t}$ converges to zero for 
the converging subsequence. Next we show that $D_J$ is closely related to 
$\text{grad}^S f(Y)$ defined in Definition \ref{def:subgradient}, which in 
turn is an indicator of optimality as proved in Lemma \ref{lem:optimum-grad-partial}.

\begin{lemma} \label{lem:index-conv}
  For any index set $J$ and positive definite $Y$, $D_J(Y) = 0$ if and only if $\grad^S_{ij} f(Y) = 0$ for all $(i,j)\in J$.
\end{lemma}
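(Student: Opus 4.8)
The plan is to argue exactly as in Lemma~\ref{lem:optimum-grad-partial}, with $f$ replaced by the quadratic model that defines $D_J$, exploiting the fact that the two functions share the same first-order data at the origin. Concretely, the objective in~\eqref{eqn:dj-define}, restricted to the subspace $V=\{D : D_{ij}=0\ \forall\,(i,j)\notin J\}$, is $\tr(\nabla g(Y)^T D)+\frac{1}{2}\vect(D)^T \nabla^2 g(Y)\vect(D)+\lnormg{Y+D}$; since $\nabla^2 g(Y)=Y^{-1}\otimes Y^{-1}\succ 0$ this objective is strictly convex (and coercive) on $V$, so it has a unique minimizer, and $D_J(Y)=0$ \emph{if and only if} $\Delta=0$ satisfies the first-order optimality condition for this restricted problem.

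Next I would write out that optimality condition. The gradient of the quadratic (smooth) part of the objective at $\Delta=0$ equals $\nabla g(Y)$, so $0$ is optimal over $V$ precisely when, for every $(i,j)\in J$, $-\nabla_{ij} g(Y)$ lies in the subdifferential of $\lambda_{ij}|\,\cdot\,|$ evaluated at $Y_{ij}$; equivalently,
\begin{equation*}
  \nabla_{ij} g(Y) \begin{cases}
    = -\lambda_{ij} & \text{if } Y_{ij}>0,\\
    = \lambda_{ij} & \text{if } Y_{ij}<0,\\
    \in [-\lambda_{ij},\lambda_{ij}] & \text{if } Y_{ij}=0,
  \end{cases}
\end{equation*}
for all $(i,j)\in J$. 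This is the same system as~\eqref{eqn:opt-cond} with $\Delta^*=0$ (with $Y$ in the role of $X$); indices $(i,j)\notin J$ impose no condition, since $D_{ij}$ is pinned to $0$ by the membership in $V$.

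Finally I would match this system against Definition~\ref{def:subgradient} case by case: for $Y_{ij}>0$ the display says $\nabla_{ij}g(Y)+\lambda_{ij}=0$, i.e.\ $\grad^S_{ij}f(Y)=0$; for $Y_{ij}<0$ it says $\nabla_{ij}g(Y)-\lambda_{ij}=0$, again $\grad^S_{ij}f(Y)=0$; and for $Y_{ij}=0$ it says $|\nabla_{ij}g(Y)|\le\lambda_{ij}$, which is exactly $\sign(\nabla_{ij}g(Y))\max(|\nabla_{ij}g(Y)|-\lambda_{ij},0)=0$. Hence $D_J(Y)=0$ iff $\grad^S_{ij}f(Y)=0$ for all $(i,j)\in J$. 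There is no real obstacle: the statement is the quadratic-model analogue of Lemma~\ref{lem:optimum-grad-partial}, and the only points needing a little care are invoking strict convexity of the quadratic model (so that ``$D_J(Y)=0$'' is \emph{equivalent} to stationarity at $0$, not merely implied by it) and the bookkeeping of the three cases of the $\ell_1$ subdifferential.
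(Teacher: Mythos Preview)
Your proposal is correct and follows essentially the same approach as the paper: observe that $D_J(Y)=0$ iff $D=0$ satisfies the optimality condition of the restricted quadratic subproblem~\eqref{eqn:dj-define}, note that at $D=0$ this condition reduces to~\eqref{eqn:opt-cond} restricted to $J$, and identify that with $\grad^S_{ij}f(Y)=0$ for $(i,j)\in J$. Your version is in fact more careful than the paper's terse argument, since you explicitly invoke strict convexity of the quadratic model to justify the ``only if'' direction and spell out the three-case match with Definition~\ref{def:subgradient}.
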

\begin{proof}
  $D_J(Y) = 0$ if and only if $D = 0$ satisfies the optimality condition of~\eqref{eqn:dj-define}. The condition can be written as~\eqref{eqn:opt-cond} restricted to $(i,j)\in J$, which in turn is equivalent to the optimality condition of $f$.  Therefore $D_J(Y)=0$  iff $\grad^S_{ij} f(Y) = 0$ for all $(i,j)\in J$.
\end{proof}
Based on these lemmas, we are now able to prove our main convergence theorem. 
\begin{theorem}
  Algorithm \ref{alg:QP} converges to a unique global optimum $Y^*$.
  \label{thm:global_converge}
\end{theorem}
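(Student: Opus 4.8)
The plan is to show that every limit point of $\{Y_t\}$ equals the global minimizer $X^*$, whose existence and uniqueness are guaranteed by Theorem~\ref{lem:unique}, and then upgrade this to convergence of the whole sequence by a compactness argument. Since every iterate lies in the compact set $S=\{X\mid mI\preceq X\preceq MI\}$ of Lemma~\ref{lem:eig-bounds}, the sequence $\{Y_t\}$ has a convergent subsequence; fix an arbitrary one, $Y_{n_k}\to\bar Y$ as $k\to\infty$.

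The first step is to carry the conclusion of Lemma~\ref{lem:subsequence} across an entire Gauss--Seidel window of length $T$. Lemma~\ref{lem:subsequence} gives $D_{n_k}=D_{J_{n_k}}(Y_{n_k})\to 0$; since $\alpha_{n_k}\in(0,1]$ we have $\|Y_{n_k+1}-Y_{n_k}\|_F=\alpha_{n_k}\|D_{n_k}\|_F\le\|D_{n_k}\|_F\to 0$, so $\{Y_{n_k+1}\}_k$ is again a convergent subsequence with limit $\bar Y$. Applying Lemma~\ref{lem:subsequence} to it and iterating $j=0,1,\dots,T-1$ times yields $Y_{n_k+j}\to\bar Y$ and $D_{J_{n_k+j}}(Y_{n_k+j})\to 0$ for every $j$ in the window.

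Next I would stabilize the index sets and pass to the limit. As there are only finitely many subsets of the $p^2$ variables, a pigeonhole/diagonal argument extracts a further subsequence in $k$ along which, simultaneously for all $j\in\{0,\dots,T-1\}$, $J_{n_k+j}=J^{(j)}$ is constant. On the compact set $S$ the map $Y\mapsto D_J(Y)$ is continuous: $D_J(Y)$ is the unique minimizer of a strongly convex problem (recall $\nabla^2 g(Y)=Y^{-1}\otimes Y^{-1}\succeq M^{-2}I$) whose data vary continuously with $Y$, so continuity is standard (one may also pass to the limit directly in the optimality conditions~\eqref{eqn:opt-cond} defining $D_J$). Hence $D_{J^{(j)}}(\bar Y)=\lim_k D_{J^{(j)}}(Y_{n_k+j})=0$, and Lemma~\ref{lem:index-conv} gives $\grad^S_{i\ell}f(\bar Y)=0$ for all $(i,\ell)\in J^{(j)}$. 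Now the Gauss--Seidel condition~\eqref{eqn:gauss-seidel} forces $\bigcup_{j=0}^{T-1}J^{(j)}=\bigcup_{j=0}^{T-1}J_{n_k+j}\supseteq\N$, so $\grad^S_{i\ell}f(\bar Y)=0$ for every index; by Lemma~\ref{lem:optimum-grad-partial} with $J=\N$, $\Delta^*=0$ minimizes $f(\bar Y+\Delta)$, i.e.\ $\bar Y$ is globally optimal, whence $\bar Y=X^*$ by Theorem~\ref{lem:unique}. Since every convergent subsequence of $\{Y_t\}$ has limit $X^*$ and $\{Y_t\}$ stays in the compact set $S$, we conclude $Y_t\to X^*=:Y^*$.

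I expect the main obstacle to be the bookkeeping around the Gauss--Seidel window: one must propagate the vanishing of the restricted Newton direction through $T$ consecutive outer iterates (repeatedly reapplying Lemma~\ref{lem:subsequence} to shifted subsequences, using $\alpha\le 1$ to keep the iterates converging to the same $\bar Y$) and then simultaneously freeze the finitely many index sets $J_{n_k},\dots,J_{n_k+T-1}$ so that their union, which covers $\N$, can be exploited at a single limit point. Verifying continuity of $D_J(\cdot)$ on $S$ is the remaining technical point, but it is routine given the uniform strong convexity of the inner subproblem.
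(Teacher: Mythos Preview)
Your proposal is correct and follows essentially the same approach as the paper's own proof: take a convergent subsequence, use Lemma~\ref{lem:subsequence} and the bound $\|Y_{t+1}-Y_t\|_F\le\|D_t\|_F$ to propagate convergence across a Gauss--Seidel window of length $T$, freeze the finitely many index sets by passing to further subsequences, invoke continuity of $D_J(\cdot)$ together with Lemma~\ref{lem:index-conv} to get $\grad^S f(\bar Y)=0$ on each $J^{(j)}$, and conclude via~\eqref{eqn:gauss-seidel} and Lemma~\ref{lem:optimum-grad-partial}. The only differences are cosmetic: the paper interleaves the subsequence extractions with the propagation steps while you do all the propagation first and then stabilize the index sets at once, and you make explicit the final compactness step (bounded sequence with unique limit point implies convergence) that the paper leaves implicit.
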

\begin{proof}
  Assume a subsequence $\{Y_t\}_{\cal T}$ converges to $\bar{Y}$. Since the choice of the index set $J_t$ selected at each step is finite, we can further assume that $J_t = \bar{J}_0$ for all $t\in {\cal T}$, considering a subsequence of ${\cal T}$ if necessary.  From Lemma~\ref{lem:subsequence}, $D_{\bar{J}_0}(Y_t)\rightarrow 0$. By the continuity of $\nabla g(Y)$ and $\nabla^2 g(Y)$, it is easy to show $D_{\bar{J}_0}(Y_t)\rightarrow D_{\bar{J}_0}(\bar{Y})$. Therefore $D_{\bar{J}_0}(\bar{Y}) = 0$. Based on Lemma~\ref{lem:index-conv}, we have
  \begin{equation*}
    \text{grad}^S_{ij} f(Y) = 0 \ \ \text{ for all } (i,j) \in \bar{J}_0. 
  \end{equation*}
Furthermore, $\{D_{\bar{J}_0}(Y_t)\}_{\cal T}\rightarrow 0$ and $\|Y_t-Y_{t+1}\|_F\leq \|D_{\bar{J}_0}(Y_t)\|_F$, so $\{Y_{t+1}\}_{t\in {\cal T}}$ also converges to $\bar{Y}$. By considering a subsequence of ${\cal T}$ if necessary, we can further assume that $J_{t+1} = \bar{J}_1$ for all $t\in {\cal T}$. By the same argument, we can show that $\{D_{J_{t+1}}(Y_t)\}_{\cal T}\rightarrow 0$, so $D_{\bar{J}_1}(\bar{Y}) = 0$.  Similarly, we can show that $D_{\bar{J}_t}(\bar{Y})=0 \ \forall t=0,\dots,T-1$ can be assumed for an appropriate subset of ${\cal T}$. With assumption~\eqref{eqn:gauss-seidel} and Lemma~\ref{lem:index-conv} we have 
  \begin{equation}
    \grad^S_{ij} f(\bar{Y})=0 \ \forall i,j.
    \label{eqn:ppp}
  \end{equation}
  Using Lemma~\ref{lem:optimum-grad-partial} with $J$ is the set of all variables, we can show that \eqref{eqn:ppp} implies $\bar{Y}$ is the global optimum. 
\end{proof}

\subsection{Asymptotic Convergence Rate} \label{sec:convergence-rate}

\noindent {\bf Newton methods on constrained minimization problems: }

\noindent The convergence rate of the Newton method on bounded constrained minimization has been studied in~\cite{ESL66a} and~\cite{Dunn80a}. Here, we briefly mention their results.

Assume we want to solve a constrained minimization problem
\begin{equation*}
  \min_{\vct{x}\in \Omega} F(\vct{x}), 
\end{equation*}
where $\Omega$ is a nonempty subset of $R^n$ denoting the constraint set and $F:R^n\rightarrow R$ has a second derivative $\nabla^2 F(\vct{x})$. Then beginning from $\vct{x}_0$, 
the natural Newton updates entail computing the $(k+1)$th iterate $x_{k+1}$ as
\begin{equation}
  \vct{x}_{k+1}=\arg\min_{\vct{x}\in \Omega} \nabla F(\vct{x}_k)^T(\vct{x}-\vct{x}_k) + \frac{1}{2} (\vct{x}-\vct{x}_k)^T \nabla^2
  F(\vct{x}_k) (\vct{x}-\vct{x}_k). 
  \label{eqn:bound-newton}
\end{equation}
For simplicity, we assume $F$ is strictly convex, and has a unique minimizer $\vct{x}^*$ in $\Omega$. Then the following theorem holds
\begin{theorem}[From Theorem 3.1 in \cite{Dunn80a}]
  \label{thm:dunn}
  Assume $F$ is strictly convex, has a unique minimizer $\vct{x}^*$ in $\Omega$, 
  and that $\nabla^2 F(\vct{x})$ is Lipschitz continuous. Then for all 
  $\vct{x}_0$ sufficiently close to $\vct{x}^*$, the sequence $\{\vct{x}_k\}$ 
  generated by \eqref{eqn:bound-newton} converges quadratically to $\vct{x}^*$.
\end{theorem}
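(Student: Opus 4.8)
The plan is to reproduce the classical local quadratic convergence argument for Newton's method, adapting it to the constrained setting by replacing stationarity equations with variational inequalities. First I would write down the first-order optimality conditions. Since the subproblem~\eqref{eqn:bound-newton} that defines $\vct{x}_{k+1}$ minimizes a strictly convex quadratic (strict convexity of the quadratic follows from $\nabla^2 F(\vct{x}_k)\succ 0$) over the convex constraint set $\Omega$, its solution is characterized by
\[
  \langle \nabla F(\vct{x}_k) + \nabla^2 F(\vct{x}_k)(\vct{x}_{k+1}-\vct{x}_k),\ \vct{x}-\vct{x}_{k+1}\rangle \geq 0 \quad \forall\, \vct{x}\in\Omega,
\]
while the minimizer $\vct{x}^*$ of $F$ over $\Omega$ satisfies $\langle \nabla F(\vct{x}^*),\ \vct{x}-\vct{x}^*\rangle\geq 0$ for all $\vct{x}\in\Omega$. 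Testing the first inequality at $\vct{x}=\vct{x}^*$ and the second at $\vct{x}=\vct{x}_{k+1}$ and adding them cancels the one-sided terms and, after grouping $\nabla F(\vct{x}_k)+\nabla^2 F(\vct{x}_k)(\vct{x}_{k+1}-\vct{x}_k)-\nabla F(\vct{x}^*) = \nabla^2 F(\vct{x}_k)(\vct{x}_{k+1}-\vct{x}^*) + r_k$ with $r_k := \nabla F(\vct{x}_k)-\nabla F(\vct{x}^*)-\nabla^2 F(\vct{x}_k)(\vct{x}_k-\vct{x}^*)$, yields
\[
  \langle \nabla^2 F(\vct{x}_k)(\vct{x}_{k+1}-\vct{x}^*),\ \vct{x}_{k+1}-\vct{x}^*\rangle \leq \langle r_k,\ \vct{x}^*-\vct{x}_{k+1}\rangle \leq \|r_k\|\,\|\vct{x}_{k+1}-\vct{x}^*\|.
\]

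Next I would bound the two sides separately. Strict convexity together with continuity of $\nabla^2 F$ gives, on some compact neighborhood $N$ of $\vct{x}^*$, a uniform constant $m>0$ with $\nabla^2 F(\vct{x})\succeq mI$ on $N$, so the left-hand side is $\geq m\|\vct{x}_{k+1}-\vct{x}^*\|^2$. For the remainder $r_k$, writing $\nabla F(\vct{x}_k)-\nabla F(\vct{x}^*)=\int_0^1 \nabla^2 F(\vct{x}^*+s(\vct{x}_k-\vct{x}^*))(\vct{x}_k-\vct{x}^*)\,ds$ and invoking the Lipschitz continuity of $\nabla^2 F$ with constant $L$ gives the standard linearization-error estimate $\|r_k\|\leq \tfrac{L}{2}\|\vct{x}_k-\vct{x}^*\|^2$. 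Substituting both bounds and cancelling one factor of $\|\vct{x}_{k+1}-\vct{x}^*\|$ produces the one-step contraction
\[
  \|\vct{x}_{k+1}-\vct{x}^*\| \leq \frac{L}{2m}\,\|\vct{x}_k-\vct{x}^*\|^2 .
\]

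Finally I would close the argument by induction. Choose $r>0$ small enough that the ball $B=\{\vct{x}:\|\vct{x}-\vct{x}^*\|\leq r\}$ is contained in $N$ and $\tfrac{L}{2m}r<1$; if $\vct{x}_0\in B$, the contraction estimate shows $\|\vct{x}_{k+1}-\vct{x}^*\|\leq(\tfrac{L}{2m}r)\|\vct{x}_k-\vct{x}^*\|\leq\|\vct{x}_k-\vct{x}^*\|$, so every iterate stays in $B$, the sequence converges to $\vct{x}^*$ at least linearly, and then the displayed inequality upgrades this to a quadratic rate. I expect the main obstacle to be the bookkeeping that makes the induction self-consistent: one must first verify that the quadratic subproblem actually has a (unique) solution $\vct{x}_{k+1}$ whenever $\nabla^2 F(\vct{x}_k)\succ 0$ and $\Omega$ is closed and convex, and then handle the mutual dependence between ``the iterates stay in $N$'' and ``the bounds $m$ and $L$ are valid'' — the trapping is what guarantees the bounds, yet the bounds are what give the trapping, so the neighborhood and radius have to be fixed carefully at the outset. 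In the unconstrained special case $\Omega=\R^n$ the two variational inequalities become the equations $\nabla F(\vct{x}_k)+\nabla^2 F(\vct{x}_k)(\vct{x}_{k+1}-\vct{x}_k)=0$ and $\nabla F(\vct{x}^*)=0$, and the argument collapses to the textbook proof of local quadratic convergence of Newton's method.
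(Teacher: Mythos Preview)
The paper does not actually prove this theorem: immediately after the statement it simply writes ``This theorem is proved in~\cite{Dunn80a}'' and moves on to use it as a black box. So there is no proof in the paper to compare against; you are supplying an argument where the authors chose to cite one.

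Your argument is the standard variational-inequality proof of local quadratic convergence for constrained Newton (essentially the route Dunn himself takes), and the algebra is correct: testing the two optimality inequalities against each other, isolating the Taylor remainder $r_k$, and using the Lipschitz-Hessian bound $\|r_k\|\le \tfrac{L}{2}\|\vct{x}_k-\vct{x}^*\|^2$ together with a lower Hessian eigenvalue bound gives the contraction $\|\vct{x}_{k+1}-\vct{x}^*\|\le \tfrac{L}{2m}\|\vct{x}_k-\vct{x}^*\|^2$, and the trapping/induction step is handled correctly.

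One genuine gap worth flagging: you write that ``strict convexity together with continuity of $\nabla^2 F$ gives \ldots\ $\nabla^2 F(\vct{x})\succeq mI$ on $N$,'' but strict convexity of a $C^2$ function does \emph{not} imply that the Hessian is positive definite (e.g.\ $F(x)=x^4$ at the origin). The paper's statement is itself a simplification of Dunn's hypotheses (the paper says ``For simplicity, we assume $F$ is strictly convex''); Dunn's actual Theorem~3.1 assumes a uniform positive-definiteness condition on $\nabla^2 F$ near $\vct{x}^*$, which is exactly what your proof needs. So your argument is correct once you take $\nabla^2 F(\vct{x}^*)\succ 0$ (equivalently, local strong convexity) as an explicit hypothesis rather than trying to derive it from strict convexity. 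You also implicitly use that $\Omega$ is closed and convex so that the subproblem has a unique minimizer and the variational inequalities hold; the paper's statement omits this, but it is of course needed.
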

This theorem is proved in~\cite{Dunn80a}. In our case, the objective function 
$f(X)$ is non-smooth so that Theorem~\ref{thm:dunn} does not directly apply. 
Instead, we will 
first show that after a finite number of  
iterations the sign of the iterates $\{X_t\}$ generated by \QUIC will not change, 
so that we can then use Theorem~\ref{thm:dunn} to establish asymptotic quadratic convergence.
\\

\noindent {\bf Quadratic convergence rate for \QUIC: }

\noindent Unlike the previous section, our Algorithm \ref{alg:QP} does not perform an unrestricted Newton update: it iteratively selects variable subsets $\{J_t\}_{t=1,\dots}$ in the manner of {\em fixed} and {\em free} sets, and performs Newton directions restricted to the free sets. In the following, we show that the sequence $\{X_t\}_{t=1,2,\dots}$ generated by \QUIC does converge quadratically to the global optimum.

Assume $X^*$ is the optimal solution, then we can divide the index set with $\lambda_{ij}\neq 0$ into three subsets: 
\begin{align}
  P &= \{(i,j)\mid X^*_{ij}>0\}, \nonumber\\ 
  N &= \{(i,j)\mid X^*_{ij}<0\}, \label{eqn:set_def}\\
  Z &= \{(i,j)\mid X^*_{ij}=0\}. \nonumber 
\end{align}
From the optimality condition of $X^*$,
\begin{equation}
  \nabla_{ij} g(X^*) \begin{cases}
    = -\lambda_{ij} & \text{ if } (i,j)\in P,  \\
    = \lambda_{ij} & \text{ if } (i,j) \in N,  \\
    \in [-\lambda_{ij},\lambda_{ij}] & \text{ if } (i,j) \in Z. 
  \end{cases}
  \label{eqn:set_prop}
\end{equation}

\begin{lemma}
  Assume that the sequence $\{X_t\}$ converges to the global optimum $X^*$. 
  Then there exists a $\bar{t}$ such that
  \begin{equation}
    (X_t)_{ij} \ \begin{cases} \ 
      \geq 0 &\text{ if } (i,j)\in P \\
   \   \leq 0 &\text{ if } (i,j)\in N \\
   \   = 0 & \text{ if } (i,j)\in Z
    \end{cases}
    \label{eqn:assumption}
  \end{equation}
  for all $t > \bar{t}$. 
  \label{lm:divide}
\end{lemma}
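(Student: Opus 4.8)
The plan is to split the claim into three cases according to the membership of $(i,j)$ in $P$, $N$, or $Z$, and in each case exploit the convergence $X_t \to X^*$ together with the structure of the coordinate-descent update~\eqref{eqn:one-update} that produces each $X_t$. The key observation is that at the optimum the strict inequalities in~\eqref{eqn:set_prop} hold on $P$ and $N$ — namely $\nabla_{ij}g(X^*) = -\lambda_{ij}$ there — so by continuity of $\nabla g$ and the convergence of the iterates there is a $\bar t$ past which $\nabla_{ij}g(X_t)$ is within a small neighborhood of $\mp\lambda_{ij}$; I will use this to control the sign of the soft-thresholding argument in~\eqref{eqn:one-update}.

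For the case $(i,j)\in P$: since $X^*_{ij}>0$, convergence gives $(X_t)_{ij}>0$ for all $t$ past some index, and I would argue that once a coordinate is positive the update~\eqref{eqn:one-update} keeps it nonnegative — here one examines $\mu = -c + \Soft(c-b/a, \lambda_{ij}/a)$ with $c = (X_t)_{ij} + D_{ij}$ and uses that $b/a$ is close to $\nabla_{ij}g(X_t)/a$ (up to the contribution of the in-progress direction $D$), which is close to $-\lambda_{ij}/a < 0$, so the soft-threshold does not drive $c+\mu = X^*_{ij}+\text{(small)}$ below zero. The case $(i,j)\in N$ is symmetric. The case $(i,j)\in Z$ is exactly the content already isolated in Lemma~\ref{lem:shrink} (the "shrinking" lemma): since $\lambda_{ij}>0$ forces $|\nabla_{ij}g(X^*)| \le \lambda_{ij}$, and in fact on $Z$ we need the strict-feasibility-or-zero dichotomy — when $|\nabla_{ij}g(X^*)| < \lambda_{ij}$ Lemma~\ref{lem:shrink} directly gives $(X_t)_{ij}=0$ eventually, and when $|\nabla_{ij}g(X^*)| = \lambda_{ij}$ a slightly more careful argument using~\eqref{eqn:one-update} shows the update still sets the coordinate to $0$. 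In fact, since the excerpt states that Lemma~\ref{lem:shrink} is "an immediate consequence of Lemma~\ref{lm:divide}," the intended direction is the reverse: I would prove the $Z$ case here first (it is the crux) and then deduce Lemma~\ref{lem:shrink}.

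The main obstacle is the $Z$ case, and specifically the boundary subcase $|\nabla_{ij}g(X^*)| = \lambda_{ij}$ with $X^*_{ij}=0$, where the soft-thresholding argument is delicate because the thresholding argument $c - b/a$ is only barely inside the threshold radius. I expect to handle this by taking $\bar t$ large enough that $|(X_t)_{ij}|$, $|D_{ij}|$, and $|\nabla_{ij}g(X_t) - \nabla_{ij}g(X^*)|$ are all uniformly small, and then showing that the coordinate descent sweep over the free set, started from $D=0$, never activates $(i,j)$: the very first time the $(i,j)$ update is attempted, $c = (X_t)_{ij}$ is tiny, $b/a \approx \nabla_{ij}g(X_t)/a$, so $c - b/a$ has magnitude at most $\lambda_{ij}/a$ up to a vanishing error, whence $\Soft(c-b/a,\lambda_{ij}/a)=0$ and $\mu = -c$, forcing $(X_t)_{ij} + D_{ij} = 0$ after the update; one then needs that subsequent updates in the same sweep, and in later Newton iterations, do not reactivate it — which follows because the direction is recomputed from $D=0$ each Newton step and within a sweep the same estimate applies. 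A secondary technical point is that the quantity $b$ in~\eqref{eqn:one-update} contains the term $\vct w_i^T D \vct w_j$ involving the partially-updated direction, so I must bound $\|D\|$ along the sweep; this is controlled by Lemma~\ref{lem:eig-bounds} (eigenvalue bounds on the iterates) together with the fact that $\|D\|_F \to 0$ along the sequence, which follows from Lemma~\ref{lem:subsequence} and Theorem~\ref{thm:global_converge}.
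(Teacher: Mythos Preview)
Your handling of the $P$ and $N$ cases is fine --- in fact the bare observation that $(X_t)_{ij}\to X^*_{ij}>0$ (resp.\ $<0$) already gives $(X_t)_{ij}\ge 0$ (resp.\ $\le 0$) for all large $t$, and the subsequent discussion of the update rule is unnecessary. The paper instead runs a contradiction argument even for $P$, but yours is shorter and correct.

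For the $Z$ case, however, your route diverges substantially from the paper's and runs into trouble. The paper does \emph{not} descend to the level of individual coordinate-descent updates~\eqref{eqn:one-update}; it works only with the optimality condition of the full Lasso subproblem~\eqref{eqn:dj-define} at its minimizer $D^*$. The argument (by contradiction, ``similarly'' to the $P$ case) is: suppose $(X_{s_t})_{ij}>0$ along a subsequence; by Proposition~\ref{lm:alpha_1} eventually $\alpha=1$, so $X_{s_t}=X_{s_t-1}+D^*(X_{s_t-1})$, and since $(X_{s_t-1}+D^*)_{ij}>0$ the subproblem KKT condition forces
\[
\bigl(\nabla g(X_{s_t-1}) + \nabla^2 g(X_{s_t-1})\,\vect(D^*)\bigr)_{ij}=-\lambda_{ij}.
\]
Because $D^*\to 0$ and $\nabla g,\nabla^2 g$ are continuous, the left side tends to $\nabla_{ij}g(X^*)$, contradicting $|\nabla_{ij}g(X^*)|<\lambda_{ij}$. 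This bypasses entirely any need to track the partially-built direction $D$ during the inner sweeps.

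Your approach, by contrast, needs to control $\vct w_i^T D\vct w_j$ for the \emph{intermediate} $D$ at each step of every sweep, and you appeal to Lemma~\ref{lem:subsequence} for this. But that lemma (and Theorem~\ref{thm:global_converge}) only tells you the \emph{final} subproblem minimizer $D^*_{J_t}(X_t)\to 0$; it says nothing about the iterates produced along the way by coordinate descent, which could in principle wander before settling. Establishing a uniform-in-sweep bound on $\|D\|$ is extra work you have not supplied, and it is exactly what the paper's KKT-based argument avoids. You are also right that the degenerate boundary case $|\nabla_{ij}g(X^*)|=\lambda_{ij}$ with $X^*_{ij}=0$ is delicate; note that the paper's ``similarly'' also glosses over this case, so neither argument is fully airtight there without an additional strict-complementarity-type consideration.
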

\begin{proof}
  We prove the case for $(i,j)\in P$ by contradiction, the other two cases can be handled similarly. If we cannot find a $\bar{t}$ satisfying \eqref{eqn:assumption}, then there exists an infinite subsequence $\{X_{s_t}\}$ such that $(X_{s_t})_{ij} < 0$.  We consider the update from $X_{s_t - 1}$ to $X_{s_t}$.  From Lemma~\ref{lm:alpha_1}, we can assume that $s_t$ is large enough so that the step size equals $1$, therefore $X_{s_t}=X_{s_t - 1} + D(X_{s_t-1})$ where $D(X_{s_t-1})$ is defined in \eqref{eqn:dj-define}. 
Since $(X_{s_t})_{ij} = (X_{s_t - 1})_{ij} + (D(X_{s_t-1}))_{ij} < 0$, from the
optimality condition of~\eqref{eqn:dj-define} we have
\begin{equation}
  \big(\nabla g(X_{s_t - 1}) + \nabla^2 g(X_{s_t - 1})\vect(D(X_{s_t-1}))\big)_{ij}= 
  \lambda_{ij}. 
    \label{eqn:sub-opt}
  \end{equation}
  Since $D(X_{s_t-1})$ converges to $0$,~\eqref{eqn:sub-opt} implies that $\{\nabla_{ij} g(X_{s_t - 1})\}$ will converge to $\lambda_{ij}$.  However, \eqref{eqn:set_prop} implies $\nabla_{ij} g(X^*) = -\lambda_{ij}$, and by the continuity of $\nabla g$ we get that $\{\nabla_{ij}g(X_t)\}$ converges to $\nabla_{ij} g(X^*)=-\lambda_{ij}$, a contradiction, finishing the proof.
\end{proof}
The following lemma shows that the coordinates from the fixed set remain zero 
after a finite number of iterations. 
\begin{lemma}
  Assume $X_t\rightarrow X^*$.  There exists a $\bar{t} > 0$ such that variables in $P$ or $N$ will not be selected to be in the fixed set $S_{fixed}$, when $t > \bar{t}$. That is,
  \begin{equation*}
    S_{fixed} \subseteq Z. 
  \end{equation*}
  \label{lm:fixright}
\end{lemma}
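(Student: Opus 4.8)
The plan is to show that, once $t$ is large, no index of $P$ or $N$ can lie in the fixed set; this yields $S_{fixed}\subseteq Z$ because, after one observes that diagonal indices are never fixed, the remaining (off-diagonal) indices are partitioned into $P$, $N$, $Z$ by~\eqref{eqn:set_def} (recall that off the diagonal $\lambda_{ij}>0$, so $P\cup N\cup Z$ exhausts those pairs). The two ingredients I would combine are: (i) membership in $S_{fixed}$ forces the corresponding entry of $X_t$ to vanish, by the definition~\eqref{eqn:def_fixed}; and (ii) for $(i,j)\in P\cup N$ the optimal entry $X^*_{ij}$ is strictly nonzero, so by the assumed convergence $X_t\to X^*$ the entry $(X_t)_{ij}$ cannot vanish for large $t$.

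First I would record that $S_{fixed}$ contains only off-diagonal pairs: every iterate satisfies $X_t\succ 0$, hence $(X_t)_{ii}>0$, which contradicts the requirement $(X_t)_{ij}=0$ appearing in~\eqref{eqn:def_fixed}. Therefore $S_{fixed}$ is contained in the set of off-diagonal index pairs, namely $P\cup N\cup Z$.

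Next, fix $(i,j)\in P$, so $X^*_{ij}>0$. Since $X_t\to X^*$ coordinatewise, there is a threshold $\bar t_{ij}$ such that $(X_t)_{ij}>\tfrac12 X^*_{ij}>0$ for all $t>\bar t_{ij}$; in particular $(X_t)_{ij}\neq 0$, so $(i,j)\notin S_{fixed}$ by~\eqref{eqn:def_fixed}. The case $(i,j)\in N$ is identical with the inequality reversed, using $X^*_{ij}<0$. (One may alternatively invoke Lemma~\ref{lm:divide}, which already fixes the sign of the iterates on $P$, $N$, $Z$ after a finite number of iterations; it gives $(X_t)_{ij}\ge 0$ on $P$ and $\le 0$ on $N$, which together with $(X_t)_{ij}\to X^*_{ij}\neq 0$ again rules out $(X_t)_{ij}=0$.) Letting $\bar t$ be the maximum of the finitely many thresholds $\bar t_{ij}$ over $(i,j)\in P\cup N$, we get $S_{fixed}\cap(P\cup N)=\emptyset$ for all $t>\bar t$, i.e.\ $S_{fixed}\subseteq Z$.

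There is no genuine obstacle here; the statement is essentially immediate from coordinatewise convergence $X_t\to X^*$ and the fixed-set definition. The only points needing a moment's care are the exclusion of diagonal indices from $S_{fixed}$ (via strict positive definiteness of the iterates) and the bookkeeping that $P\cup N\cup Z$ covers all off-diagonal pairs; neither is difficult.
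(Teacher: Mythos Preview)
Your proposal is correct and follows essentially the same approach as the paper: both argue that coordinatewise convergence $X_t\to X^*$ forces $(X_t)_{ij}\neq 0$ on $P\cup N$ for large $t$, which by the fixed-set definition~\eqref{eqn:def_fixed} excludes those indices from $S_{fixed}$. The only cosmetic difference is in the bookkeeping for indices with $\lambda_{ij}=0$: you exclude diagonals from $S_{fixed}$ via $X_t\succ 0\Rightarrow (X_t)_{ii}>0$, whereas the paper simply notes that indices with $\lambda_{ij}=0$ are never placed in the fixed set; your argument is arguably cleaner here.
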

\begin{proof} 
  Since $X_t$ converges to $X^*$, $(X_t)_{ij}$ converges to $X^*_{ij}>0$ if 
  $(i,j)\in P$ and to $X^*_{ij} < 0$ if $(i,j) \in N$. Recall that $(i,j)$ 
  belongs to the fixed set only if $(X_t)_{ij}=0$. When $t$ is large enough, 
  $(X_t)_{ij}\neq 0$ when $X_t \in P\cup N$, therefore $P$ and $N$ will be 
  disjoint from the fixed set. Moreover, by the definition of the fixed set 
  \eqref{eqn:def_fixed}, indexes with $\lambda_{ij}=0$ will never be selected. 
  We proved that the fixed set will be a subset of $Z$ when $t$ is large enough. 
\end{proof}

\begin{theorem}
  The sequence $\{X_t\}$ generated by the \QUIC algorithm converges quadratically to $X^*$.
\end{theorem}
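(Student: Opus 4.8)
The plan is to show that after finitely many iterations the \QUIC updates become ordinary full-step Newton updates on a \emph{smooth}, strongly convex problem obtained by freezing the sign pattern of the iterate, and then to invoke Theorem~\ref{thm:dunn}.

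First I would gather the finite-termination facts already established. Since $X_t\to X^*$ (Theorem~\ref{thm:global_converge}), there is a $\bar t$ such that for all $t>\bar t$: (a) the Armijo line search accepts $\alpha_t=1$ (Proposition~\ref{lm:alpha_1}), so $X_{t+1}=X_t+D_t$ with $D_t=D_{S_{free}}(X_t)$; (b) the signs of $X_t$ match those of $X^*$, i.e. $(X_t)_{ij}\ge0$ on $P$, $(X_t)_{ij}\le0$ on $N$, and $(X_t)_{ij}=0$ on $Z$, for the index sets of \eqref{eqn:set_def} (Lemma~\ref{lm:divide}); and (c) $S_{fixed}\subseteq Z$ (Lemma~\ref{lm:fixright}), so $P$, $N$ and every unpenalized index lie in the free set. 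Applying Lemma~\ref{lm:divide} at iteration $t+1$ together with $X_{t+1}=X_t+D_t$ shows in addition that $(D_t)_{ij}=0$ for all $(i,j)\in Z$ and that $X_{t+1}$ still carries the $X^*$ sign pattern; hence for $t>\bar t$ the only coordinates \QUIC moves are the ``active'' ones in $P\cup N$ together with the unpenalized diagonal.

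Next I would introduce the smooth surrogate. Let $\Omega=\{X\succ0:\ X_{ij}\ge0\ \forall(i,j)\in P,\ X_{ij}\le0\ \forall(i,j)\in N,\ X_{ij}=0\ \forall(i,j)\in Z\}$ and $\tilde f(X)=g(X)+\sum_{(i,j)\in P}\lambda_{ij}X_{ij}-\sum_{(i,j)\in N}\lambda_{ij}X_{ij}$; on $\Omega$ one has $f=\tilde f$ up to an additive constant. The function $\tilde f$ is twice continuously differentiable with $\nabla^2\tilde f(X)=\nabla^2 g(X)=X^{-1}\otimes X^{-1}$, which is positive definite and, on the compact level set of Lemma~\ref{lem:eig-bounds}, Lipschitz continuous; thus $\tilde f$ is strictly convex and its unique minimizer over $\Omega$ is $X^*$, which moreover satisfies $X^*\succ mI$ and so lies in the interior of the positive-definite cone — meaning that near $X^*$ the set $\Omega$ is simply the affine subspace $\{X:X_{ij}=0\ \forall(i,j)\in Z\}$. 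The crux is to verify that for $t>\bar t$ the \QUIC iterate coincides with the constrained Newton iterate~\eqref{eqn:bound-newton} for $(\tilde f,\Omega)$: the quadratic part of subproblem~\eqref{eqn:dj-define} is exactly $\tr(\nabla g(X_t)^TD)+\tfrac12\vect(D)^T\nabla^2 g(X_t)\vect(D)$, while on the region where the sign pattern is frozen the term $\lnormg{X_t+D}$ coincides, up to a constant, with the linear form $\sum_{P}\lambda_{ij}(X_t+D)_{ij}-\sum_{N}\lambda_{ij}(X_t+D)_{ij}$ and never undercuts it elsewhere; since by (a)--(c) and Lemma~\ref{lm:divide} the \QUIC direction $D_t$ stays in that region and annihilates the $Z$-coordinates, $D_t$ is feasible for and minimizes the smooth constrained subproblem, so by strict convexity $X_{t+1}=X_t+D_t$ is precisely the Newton iterate of~\eqref{eqn:bound-newton}. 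Because $X_t\to X^*$, for all large enough $t$ the iterate $X_t$ lies in the neighborhood on which Theorem~\ref{thm:dunn} yields quadratic convergence of this constrained Newton iteration; hence $\|X_{t+1}-X^*\|\le C\|X_t-X^*\|^2$ for some constant $C$ and all sufficiently large $t$, which is the asserted quadratic rate.

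I expect the main obstacle to be the equivalence claimed above: \QUIC computes its Newton direction by coordinate descent on the genuinely nonsmooth $\ell_1$-regularized quadratic~\eqref{eqn:dj-define}, over a free set that may still contain some indices of $Z$, whereas Theorem~\ref{thm:dunn} applies to a clean smooth constrained Newton step. Bridging this requires invoking Lemma~\ref{lm:divide} not only at $X_t$ but at $X_{t+1}=X_t+D_t$, so as to guarantee that solving the subproblem neither flips a sign nor activates a $Z$-coordinate; only then is the nonsmooth penalty locally indistinguishable from a linear term and the fixed/free partition exactly the restriction to the active manifold on which quadratic convergence holds.
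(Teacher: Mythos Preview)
Your proposal is correct and follows essentially the same route as the paper: both argue that after finitely many iterations (via Lemma~\ref{lm:divide}, Lemma~\ref{lm:fixright}, and Proposition~\ref{lm:alpha_1}) the \QUIC update coincides with a full constrained Newton step for the smooth surrogate~\eqref{eqn:newton_pb}, and then invoke Theorem~\ref{thm:dunn}. Your treatment of the equivalence between the $\ell_1$-subproblem direction and the constrained Newton direction---using that $\lnormg{X_t+D}$ dominates the linear form with equality on the frozen-sign region, together with $(D_t)_Z=0$ from Lemma~\ref{lm:divide} at $t+1$---is in fact more explicit than the paper's, which asserts this identification rather tersely.
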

\begin{proof}
  First, if the index sets $P,N$ and $Z$ (related to the optimal solution)  are given, the optimum of \eqref{eqn:primal} is the same as the optimum of the following constrained minimization problem: 
  \begin{align}
    \min_X \ & \ -\log\det(X)+\tr(SX)+\sum_{(i,j)\in P}\lambda_{ij} X_{ij}
    -\sum_{(i,j)\in N} \lambda_{ij} X_{ij} \notag\\
    \text{s.t. } \ & \ X_{ij}\geq 0 \ \ \forall (i,j)\in
    P, \label{eqn:newton_pb}\\
    & \ X_{ij}\leq 0 \ \ \forall (i,j)\in N, \notag\\
    & \ X_{ij}=0 \ \ \forall (i,j)\in Z.
    \notag
  \end{align}
  In the following, we show that when $t$ is large enough, \QUIC solves the minimization problem described by~\eqref{eqn:newton_pb}.
  \begin{enumerate}
    \item The constraints in \eqref{eqn:newton_pb} are satisfied by \QUIC iterates after a finite number of steps, as shown in Lemma~\ref{lm:divide}. Thus, the $\ell_1$-regularized Gaussian MLE~\eqref{eqn:primal_general} is equivalent to the smooth constrained objective \eqref{eqn:newton_pb}, since the constraints in \eqref{eqn:newton_pb} are satisfied when solving \eqref{eqn:primal_general}.
    \item Since the optimization problem in \eqref{eqn:newton_pb} is smooth, 
      it can be solved using constrained Newton updates as in 
      \eqref{eqn:bound-newton}. The  \QUIC update direction $D_J(X_t)$ is 
      restricted to a set of free variables in $J$. This is exactly equal 
      to the unrestricted Newton update as in \eqref{eqn:bound-newton}, after 
      a finite number of steps, as established by Lemma~\ref{lm:fixright}. In 
      particular, at each iteration the fixed set is contained in $Z$, which 
      is 
      the set which always satisfies $(D_t)_Z=0$ for large enough $t$. 
    \item Moreover, by Lemma~\ref{lm:alpha_1} the step size is $\alpha=1$ when $t$ is 
      large enough.
  \end{enumerate}
   
Therefore our algorithm is equivalent to the constrained Newton method in~\eqref{eqn:bound-newton}, which in turn converges quadratically to the optimal solution of~\eqref{eqn:newton_pb}.  Since the revised problem~\eqref{eqn:newton_pb} and our original problem~\eqref{eqn:primal_general} has the same minimum, we have shown that \QUIC converges quadratically to the  optimum of~\eqref{eqn:primal_general}.
\end{proof}

In the next section, we show that this asymptotic convergence behavior of \QUIC is corroborated empirically as well.

\section{Experimental Results} \label{sec:experiments}

\subsection{Stopping condition for solving the sub-problems}
\label{sec:stopping}
In the convergence analysis of Section~\ref{sec:converge}, we assumed that 
each Newton direction $D_t$ is computed exactly by solving the Lasso 
subproblem~\eqref{eqn:dj-define}. In our implementation we use an iterative 
solver to compute $D_t$, which after a finite set of iterations only solves 
the problem to some accuracy. In the 
first experiment we explore how varying the accuracy to which we compute the 
Newton direction affects overall performance. In Figure~\ref{fig:stopping} we 
plot the total run times for the ER biology dataset 
from~\citep{LL10a} correspond to different numbers of inner iterations 
used in the coordinate descent solver of \QUIC.
\begin{figure}[htp!]
\begin{tabular}{cc}
  \subfigure[\QUIC with varying number of inner iterations (on ER dataset). \label{fig:stop_1} ]{\includegraphics[width=0.49\textwidth]{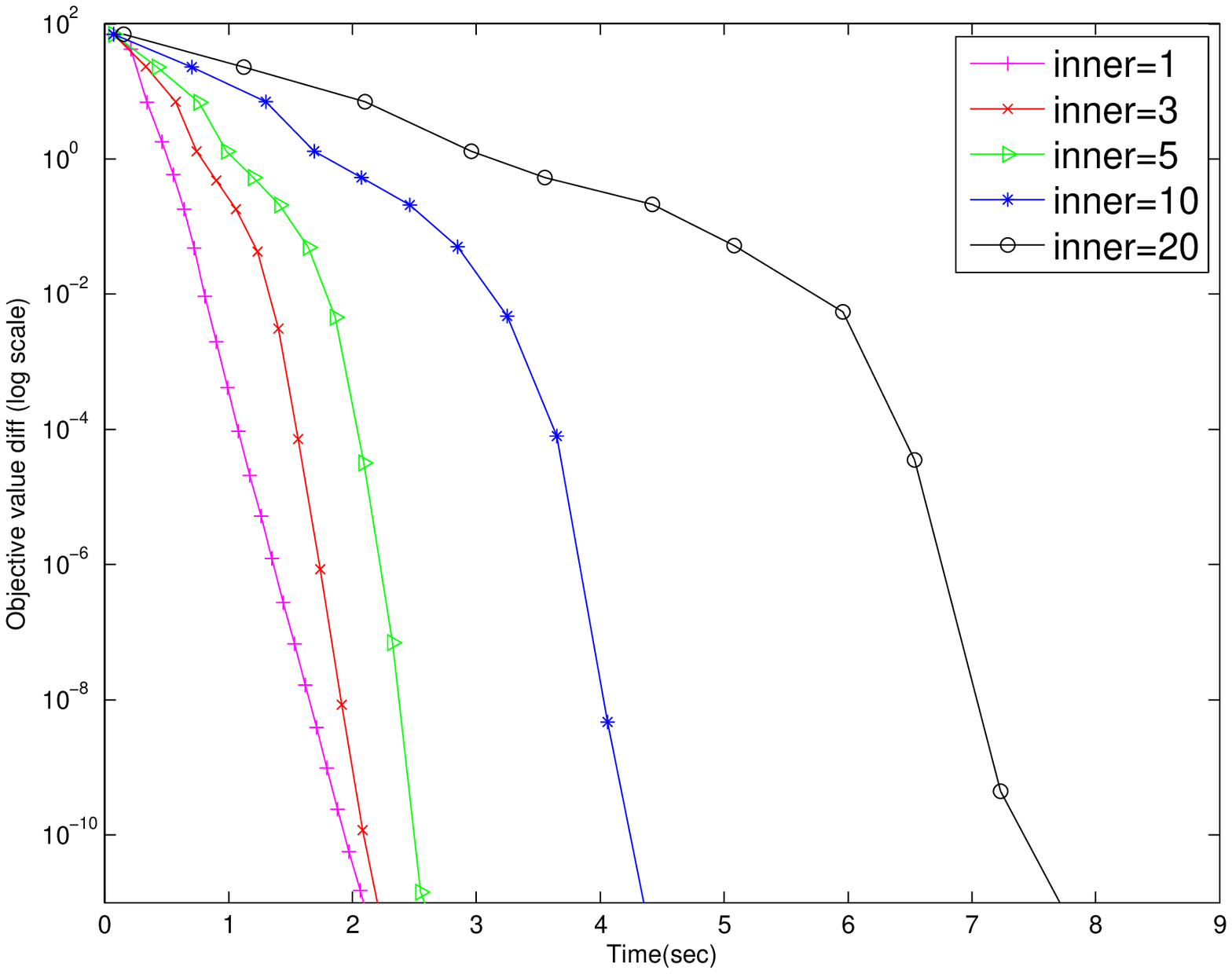}}
&
\subfigure[\QUIC with adaptive stopping condition for inner iterations. \label{fig:stop_2}]{\includegraphics[width=0.49\textwidth]{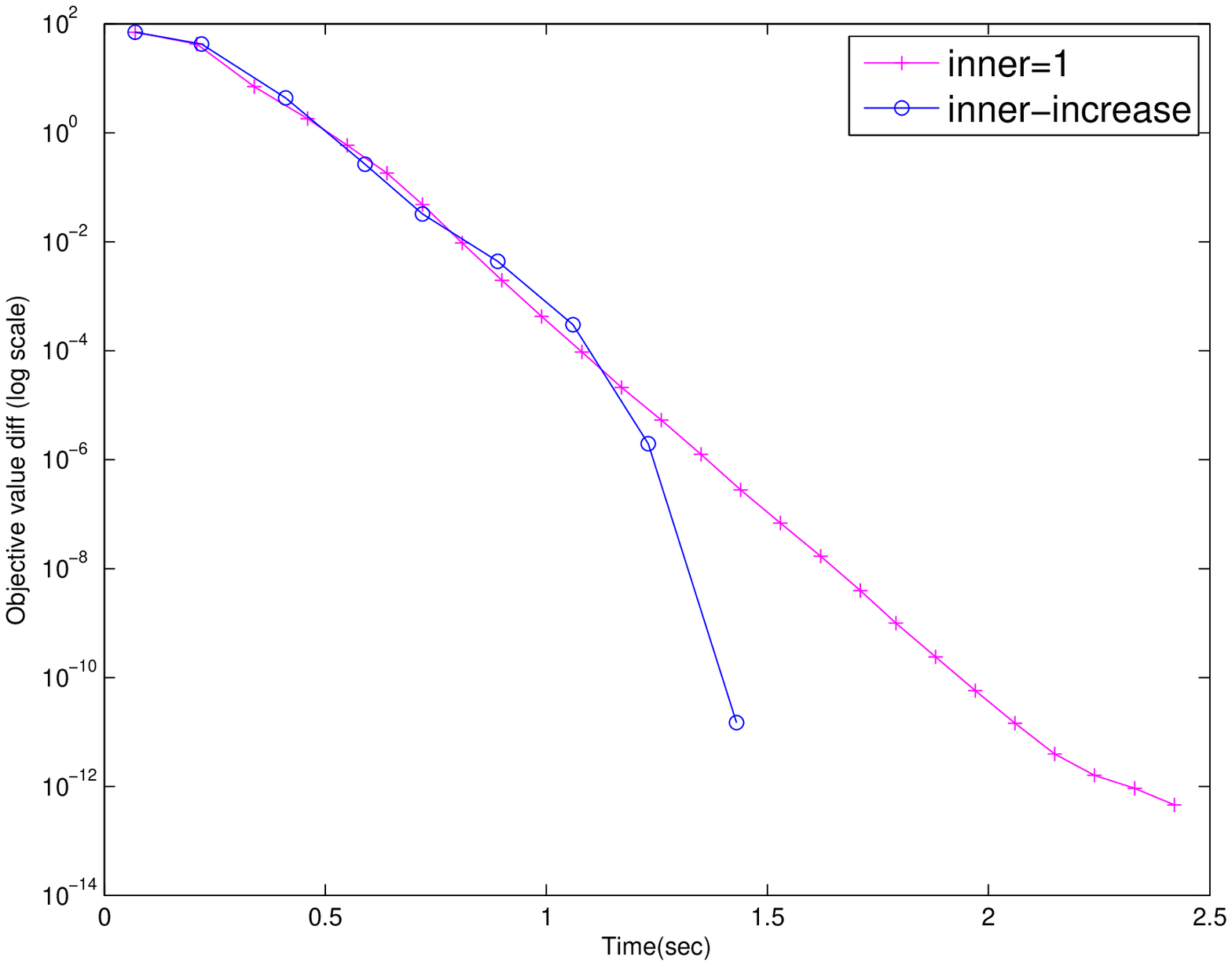}}
\end{tabular}
\caption{The behavior of \QUIC when varying the number of inner iterations. Figure \ref{fig:stop_1} show that \QUIC with one inner iteration converges faster in the beginning but eventually achieves just linear convergence, while \QUIC with 20 inner iterations converges slower in the beginning, but has quadratic convergence.  Figure \ref{fig:stop_2} shows that by adaptively setting the number of iterations in \QUIC, we get the advantages of both cases.}
\label{fig:stopping}
\end{figure}

We can observe that \QUIC with one inner iteration converges faster in the beginning, but eventually achieves just a linear convergence rate, while \QUIC with 20 inner iterations converges more slowly in the beginning, but eventually achieves quadratic convergence. Based on this observation, we propose an adaptive stopping condition: we set the number of coordinate descent steps to be $\alpha t$ for the $t$-th outer iteration, where $\alpha$ is a constant; we use $\alpha = 1/3$ in our experiments. Figure \ref{fig:stop_2} shows that by using this adaptive stopping condition, \QUIC is not only efficient in the beginning, but also achieves quadratic convergence.

\subsection{Comparisons with other methods}

In this section, we compare the performance of \QUIC on both synthetic and real datasets with other state-of-the-art methods. We have implemented \QUIC in C++, and all experiments were executed on 2.83GHz Xeon X5440 machines with 32G RAM and Linux OS.

We include the following algorithms in our comparisons: 
\begin{itemize}
  \item \ALM: the Alternating Linearization Method proposed by \cite{ALM}. We use their MATLAB source code for the experiments. 
  \item \ADMM: another implementation of the alternating linearization method 
    implemented by \cite{ADMM}. The matlab code can be downloaded from \\ 
    \url{http://www.stanford.edu/~boyd/papers/admm/}. We found that the 
    default parameters (which we note are independent of the 
    regularization penalty) yielded slow convergence; we set the augmented 
    Lagrangian parameter to $\rho=50$ and the over-relaxation parameter to 
    $\alpha=1.5$.  These parameters achieved the best speed on the ER dataset.
  \item \glasso: the block coordinate descent method proposed by \cite{Friedman:glasso:2007}. We use the latest version \glasso 1.7 downloaded from \\ \url{http://www-stat.stanford.edu/~tibs/glasso/}.
  \item \PSM: the Projected Subgradient Method proposed by \cite{JD08a}. We use the MATLAB source code available at \\
    \url{http://www.cs.ubc.ca/~schmidtm/Software/PQN.html}. 
  \item \SINCO: the greedy coordinate descent method proposed by \cite{SINCO}. The code can be downloaded from \\ 
    \url{https://projects.coin-or.org/OptiML/browser/trunk/sinco}.
  \item \IPM: An inexact interior point method proposed by \cite{LL10a}. The source code can be downloaded from \\ 
    \url{http://www.math.nus.edu.sg/~mattohkc/Covsel-0.zip}.
  \item \PQN: the projected quasi-Newton method proposed by \cite{Schmidt:PQN:2009}. The source code can be downloaded from \\ \url{http://www.di.ens.fr/~mschmidt/Software/PQN.html}. 
\end{itemize}

In the following, we compare \QUIC and the above state-of-the-art methods on synthetic and real datasets with various settings of $\lambda$.

\subsubsection{Experiments on synthetic datasets}
\label{sec:synthetic}

We first compare the run times of the different methods on synthetic data. We generate the two following types of graph structures for the underlying Gaussian Markov Random Fields:
\begin{itemize}
\item Chain Graphs: The ground truth inverse covariance matrix $\Sigma^{-1}$ is set to be $\Sigma^{-1}_{i,i-1}=-0.5$ and $\Sigma^{-1}_{i,i}=1.25$.
\item Graphs with Random Sparsity Structures: We use the procedure mentioned in Example 1 in \cite{LL10a} to generate inverse covariance matrices with random non-zero patterns.  Specifically, we first generate a sparse matrix $U$ with nonzero elements equal to $\pm 1$, set $\Sigma^{-1}$ to be $U^T U$ and then add a diagonal term to ensure $\Sigma^{-1}$ is positive definite. We control the number of nonzeros in $U$ so that the resulting $\Sigma^{-1}$ has approximately $10p$ nonzero elements.
\end{itemize}
Given the inverse covariance matrix $\Sigma^{-1}$, we draw a limited number, $n = p/2$ i.i.d. samples from the corresponding GMRF distribution, in order to simulate the high-dimensional setting.

Table~\ref{tab:synth-datasets} shows the attributes of the synthetic datasets 
that we used in the timing comparisons. The dimensionality varies from 
$\{1000,4000,10000\}$. For chain graphs, we select $\lambda$ so that the 
solution has (approximately) the correct number of nonzero elements. In order 
to test the performance of the algorithms under different values of $\lambda$, 
for the case of random-structured graphs we considered two $\lambda$ values; 
one of which resulted in the discovery of the correct number of non-zeros and one 
which resulted in five-times thereof.  We measured the accuracy of the graph 
structure recovered by the 
true positive rate (TPR) and false positive rate (FPR) 
defined as
\begin{align} 
  \text{TPR} = \frac{|\{(i,j) \mid (X^*)_{ij}>0 \text{ and } 
  Q_{ij}>0\}|}{|\{(i,j)\mid Q_{ij}>0\}|},
  \text{FPR} &= \frac{|\{(i,j) \mid (X^*)_{ij}>0 \text{ and } Q_{ij}=0\}|}{|\{(i,j)\mid Q_{ij}=0\}|},
\label{eqn:TP-FP}
\end{align}
where $Q$ is the ground truth sparse inverse covariance.
\begin{table*}[tb]
  \centering
  \caption{The parameters and solution properties of the synthetic datasets. 
  $p$ stands for dimension, $\|\Sigma^{-1}\|_0$ indicates the number of 
  nonzeros in ground truth inverse covariance matrix, $\|X^*\|_0$ is the 
  number of nonzeros in the solution. TPR/FPR is the true and false recovery 
  rate, see~\eqref{eqn:TP-FP} \label{tab:synth-datasets}. }
  \begin{tabular}{|c|c|c|c|c|c|c|}
    \hline
    \multicolumn{3}{|c|}{Dataset} & {Parameter} & 
    \multicolumn{3}{|c|}{Solution properties} \\
    \hline
    pattern & $p$ &  $\|\Sigma^{-1}\|_0$ & $\lambda$ & $\|X^*\|_0$ 
    & TP & FP \\
    \hline
    {chain} & {$1000$} &{$2998$} 
    &{$0.4$} &  $3028$ & $1$ & $3\times 10^{-5}$ \\
    \hline
    {chain} &{$4000$} & {$11998$} 
    & {$0.4$} & $11998$ & $1$ & $0$ \\
    \hline
    {chain} &{$10000$} &{$29998$} 
    &{$0.4$} & $29998$ & $1$ & $0$ \\
    \hline
    {random} & \multirow{2}{*}{$1000$} & \multirow{2}{*}{$10758$} 
    & {$0.12$}  & $10414$ & $0.69$ & $4\times 10^{-3}$ \\
    \cline{4-7}
    & & &{$0.075$} & $55830$ & $0.86$ & $0.05$ \\
    \hline
    \multirow{2}{*}{random} & \multirow{2}{*}{$4000$} &  $\multirow{2}{*}{$41112$}$ 
    & {$0.08$} & $41936$ & $0.83$ & $6\times 10^{-3}$ \\
    \cline{4-7}
    & & & {$0.05$} &  $234888$ & $0.97$ & $0.05$ \\
    \hline
    \multirow{2}{*}{random} & \multirow{2}{*}{$10000$} &  $\multirow{2}{*}{$91410$}$ 
    & {$0.08$} & $89652$ & $0.90$ & $4\times 10^{-6}$ \\
    \cline{4-7}
    & & & {$0.04$} & $392786$ & $1$ & $3\times 10^{-3}$ \\
    \hline
  \end{tabular}
\end{table*}

Since \QUIC does not natively compute a dual solution, the duality gap cannot be used as a stopping condition.\footnote{Note, that $W = X^{-1}$ cannot be expected to satisfy the dual constraints requiring $|W_{ij} - S_{ij}| \leq \lambda_{ij}$. One could project $X^{-1}$ in order to enforce the constraints and use the resulting matrix to compute the duality gap. Our implementation provides this computation only if the user requests it.} In practice, we can use the minimum-norm sub-gradient defined in Definition \ref{def:subgradient} as the stopping condition. There is no additional computational cost to this approach because $X^{-1}$ is computed as part of the \QUIC algorithm. In the experiments, we report the time for each algorithm to achieve $\epsilon$-accurate solution defined by $f(X^k)-f(X^*)<\epsilon f(X^*)$.
\begin{table*}[tb]
  \centering
  \caption{Running time comparisons on synthetic datasets. See also Table~\ref{tab:synth-datasets} regarding the dataset properties. We used $*$ to indicate that the run time exceeds 30,000 seconds (8.3 hours).  The results show that \QUIC is overwhelmingly faster than other methods, and is the only one which is able to scale up to solve problem where $p=10000$.   \label{tab:synth-compare}}
{\renewcommand{\tabcolsep}{0.12cm}
  \begin{tabular}{|c|c|c|c|r|r|r|r|r|r|r|r|}
    \hline
    \multicolumn{4}{|c}{Parameters} & \multicolumn{8}{|c|}{Time (in seconds)} \\
    \hline
    pattern & $p$ & $\lambda$ & $\epsilon$ &\QUIC & \ALM & Glasso & \PSM & \IPM & Sinco & PQN & \small ADMM \\
    \hline
    \multirow{2}{*}{chain} & \multirow{2}{*}{$1000$} & \multirow{2}{*}{$0.4$} 
    & $10^{-2}$ & { $\bf <1$} & $19$ & $9$ & $16$ & $86$ & $120$ & $110$ 
    & $62$ \\
     & & & $10^{-6}$  & {\bf 2} & $42$ & $20$ & $35$ & $151$ & $521$ & $210$& $281$\\
    \hline
    \multirow{2}{*}{chain} & \multirow{2}{*}{$4000$} & \multirow{2}{*}{$0.4$} & $10^{-2}$ & {\bf 11} & $922$ & $460$ & $568$ & $3458$ & $5246$ & $672$ & $1028$ \\
    & & & $10^{-6}$ & {\bf 54} & $1734$ & $1371$ & $1258$ & $5754$ & * & $10525$ & $2584$ \\
    \hline
    \multirow{2}{*}{chain} & \multirow{2}{*}{$10000$} & \multirow{2}{*}{$0.4$} & $10^{-2}$ & {\bf 217} & $13820$ & $10250$ & $8450$ & * & * & * & * \\
    & & & $10^{-6}$ & {\bf 987} & $28190$ & * & $19251$ & * & * & * & * \\
    \hline
    \multirow{4}{*}{random} & \multirow{4}{*}{$1000$} & \multirow{2}{*}{$0.12$} 
    & $10^{-2}$ & {$ \bf <1$} & $42$ & $7$ & $20$ & $72$ & $61$ & $33$& $35$ \\
    & & & $10^{-6}$ & {\bf 1} & $28250$ & $15$ & $60$ & $117$ & $683$ & $158$ & $252$ \\
    \cline{3-12}
    & & \multirow{2}{*}{$0.075$} & $10^{-2}$ & {\bf 1} & $66$ & $14$ & $24$ & $78$ & $576$ & $15$ & $56$ \\
    & & & $10^{-6}$ & {\bf 7} & *  & $43$ & $92$ & $146$ & $4449$& $83$ & *\\
    \hline
    \multirow{4}{*}{random} & \multirow{4}{*}{$4000$} & \multirow{2}{*}{$0.08$} & $10^{-2}$ & {\bf 23} & $1429$ & $864$ & $1479$ & $4928$ & $7375$ & $2052$& $1025$ \\
    & & & $10^{-6}$ & {\bf 160} & * & $1743$ & $4232$ & $8097$ & * & $4387$ & * \\
    \cline{3-12}
    & & \multirow{2}{*}{$0.05$} & $10^{-2}$ & {\bf 66} & * & $2514$ & $2963$ & $5621$ & * & $2746$ & * \\
    & & & $10^{-6}$ & {\bf 479} & *  & $5712$ & $9541$ & $13650$ & * & $8718$ & *\\ 
    \hline
    \multirow{4}{*}{random} & \multirow{4}{*}{$10000$} & \multirow{2}{*}{$0.08$} & $10^{-2}$ & {\bf 338} & $26270$ & $14296$ & * & * & * & * & *\\
    & & & $10^{-6}$ & {\bf 1125} & * & * & * & * & * & *& *\\
    \cline{3-12}
    & & \multirow{2}{*}{$0.04$} & $10^{-2}$ & {\bf 804} & * & * & * & * &* & * & *\\
    & & & $10^{-6}$ & {\bf 2951} & *  & * & * & * &  * &  * &* \\
    \hline
  \end{tabular}
}
\end{table*}

\begin{figure}[t!]
\begin{tabular}{cc}
\subfigure[Objective value versus time on chain1000
]{\includegraphics[width=0.49\textwidth]{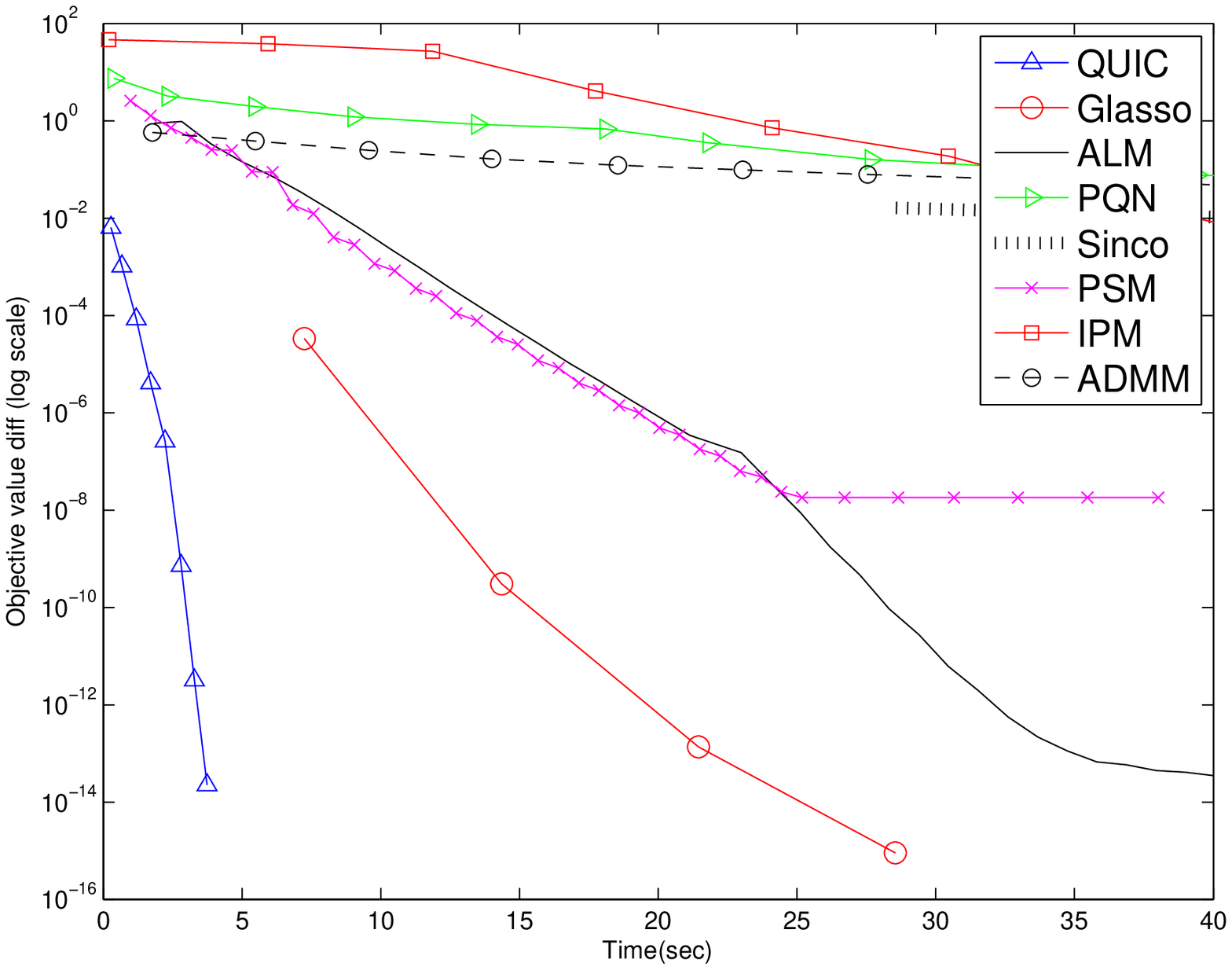}}
&
\subfigure[Objective value versus time on random1000
]{\includegraphics[width=0.49\textwidth]{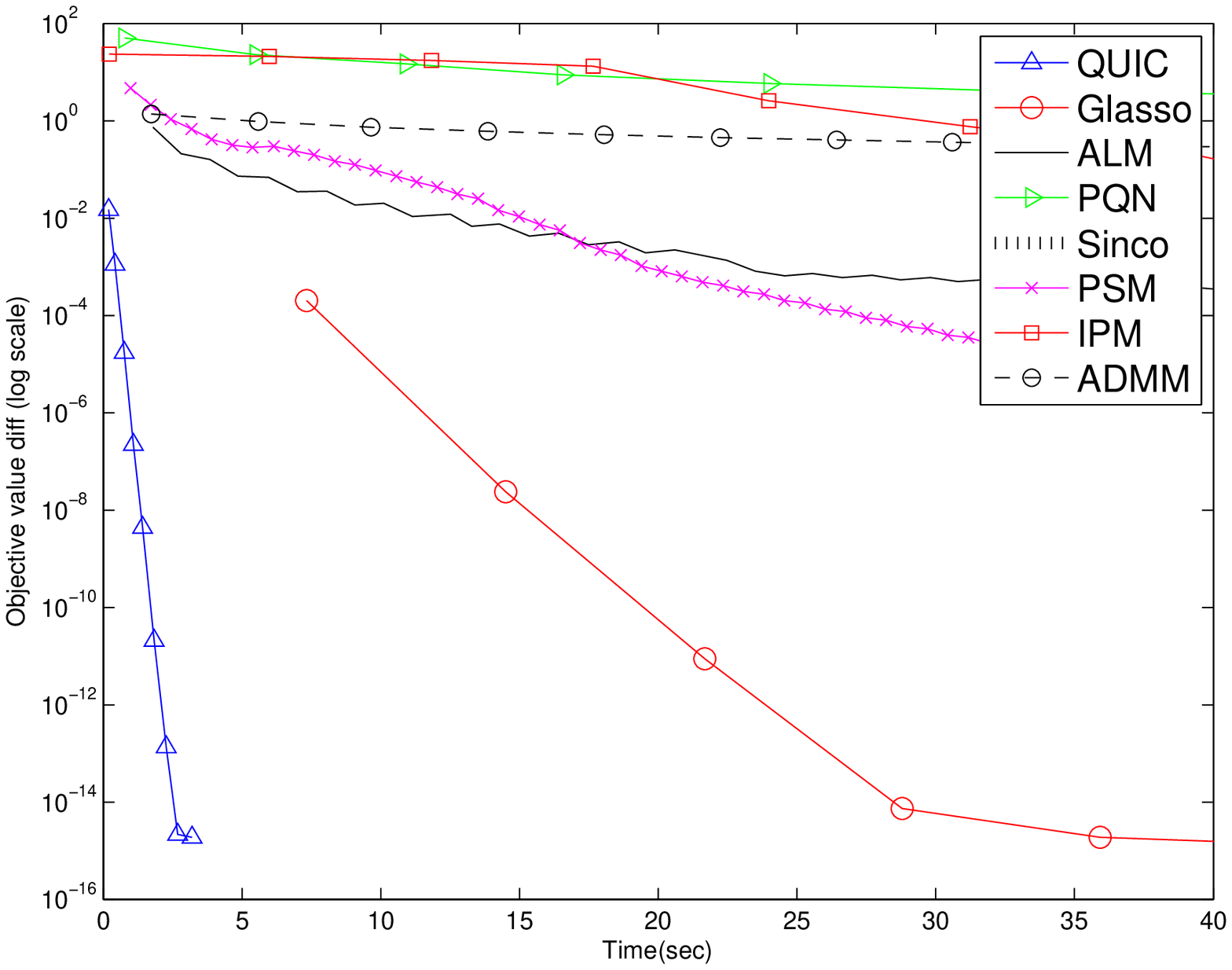}}
\\ 
\subfigure[True positive rate versus time on chain1000 
]{\includegraphics[width=0.49\textwidth]{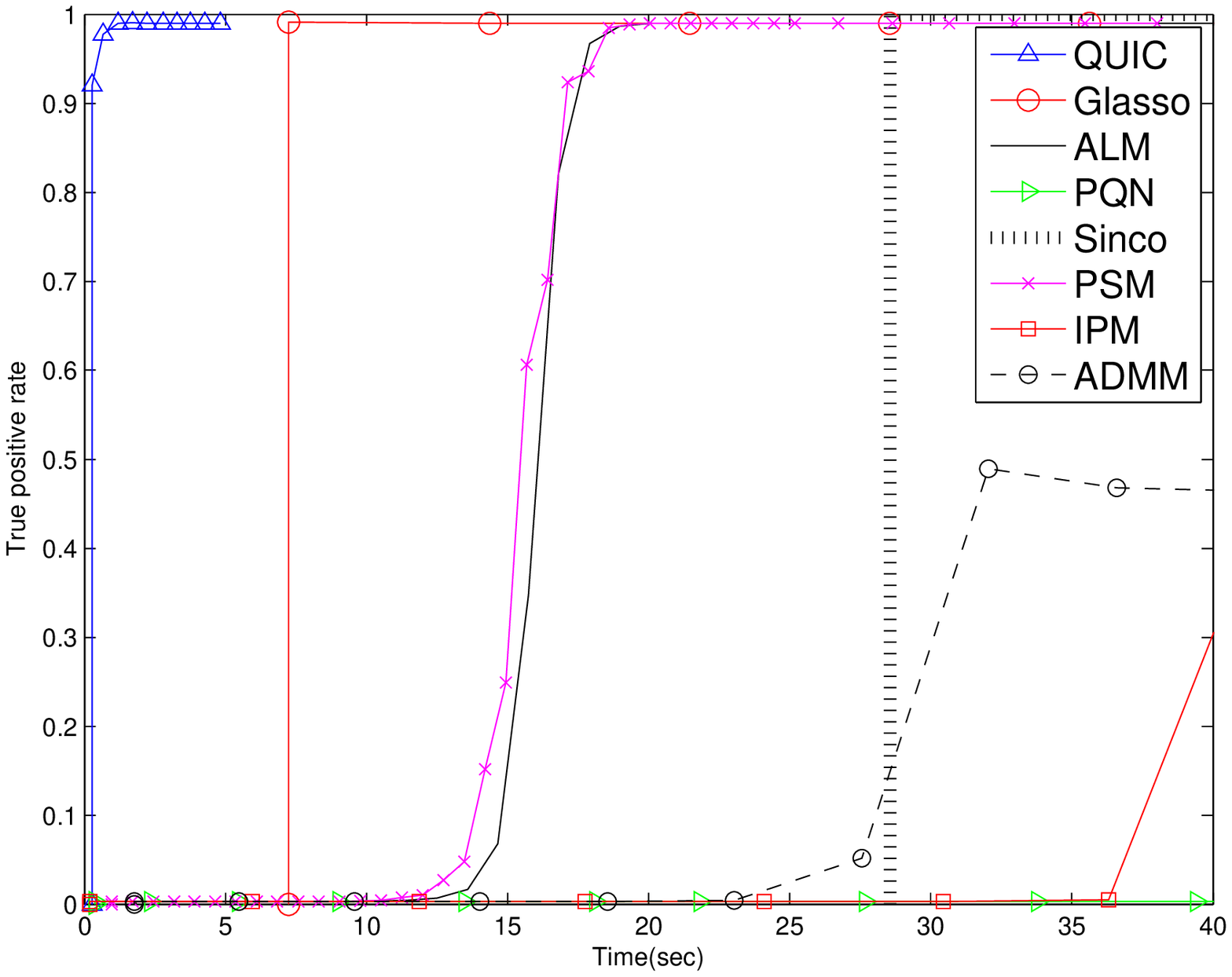}}
& 
\subfigure[True positive rate versus time on random1000 
]{\includegraphics[width=0.49\textwidth]{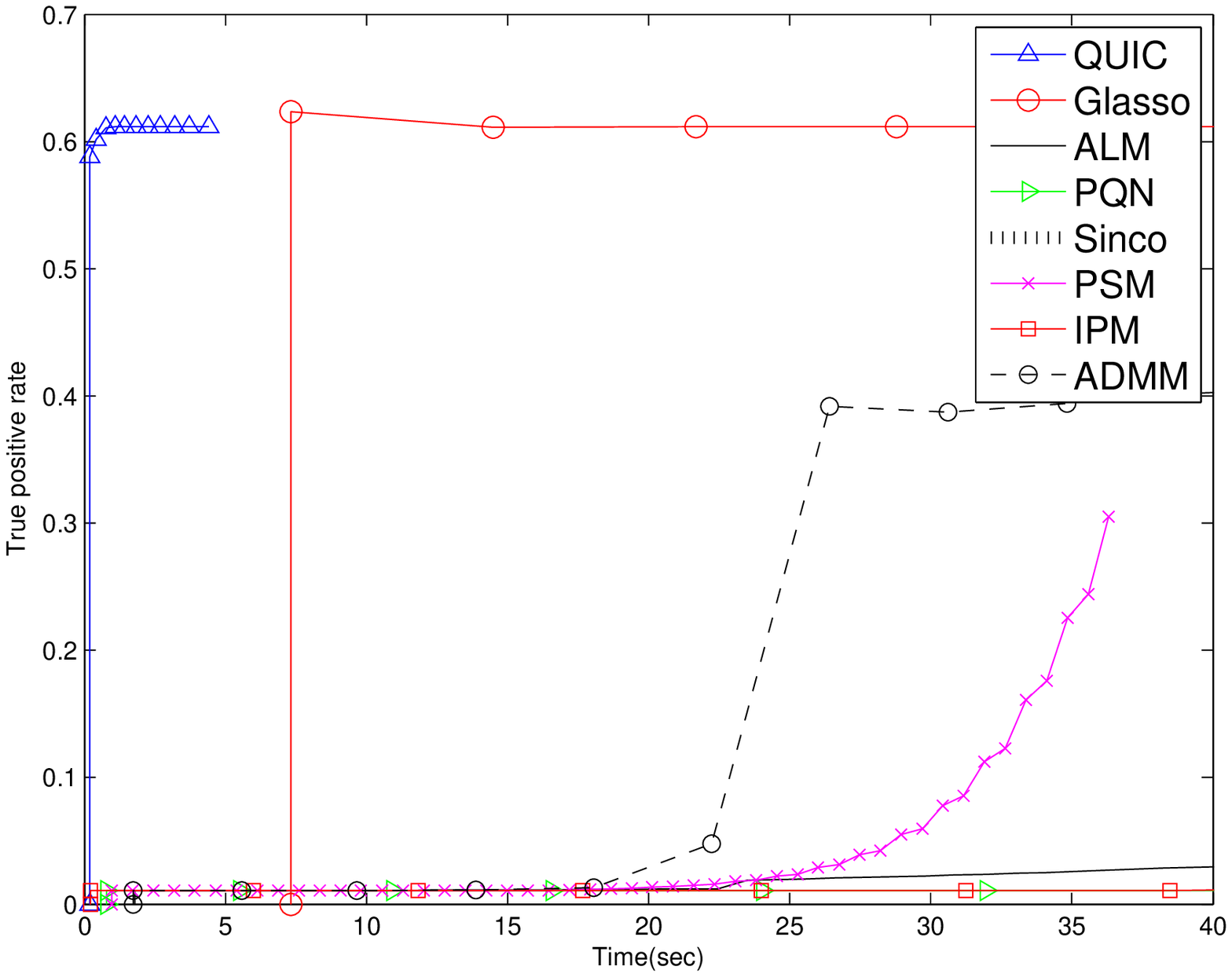}}
\\
\subfigure[False positive rate versus time on chain1000
]{\includegraphics[width=0.49\textwidth]{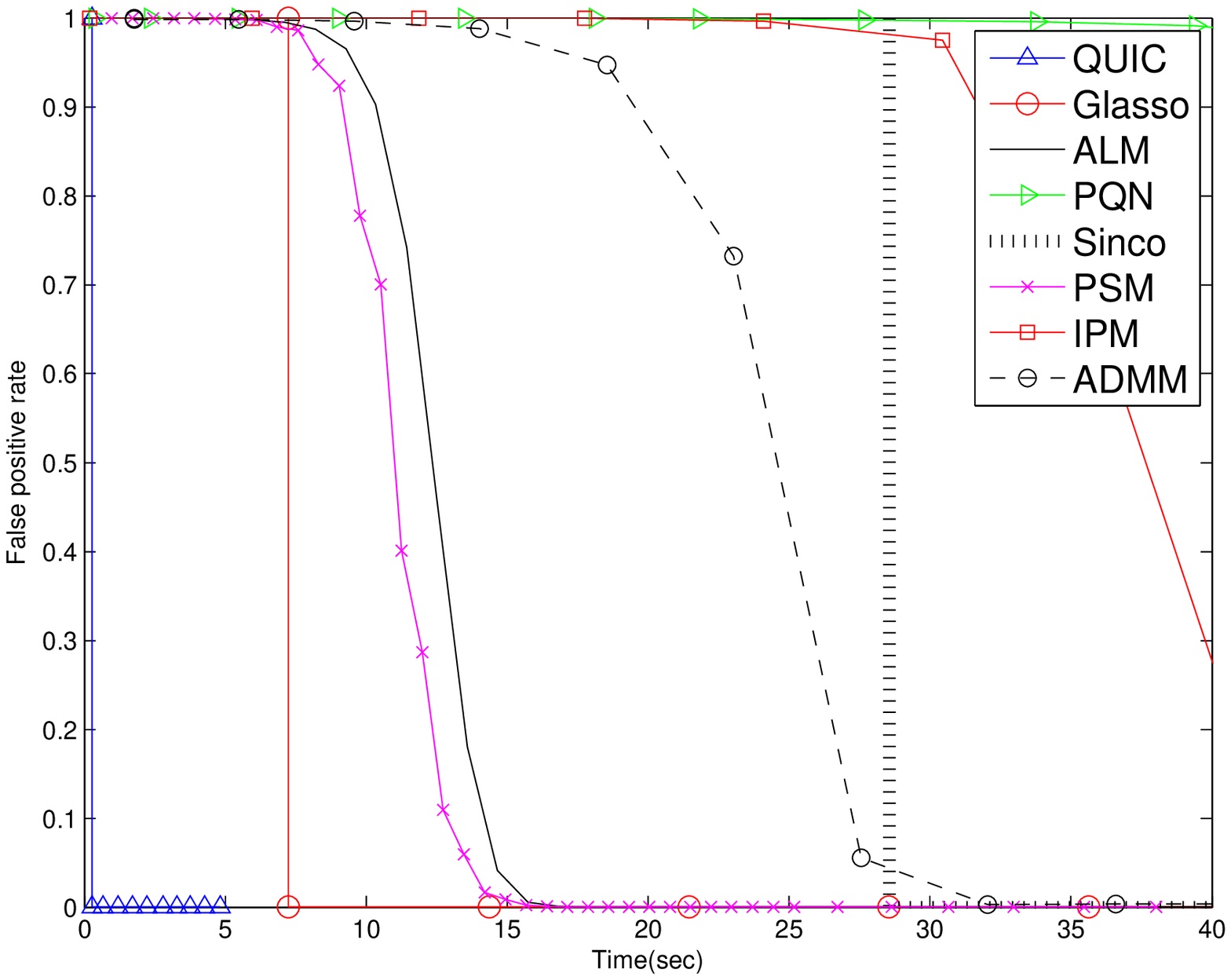}}
& 
\subfigure[False positive rate versus time on random1000
]{\includegraphics[width=0.49\textwidth]{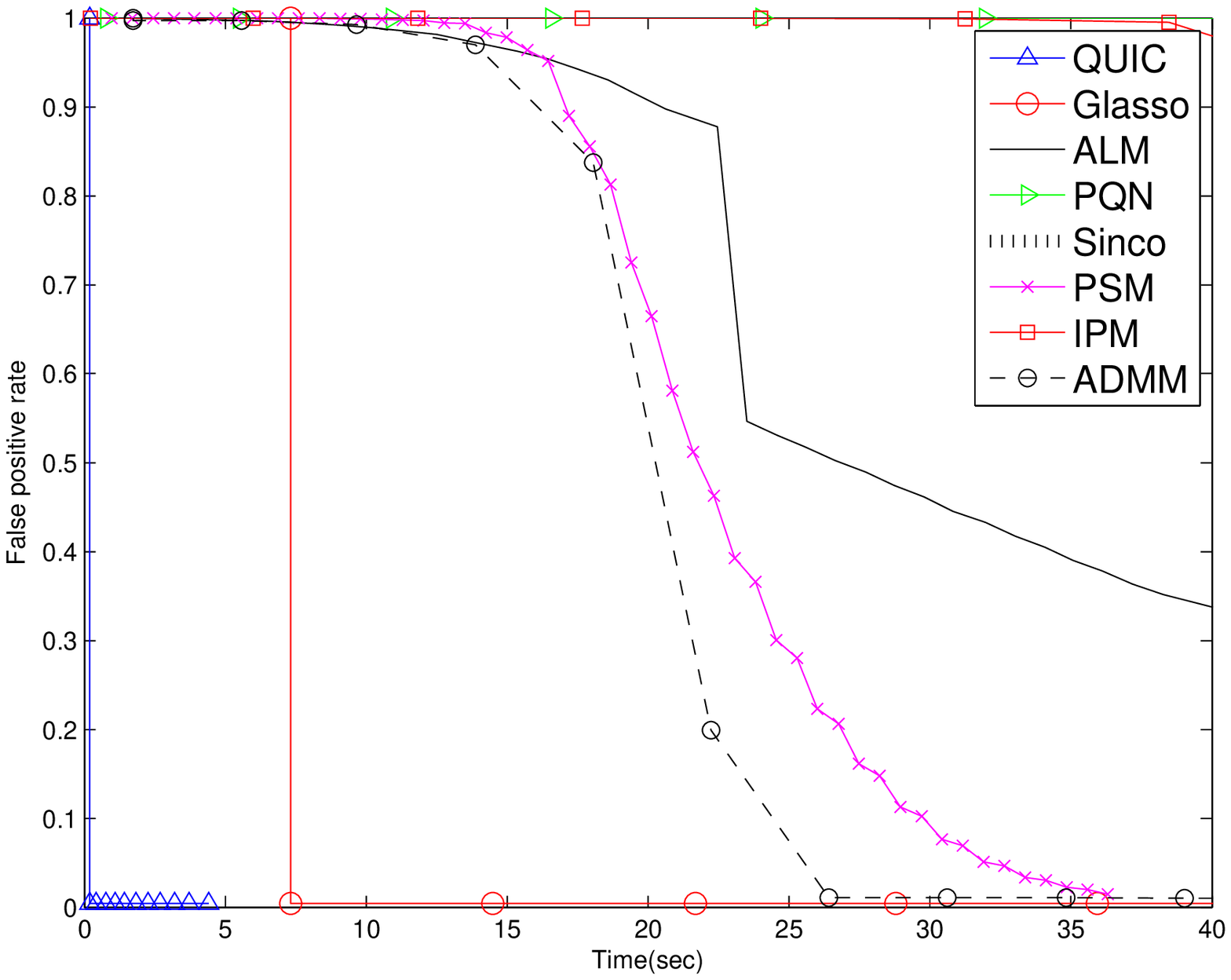}}
\end{tabular}
\caption{Comparison of algorithms on two synthetic dataset: chain1000 and random1000. We can see that \QUIC achieves a solution with better objective function value as well as true positive rate and false positive rate in both datasets.}
\label{fig:tpfp}
\end{figure}

Table \ref{tab:synth-compare} shows the results for $\epsilon=10^{-2}$ and $10^{-6}$, where $\epsilon=10^{-2}$ tests the ability for an algorithm to get a good initial guess (the nonzero structure), and $\epsilon=10^{-6}$ tests whether an algorithm can achieve an accurate solution. Table~\ref{tab:synth-compare} shows that \QUIC is consistently and overwhelmingly faster than other methods, both initially with $\epsilon=10^{-2}$, and at $\epsilon = 10^{-6}$.  Moreover, for the $p=10000$ random pattern, there are $p^2=$ 100 million variables and the selection of fixed/free sets helps \QUIC to focus only on very small subset of them. We converge to the solution in about 15 minutes, while other methods fail to even have an initial guess within 8 hours.

In some applications, researchers are primarily interested in the obtained graphical structure represented by the solution. Therefore, in addition to the objective function value, we further compare the true positive rate and false positive rate of the obtained nonzero pattern in $X_t$ by each algorithm.  In Figure \ref{fig:tpfp}, we use two synthetic datasets, chain1000 and random1000, as examples. For each algorithm, we plot the objective function value, true positive rate, and false positive rate of the iterates $X_t$ versus run time. For the methods that solve the dual problem, the sparse inverse covariance matrix $X_t=W_t^{-1}$ is usually dense, so we consider elements with absolute value larger than $10^{-6}$ as nonzero elements.  We can see that \QUIC not only obtains lower objective function value efficiently, but also recover the ground truth structure of GMRF faster than other methods.

\subsubsection{Experiments on real datasets}
We use the real world biology datasets preprocessed by \cite{LL10a} to compare 
the performance of our method with other state-of-the-art methods. In the 
first set of experiments, we set the regularization parameter $\lambda$ to be 
$0.5$, which achieves reasonable sparsity for the following datasets: Estrogen 
($p=692$), Arabidopsis ($p=834$), Leukemia ($p=1,225$), Hereditary ($p=1,869$).  
In Figure \ref{fig:real} we plot the relative error $(f(X_t)-f(X^*))/f(X^*)$ 
(on a log scale) against time in seconds.  We can observe from Figure 
\ref{fig:real} that under this setting -- large $\lambda$ and sparse solution 
-- \QUIC can be seen to achieve super-linear convergence while other methods 
exhibit at most a linear convergence.  Overall, we see that \QUIC can be five 
times faster than other methods, and can be expected to be even faster if a  
higher accuracy is desired.

\begin{figure}[htp!]
\begin{tabular}{cc}
  \subfigure[Time taken on ER dataset, $p=692$, $\frac{\|\Theta^*\|_0}{p^2}=0.0222$
  \label{fig:real_ER}]{\includegraphics[width=0.49\textwidth]{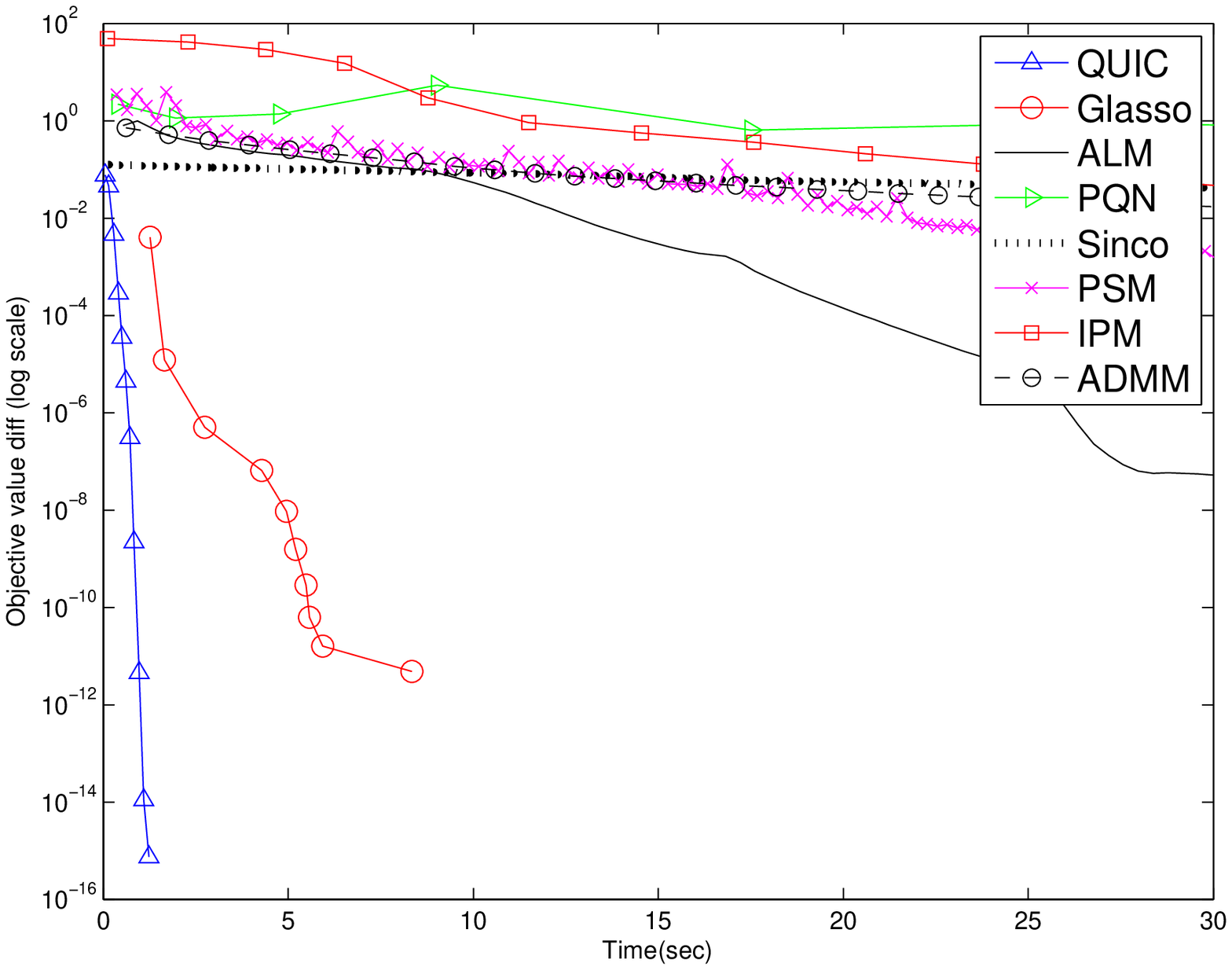}}
&
\subfigure[Time taken on Arabidopsis dataset, $p=834$, $\frac{\|\Theta^*\|_0}{p^2}=0.0296$]{\includegraphics[width=0.49\textwidth]{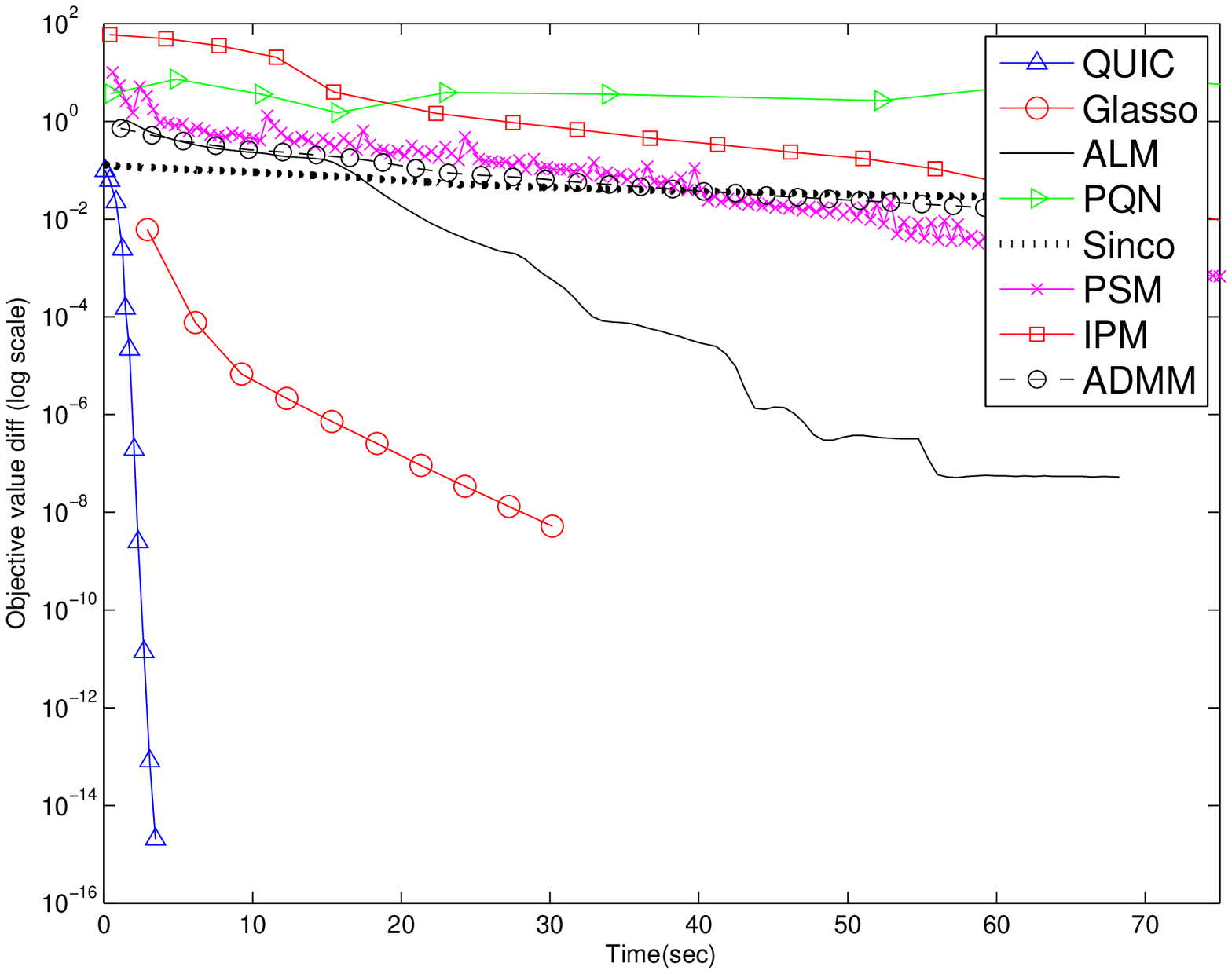}}
\\
\subfigure[Time taken on Leukemia dataset, $p=1,255$, 
$\frac{\|\Theta^*\|_0}{p^2}=0.0221$
]{\includegraphics[width=0.49\textwidth]{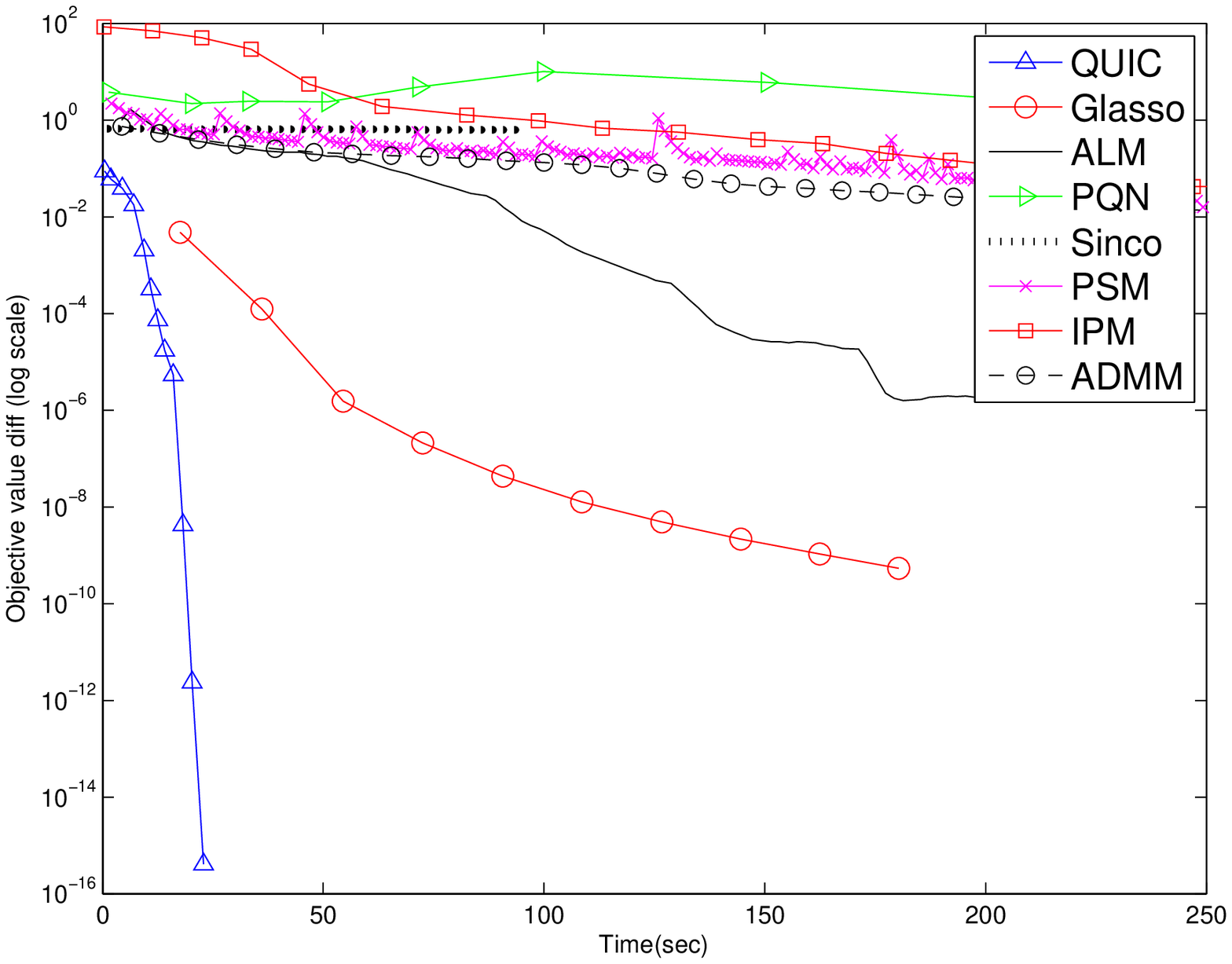}}
&
\subfigure[Time taken on hereditarybc dataset, $p=1,869$, 
$\frac{\|\Theta^*\|_0}{p^2}=0.0198$]{\includegraphics[width=0.49\textwidth]{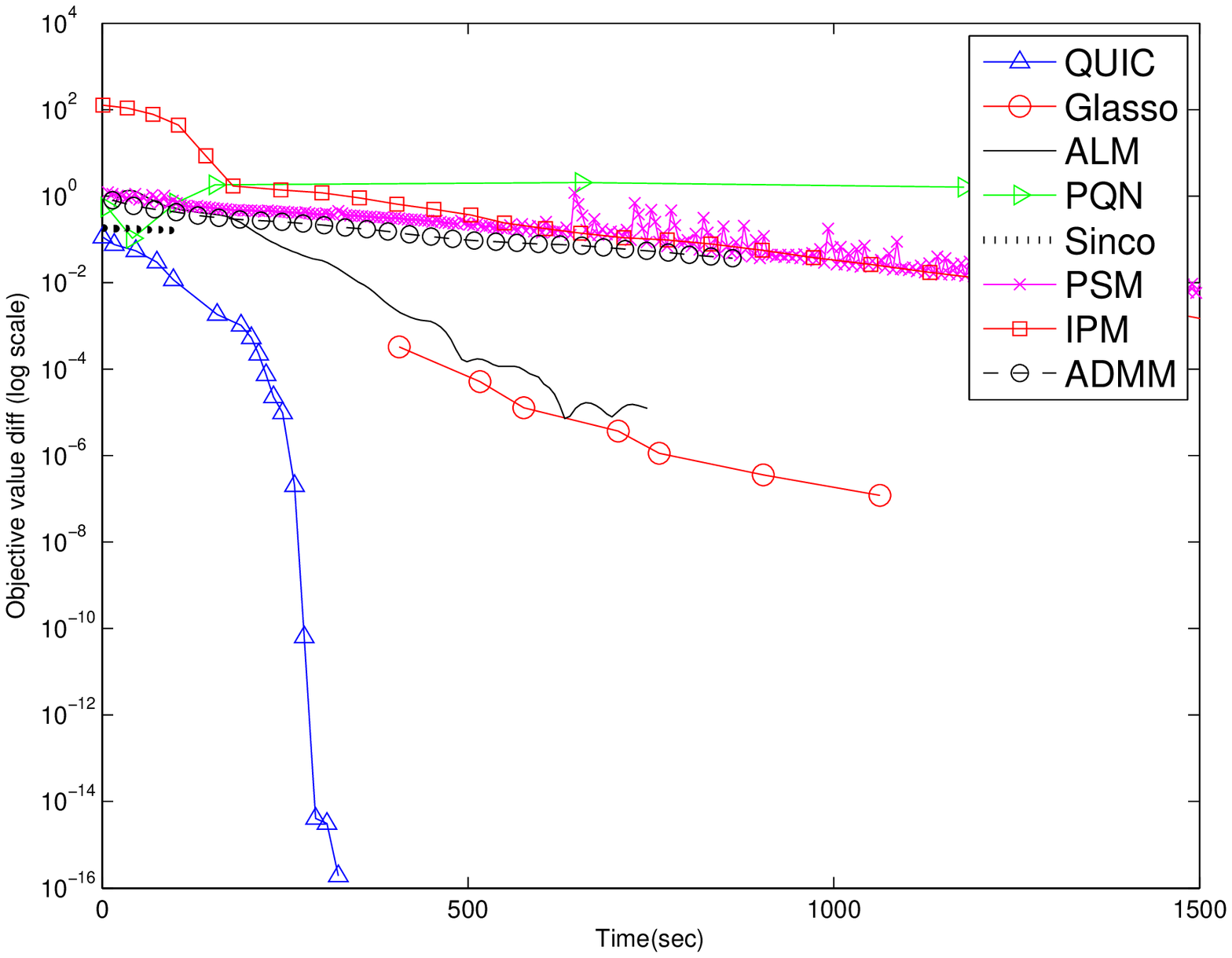}} 
\end{tabular}
\caption{Comparison of algorithms on real datasets with $\lambda=0.5$. The results show QUIC
converges faster than other methods.  
}
\label{fig:real}
\end{figure}

In the second set of experiments, we compare the algorithms under different 
values of the regularization parameter $\lambda$ on the ER dataset. In Figure 
\ref{fig:real_ER} we show the results for $\lambda=0.5$. We then 
decrease $\lambda$ to $0.1$, $0.05$, $0.01$ using the same datasets and 
show the results in Figure \ref{fig:real_lambda}.  A smaller $\lambda$ yields 
a  
denser solution, and we list the density of the convergence point $X^*$ in 
Figure \ref{fig:real_lambda}. From Figure \ref{fig:real_lambda} we can see 
that \QUIC is the most efficient method when $\lambda$ is large (solution is 
sparse), but \IPM and \PSM outperforms \QUIC when $\lambda$ is small (solution 
is dense). However, those cases are usually not useful in practice because 
when solving the $\ell_1$-regularized MLE problem one usually wants a sparse 
graphical structure for the GMRF. The main reason that \QUIC is very efficient 
under large $\lambda$ is that with {\em fixed}/{\em free} set selection, the 
coordinate descent method can focus on a small portion of variables, while in 
\PSM and \IPM the whole matrix is considered at each iteration.

To further demonstrate the power of {\em fixed}/{\em free} set selection, we use Hereditarybc dataset as an example.  In Figure \ref{fig:freeset}, we plot the size of the free set versus iterations for Hereditarybc dataset. Starting from a total of $1869^2=3,493,161$ variables, the size of the free set progressively drops, in fact to less than $120,000$ in the very first iteration. We can see the super-linear convergence of \QUIC even more clearly when we plot it against the number of iterations.

\begin{figure}[ht]
\centering
\includegraphics[width=0.7\textwidth]{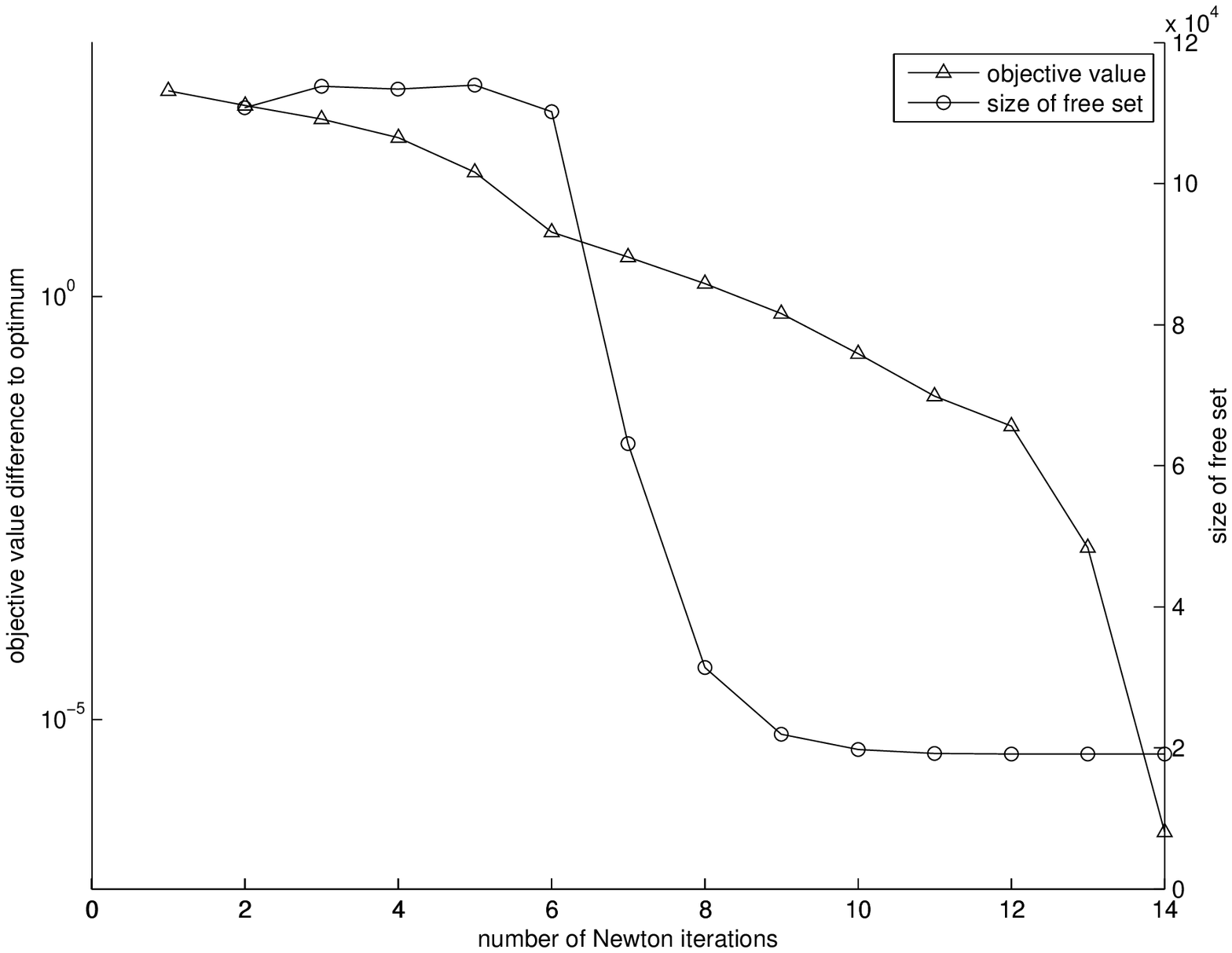}
  \caption{Size of free sets and objective value versus iterations (Hereditarybc dataset). There are total of $3,493,161$ variables, but the size of the free set reduces to less than $120,000$ in one iteration, becoming about $20,000$ at the end. }
  \label{fig:freeset}
\end{figure}
\begin{figure}[htp!]
\begin{tabular}{cc}
\subfigure[Time taken on ER dataset, $\lambda=0.1$, $\frac{\|\Theta^*\|_0}{p^2}=0.0724$
]{\includegraphics[width=0.49\textwidth]{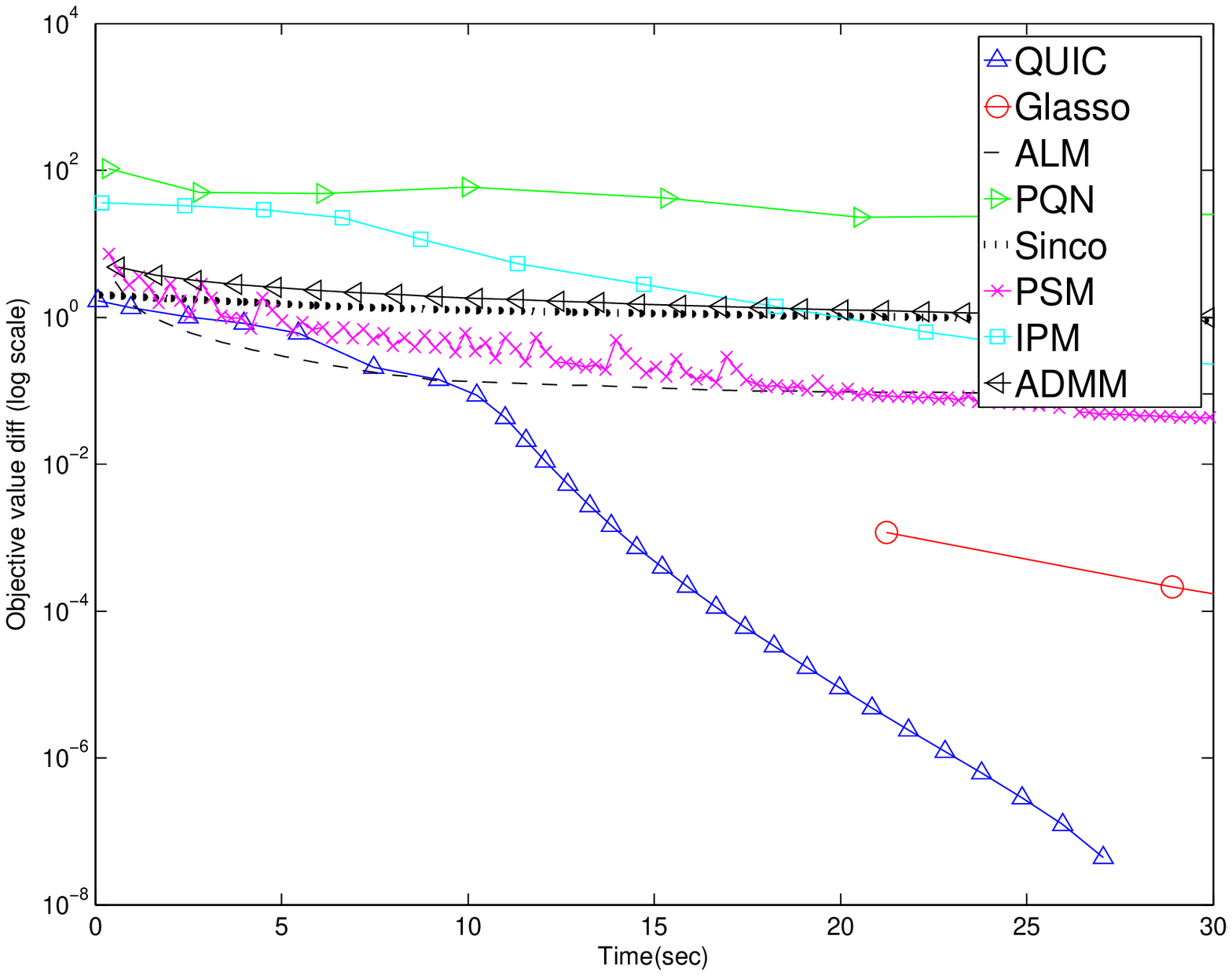}}
&
\subfigure[Time taken on ER dataset, $\lambda=0.05$, $\frac{\|\Theta^*\|_0}{p^2}=0.125$  
]{\includegraphics[width=0.49\textwidth]{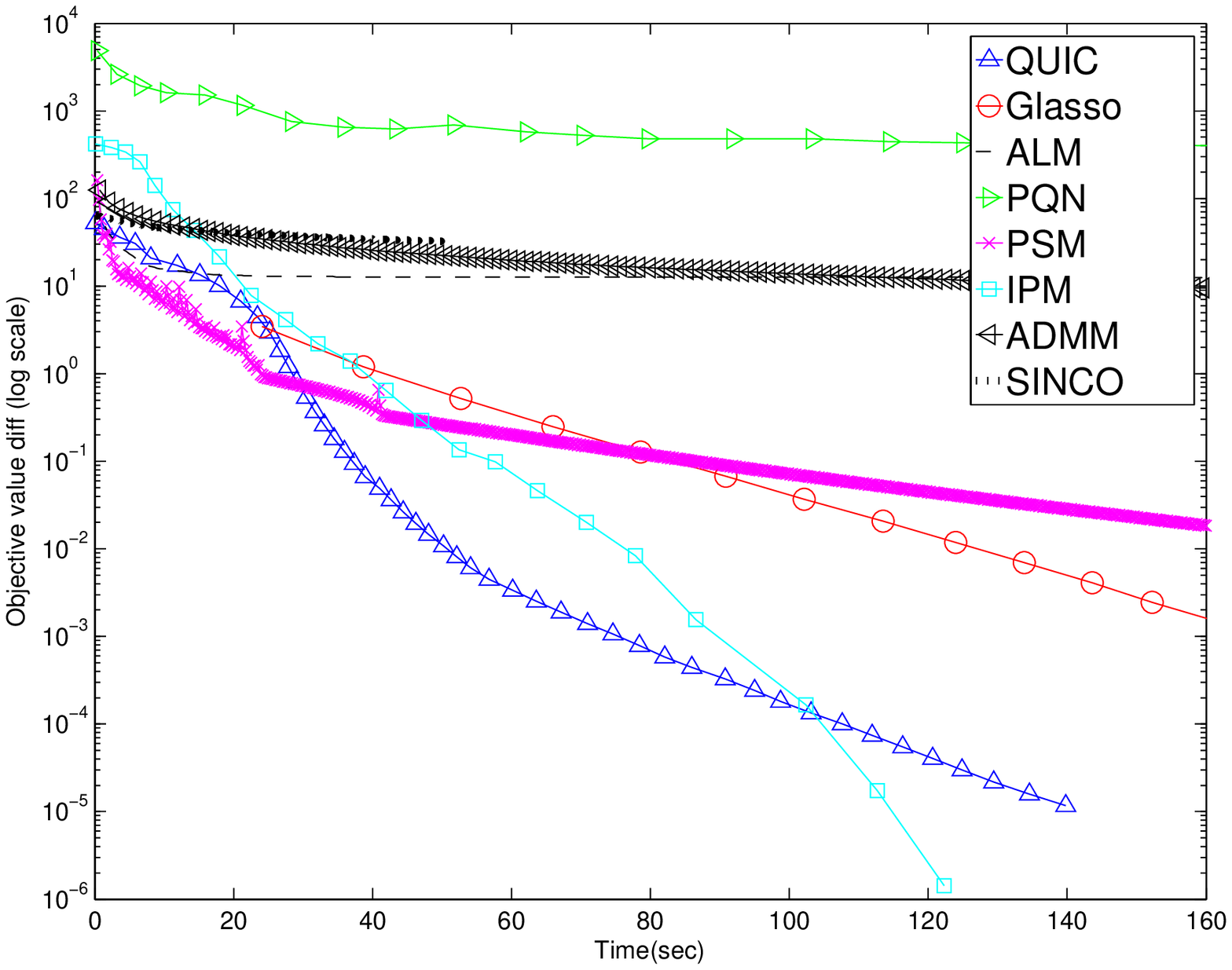}}
\\
  \subfigure[Time taken on ER dataset, $\lambda=0.01$, $\frac{\|\Theta^*\|_0}{p^2}=0.322$
]{\includegraphics[width=0.49\textwidth]{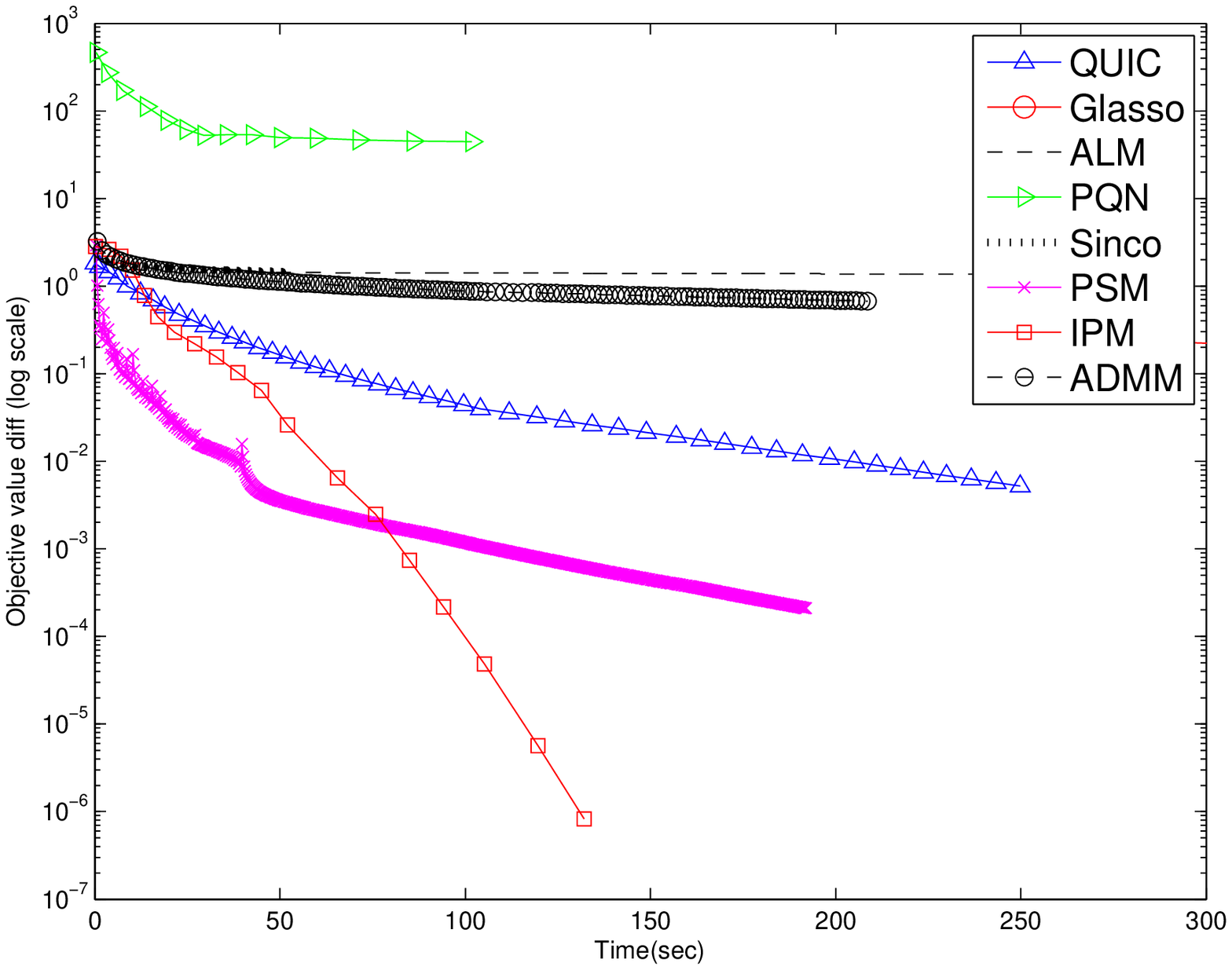}}
&
\subfigure[Time taken on ER dataset, $\lambda=0.005$, $\frac{\|\Theta^*\|_0}{p^2}=0.487$
]{\includegraphics[width=0.49\textwidth]{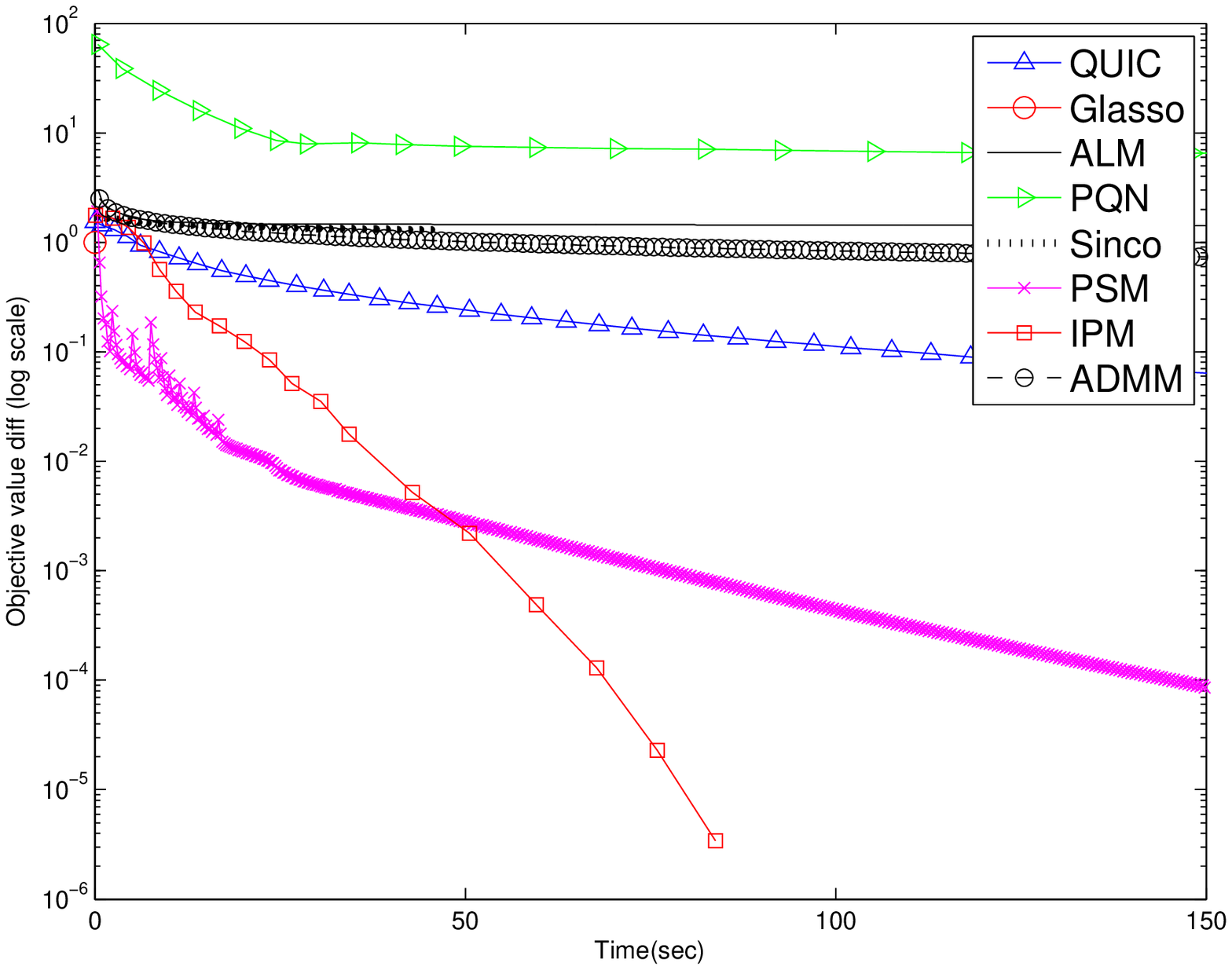}} 
\end{tabular}
\caption{Comparison of algorithms on ER dataset ($p=692$) under different $\lambda$. The results show \QUIC converges faster for larger $\lambda$ where solutions are sparse, while \IPM and \PSM are faster for smaller $\lambda$ which produces denser solutions.}
\label{fig:real_lambda}
\end{figure}

\subsection{Block-diagonal structure}

As discussed earlier, \cite{hastie:2012,friedman:2011} showed that when the 
thresholded covariance matrix $E=\max(|S|-\lambda, 0)$ is block-diagonal, then 
the problem can be naturally decomposed into sub-problems. This observation 
has been implemented in the latest version of \glasso.  In the end of Section 
\ref{sec:quadratic-solver}, we discuss that the {\em fixed}/{\em free} set 
selection can automatically identify the block-diagonal structure of the 
thresholded matrix, and thus \QUIC can benefit from block-diagonal structure 
even when we do not explicitly decompose the matrix.  In the following experiment 
we will show that with input sample covariance $S$ with block-diagonal 
structure represented by $E$ (see Section~\ref{sec:block-diagonal}), \QUIC 
still outperforms \glasso. Moreover, we will show that when some off-diagonal 
elements are added into the problem, while \QUIC is still efficient because of 
its {\em fixed}/{\em free} set selection, \glasso on the other hand suddenly 
becomes very slow.

We generate synthetic data with block-diagonal structure as follows.  We generate a sparse $150\times 150$ inverse covariance matrix $\bar{\Theta}$ as discussed in Section~\ref{sec:synthetic}, and then replicate $\bar{\Theta}$ eight times on the diagonal blocks to form a $1200\times 1200$ block-diagonal matrix. Using this inverse covariance matrix to generate samples, we compare the following methods:
\begin{itemize}
  \item \QUIC: our proposed algorithm.
  \item \glasso: In the latest version of \glasso, the matrix is first decomposed into connected components based on the thresholded covariance matrix $\max(|S|-\lambda)$, and then each sub-problem is solved individually.
  \end{itemize}

We then test the two algorithms for regularization parameter $\lambda$ taking 
values from the set $\{0.017, \dots, 0.011\}$. When $\lambda=0.017$, the thresholded 
covariance matrix $E$ has eight blocks, while when $\lambda=0.011$ the block 
structure reduces to a single block. For each single $\lambda$ trial, we 
compare the time taken by \QUIC and \glasso to achieve 
$(f(X_t)-f(X^*))/f(X^*)<10^{-5}$.  Figure \ref{fig:cluster_exp} shows the 
experimental results.  We can see that both methods are very fast for the case 
where  
the problem can be decomposed into 8 sub-problems (large $\lambda$); however, 
when we slightly increase $\lambda$ so that there is only 1 connected 
component, \QUIC becomes much faster than \glasso.  This is because even for 
the non-decomposable case,
\QUIC can still keep most of the very sparse off-diagonal blocks in the fixed set to speedup the process, 
while \glasso cannot benefit from this sparse structure.

\begin{figure}[ht]
\centering
\includegraphics[width=0.7\textwidth]{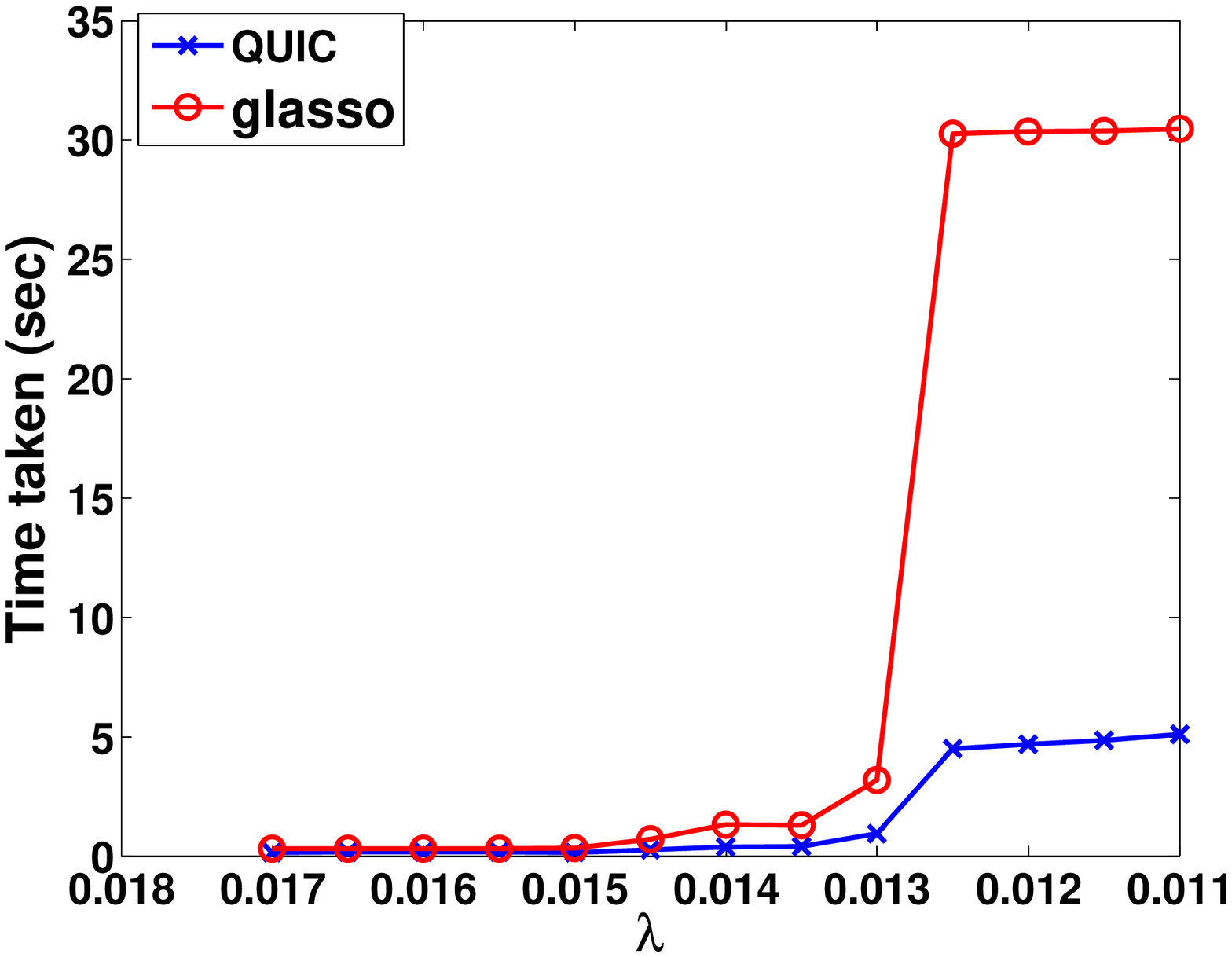}
  \caption{ In this table, we show the performance of \QUIC and \glasso for a sparse synthetic data with clustered structure. Using the same input covariance matrix $S$, we test the time for each algorithm to  achieve $(f(X_t)-f(X^*))/f(X^*)<10^{-5}$ under various values of $\lambda$. When $\lambda=0.017$, the problem can be decomposed into 8 sub-problems, while when $\lambda=0.011$ there is only one component. We can see that for small values of $\lambda$, \QUIC's approach of {\em free}/{\em fixed} set selection is able to exploit the sparse structure of the solution, while \glasso's training time increases drastically.}
  \label{fig:cluster_exp}
\end{figure}

\section*{Acknowledgements}
This research was supported by NSF grants IIS-1018426 and CCF-0728879. ISD acknowledges support from the Moncrief Grand Challenge Award. We would like to thank Kim-Chuan Toh for providing data sets and the \IPM code as well as Katya Scheinberg and Shiqian Ma for providing the \ALM implementation.

\bibliographystyle{plainnat}
\bibliography{QUIC-arxiv}
\end{document}